
\documentclass[11pt]{article}

\usepackage{microtype,fullpage}
\usepackage{natbib}
\usepackage{graphicx}
\usepackage{subfigure}
\usepackage[table]{xcolor}
\usepackage{multirow}
\usepackage{pifont}
\usepackage[font=small]{caption}
\usepackage{placeins}
\usepackage{floatpag}
\usepackage{etoolbox}
\usepackage{changepage}
\RequirePackage{algorithm}
\RequirePackage{algorithmic}
\usepackage{booktabs} 
\usepackage{amsmath,amssymb,amsthm,amsfonts,enumitem,color}
\usepackage{hyperref}


%
%

\theoremstyle{plain}
\newtheorem{theorem}{Theorem}[section]
\newtheorem{lemma}[theorem]{Lemma}

\newtheorem{assumption}{Assumption}

\title{Instabilities of Offline RL with Pre-Trained Neural Representation}

\newcommand{\expect}{\mathbb{E}}

\newcommand{\states}{\mathcal{S}}
\newcommand{\trans}{P}
\newcommand{\actions}{\mathcal{A}}
\newcommand{\mdp}{M}

\date{}
\author{Ruosong Wang \\ Carnegie Mellon University \\ \texttt{ruosongw@andrew.cmu.edu}
\and Yifan Wu \\ Carnegie Mellon University \\
\texttt{yw4@andrew.cmu.edu}
\vspace*{0.1cm}
\and Ruslan Salakhutdinov   \\ Carnegie Mellon University \\ \texttt{rsalakhu@cs.cmu.edu}
\and Sham M. Kakade \\ University of Washington, Seattle\\ \& Microsoft Research \\ \texttt{sham@cs.washington.edu}}
\begin{document}

\maketitle
\begin{abstract}

In offline reinforcement learning (RL), we seek to utilize offline data to evaluate (or learn) policies in scenarios where the data are collected from a distribution that substantially differs from that of the target policy to be evaluated. Recent theoretical advances have shown that such sample-efficient offline RL is indeed possible provided certain strong representational conditions hold, else there are lower bounds exhibiting exponential error amplification (in the problem horizon) unless the data collection distribution has only a mild distribution shift relative to the target policy. This work studies these issues from an empirical perspective to gauge how stable offline RL methods are. In particular, our methodology explores these ideas when using features from pre-trained neural networks, in the hope that these representations are powerful enough to permit sample efficient offline RL. Through extensive experiments on a range of tasks, we see that substantial error amplification does occur even when using such pre-trained representations (trained on the same task itself); we find offline RL is stable only under extremely mild distribution shift. The implications of these results, both from a theoretical and an empirical perspective, are that successful offline RL (where we seek to go beyond the low distribution shift regime)  requires 
substantially stronger conditions beyond those which suffice for successful supervised learning.

\end{abstract}
\section{Introduction}

Offline reinforcement learning (RL) seeks to utilize offline data to
alleviate the sample complexity burden in challenging sequential
decision making settings where sample-efficiency is
crucial~\citep{mandel2014offline,gottesman2018evaluating,Wang2018SupervisedRL,8904645};
it is seeing much recent interest due to the large amounts of offline
data already available in numerous scientific and engineering domains.
The goal is to efficiently evaluate (or learn) policies, in scenarios
where the data are collected from a distribution that (potentially)
substantially differs from that of the target policy to be evaluated.
Broadly, an important question here is to better understand the practical challenges we
face in offline RL problems and how to address them.

Let us start by considering when we expect offline RL to be successful
from a theoretical perspective~\citep{munos2003error,
szepesvari2005finite, antos2008learning, munos2008finite,tosatto2017boosted, chen2019information, duan2020minimax}.
For the purpose of evaluating a given target policy,
\citet{duan2020minimax} showed that
under a (somewhat stringent) policy completeness assumption with
regards to a linear feature mapping\footnote{A linear feature
  mapping is said to be complete if Bellman backup of a linear
  function remains in the span of the given features. See Assumption~\ref{assumption:completeness} for a formal definition. } along with data \emph{coverage}
assumption, then Fitted-Q iteration (FQI)~\citep{gordon1999approximate}
--- a classical offline Bellman backup based method --- can provably
evaluate a policy with low sample complexity (in the dimension of the
feature mapping).
While the coverage assumptions here are mild,
the representational conditions for such settings to be successful are more concerning; they go well
beyond simple \emph{realizability} assumptions, which only requires the
representation to be able to approximate the state-value function of the given target policy.

Recent theoretical
advances~\citep{wang2020statistical} show that without such a strong
representation condition,
there are lower bounds exhibiting exponential error amplification (in
the problem horizon) unless the data collection distribution has only a mild distribution shift
relative to the target policy.\footnote{We discuss these issues in more depth in
Section~\ref{sec:fqi}, where we give a characterization of FQI in the discounted setting}
It is worthwhile to emphasize that this ``low distribution condition''
is a problematic restriction, since in offline RL, we seek to utilize 
diverse data collection distributions. 
As an intuitive example to contrast the issue of distribution shift
vs. coverage, consider offline RL for spatial navigation
tasks (e.g. \cite{youtube}): coverage in our offline dataset would
seek that our dataset has example transitions from a diverse set of
spatial locations, while a low distribution shift condition would seek that our
dataset closely resembles that of the target policy itself for which
we desire to evaluate.

From a practical point of view, it is natural to ask to what
extent these worst-case characterizations are reflective of the scenarios
that arise in practical applications because, in fact, modern deep learning
methods often produce representations that are extremely effective, say for
transfer learning (computer vision~\cite{yosinski2014transferable}
and NLP~\cite{peters2018deep,devlin2018bert,radford2018improving} have both witnessed remarkable successes
using pre-trained features on downstreams tasks of interest). Furthermore, there are number of
offline RL methods with promising performance on certain benchmark tasks~\citep{Laroche2019,fujimoto2019off,jaques2020way,kumar2019stabilizing,agarwal2020striving,wu2020behavior,kidambi2020morel,
  DBLP:conf/icml/RossB12}. There are (at least) two
reasons which support further empirical investigations over these
current works: (i) the extent to which these data collection
distributions are diverse has not been carefully
controlled\footnote{The data collection in many benchmarks tasks are
  often taken from the data obtain when training an \emph{online}
  policy, say with deep Q-learning or policy gradient methods.} and (ii) the hyperparameter
tuning in these approaches are done in an interactive manner tuned on
how the policy actually behaves in the world as opposed to being tuned
on the offline data itself (thus limiting the scope of these methods).

In this work we provide a careful empirical investigation to further
understand how sensitive offline RL methods are to distribution shift.
Along this line of inquiry, 
One specific question 
to answer is to what extent we should be concerned about the error amplification effects as
suggested by worst-case theoretical considerations.

\paragraph{Our Contributions.} We study these questions on a range of standard tasks ($6$ tasks from the OpenAI gym benchmark suite),
using offline datasets with features from pre-trained neural networks trained on the task itself. Our
offline datasets are a mixture of trajectories from the target policy itself,
along the data from other policies (random or lower performance
policies). Note that this is favorable setting in that we would not expect
realistic offline datasets to have a large number of
trajectories from the target policy itself.

The motivation for using pre-trained features are both conceptual and technical. First, we may
hope that such features are powerful enough to permit sample-efficient offline
RL because they were learned in an online manner on the task itself.
Also, practically, while we are not able to verify if certain theoretical
assumptions hold, we may optimistically hope that such pre-trained
features will perform well under distribution shift (indeed, as discussed
earlier, using pre-trained features has had remarkable successes in
other domains). Second, using pre-trained
features allows us to decouple practical representational learning questions
from the offline RL question, where we can focus on offline RL
with a given representation. 
We provide further discussion on our methodologies in Section~\ref{sec:further}. 
We also utilize random Fourier features~\citep{rahimi2007random}
as a point of comparison.

\begin{figure}
\centering
\includegraphics[width=0.5\textwidth]{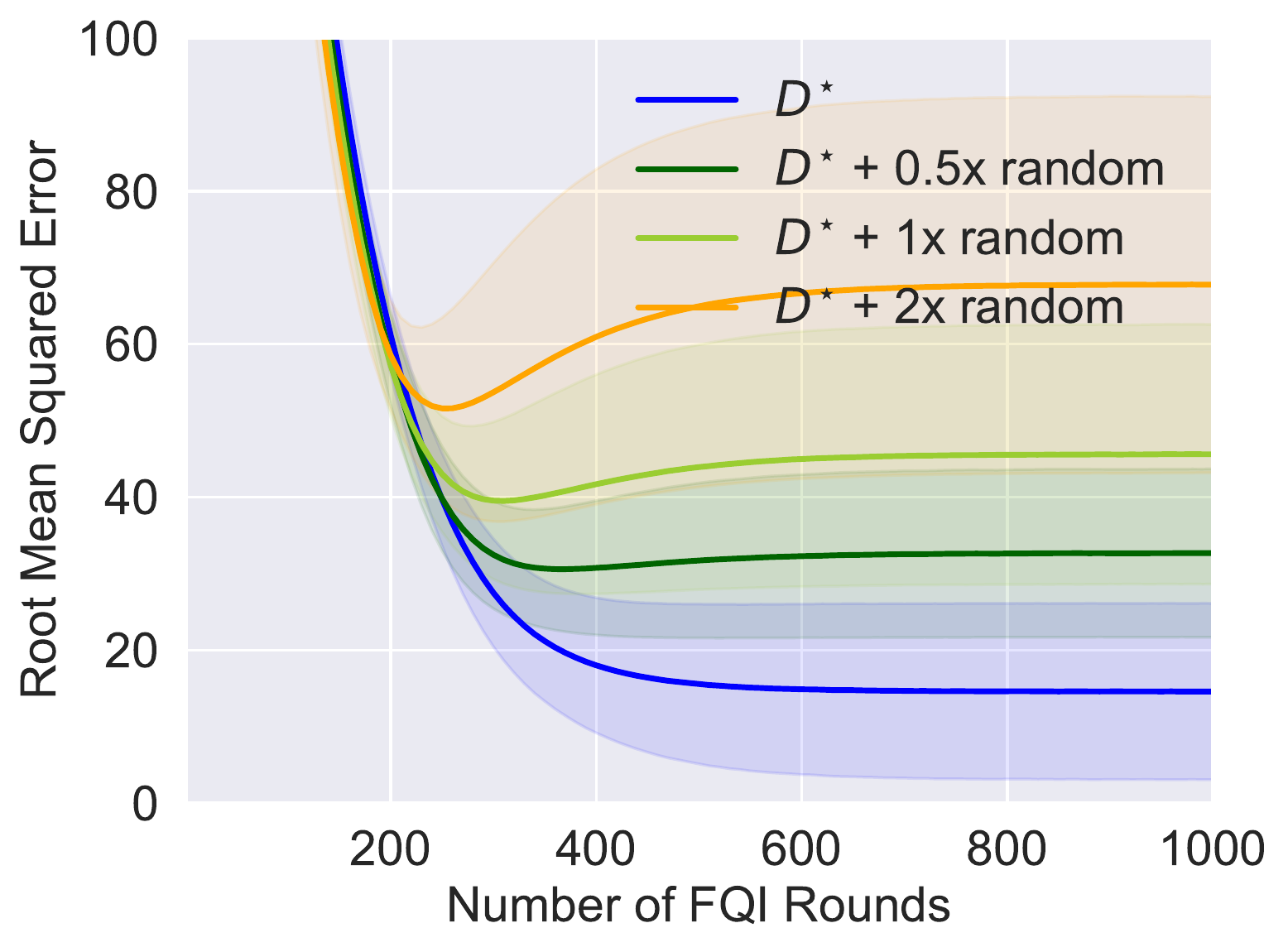}
\caption{We show the performance of FQI on Walker-2d
  v2 when applied to policy evaluation. Here the $x$-axis is the
  number of rounds we run FQI, and the $y$-axis is the square root of the mean squared
  error of the predicted values (smaller is better).
The blue line corresponds to performance when the dataset is generated by the target policy itself with $1$ million samples, and other lines correspond to the performance when adding more data induced by random trajectories. E.g., the orange line corresponds to the case where we add 2x more data (i.e., $2$ million extra samples) induced by random trajectories. 
As shown here, with more data induced by random trajectories added, the performance of FQI degrades. }
\label{fig:example}
\end{figure}

The main conclusion of this work, through extensive experiments on a
number of tasks, is that: \emph{we do in fact observe substantial error
amplification}, even when using pre-trained representations, even we
tune hyper-parameters, regardless of what the distribution was
shifted to; furthermore, this amplification even occurs under relatively mild distribution shift.
As an example, Figure~\ref{fig:example} shows the performance of
FQI on Walker-2d v2 when our offline dataset has $1$ millions samples generated by the target policy itself, with additional samples from random policies. 

These experiments also complement the recent hardness results
in~\citet{wang2020statistical} showing the issue of error
amplification is a real practical concern. From a practical point
of view, our experiments demonstrate that the definition of a good
representation is more subtle than in supervised learning.
These results also
raise a number of concerns about empirical practices employed in a
number of benchmarks, and they also have a number of implications for
moving forward (see
Section~\ref{sec:discussion} with regards to these
two points).


Finally, it is worth emphasizing that our findings are not suggesting that
offline RL is not possible. Nor does it suggest that there are
no offline RL successes, as there have been some successes in
realistic domains (e.g.~\citep{mandel2014offline,youtube}).
Instead, our emphasis is that the conditions for success in offline
RL, both from a theoretical and an empirical perspective, are
substantially stronger than those in supervised learning settings.

\section{Related Work}\label{sec:related}

\paragraph{Theoretical Understanding.}
Offline RL is closely related to the theory of Approximate Dynamic Programming~\citep{bertsekas1995neuro}.
Existing theoretical work~\citep{munos2003error, szepesvari2005finite, antos2008learning, munos2008finite, tosatto2017boosted, duan2020minimax} usually makes strong representation conditions. 
In offline RL, the most natural assumption would be realizability, which only assumes the value function of the policy to be evaluated lies in the function class, and existing theoretical work usually make assumptions stronger than realizability.
For example, \citet{szepesvari2005finite, duan2020minimax, wang2020statistical} assume (approximate) closedness under Bellman updates, which is much stronger than realizability. 
Polynomial sample complexity results are also obtained under the realizability assumption, albeit under coverage conditions~\cite{xie2020batch} or stringent distribution shift conditions~\citep{wang2020statistical}.
Technically, our characterization of FQI (in Section~\ref{sec:fqi}) is similar to the characterization of LSPE by~\citet{wang2020statistical}, although we work in the more practical discounted case while~\citet{wang2020statistical} work in the finite-horizon setting.

Error amplification induced by distribution shift is a known issue in the theoretical analysis of RL algorithms. 
See~\citep{gordon1995stable, gordon1996stable, munos1999barycentric, ormoneit2002kernel, kakade2003sample, zanette2019limiting} for discussion on this topic.
Recently, \citet{wang2020statistical} show that in the finite-horizon setting, without a strong
representation condition,
there are lower bounds exhibiting exponential error amplification unless the data collection distribution has only a mild distribution shift
relative to the target policy.
Such lower bound was later generalized to the discounted setting by~\citet{amortila2020variant}.
Similar hardness results are also obtained by~\citet{zanette2020exponential}, showing that offline RL could be exponentially harder than online RL. 

\paragraph{Empirical Work.} Error amplification in offline RL has been observed in empirical work~\citep{fujimoto2019off, kumar2019stabilizing} and was called ``extrapolation error'' in these work. 
For example, it has been observed in~\citep{fujimoto2019off} that DDPG~\citep{lillicrap2015continuous} trained on the replay buffer of online RL methods performs significantly worse than the behavioral agent. 
Compared to previous empirical study on the error amplification issue, in this work, we use pre-trained features which allow us to decouple practical representational learning questions from the offline RL question, where we can focus on offline RL with a given representation.
We also carefully control the data collection distributions, with different styles of shifted distributions (those induced by random trajectories or induced by lower performance policies) and different levels of noise. 

To mitigate the issue of error amplification, prior empirical work usually constrains the learned policy to be closer to the behavioral policy~\citep{fujimoto2019off, kumar2019stabilizing, wu2020behavior, jaques2020way, nachum2019algaedice, peng2019advantage,siegel2020keep, kumar2020conservative, yu2021combo} and utilizes uncertainty quantification~\citep{agarwal2019striving, yu2020mopo, kidambi2020morel, rafailov2020offline}. 
We refer interested readers to the survey by~\citet{levine2020offline} for recent developments on this topic. 
\section{Background}\label{sec:pre}
\paragraph{Discounted Markov Decision Process.} Let $\mdp =\left(\states, \actions, \trans ,R, \gamma, \mu_{\mathrm{init}}\right)$ be a \emph{discounted Markov Decision Process} (DMDP, or MDP for short)
where $\states$ is the state space, 
$\actions$ is the action space, 
$\trans: \states \times \actions \rightarrow \Delta\left(\states\right)$ is the transition operator, 
$R : \states \times \actions \rightarrow \Delta\left( \mathbb{R} \right)$ is the reward distribution,
$\gamma < 1$ is the discount factor
and $\mu_{\mathrm{init}} \in \Delta(\states)$ is the initial state distritbution.
A policy $\pi: \states \to \Delta\left(\actions\right)$ chooses an action $a$ based on the state $s$.
The policy $\pi$ induces a trajectory $s_0,a_0,r_0,s_1,a_1,r_1,\ldots$,
where $s_0 \sim \mu_{\mathrm{init}}$, $a_0 \sim \pi(s_0)$, $r_0 \sim R(s_0,a_0)$, $s_1 \sim \trans(s_0,a_0)$, $a_1 \sim \pi(s_1)$, etc.
For the theoretical analysis, we assume $r_h \in [0, 1]$.

\paragraph{Value Function.}
Given a policy $\pi$ and $(s,a) \in \states \times
\actions$, define
\[Q^\pi(s,a) = \expect\left[\sum_{h = 0}^{\infty} \gamma^h r_{h'}\mid s_0 =s, a_0 =
  a, \pi\right]\] and \[V^\pi(s)=\expect\left[\sum_{h = 0}^{\infty}\gamma^h r_{h}\mid s_0=s,
  \pi\right].\]
For a policy $\pi$, we define $V^{\pi} = \expect_{s_0 \sim \mu_{\mathrm{init}}} [V^{\pi}(s_0)]$ to be the expected
value of $\pi$ from the initial state distribution $\mu_{\mathrm{init}}$.

\paragraph{Offline Reinforcement Learning.}
This paper is concerned with the offline RL setting. 
In this setting, the agent does not have direct access to the MDP and instead is given access to a data distribution $\mu \in \Delta(\states \times \actions)$.
The inputs of the agent is a datasets $D$,  consisting of i.i.d. samples of the form $(s, a, r, s') \in \states
\times \actions \times \mathbb{R} \times \states$, where $(s,
a)\sim\mu$, $r \sim r(s, a)$, $s' \sim P(s, a)$.
We primarily focus on the \emph{offline policy evaluation} problem: given a target policy $\pi : \states \to \Delta\left(\actions\right)$, the goal is to output an
accurate estimate of the value of $\pi$ (i.e., $V^{\pi}$) approximately, using the collected
dataset $D$, with as few samples as possible.

\paragraph{Linear Function Approximation.} 
In this paper, we focus on offline RL with linear function approximation. 
When applying linear function approximation schemes, the agent is given a feature extractor $\phi : \states \times \actions \to \mathbb{R}^d$ which can either be hand-crafted or a pre-trained neural network, which transforms a state-action pair to a $d$-dimensional embedding, and it is commonly assumed that $Q$-functions can be predicted by linear functions of the features.
For our theoretical analysis, we assume $\|\phi(s, a)\|_2 \le 1$ for all $(s, a) \in \states \times \actions$. 
For the theoretical analysis, we are interested in the offline policy evaluation problem, under the following assumption.
\begin{assumption}[Realizability] \label{assmp:realizability}
For the policy $\pi :
\states \to \Delta(\actions)$ to evaluated, there exists $\theta^* \in \mathbb{R}^d$ with $\|\theta^*\|_2 \le \sqrt{d }/ (1 - \gamma)$\footnote{Without loss of generality, we assume that we work in a coordinate system such that $\|\theta^*\|_2 \le \sqrt{d} / (1 - \gamma)$ and $\|\phi(s, a)\|_2 \le 1$.
 }, such that for all $(s, a) \in \states \times \actions$, $Q^{\pi}(s, a) = \left(\theta^*\right)^{\top}\phi(s, a)$. Here $\pi$ is the target policy to be evaluated. 
\end{assumption}

\paragraph{Notation.} 
For a vector $x \in \mathbb{R}^d$ and a positive semidefinite matrix $A \in \mathbb{R}^{d \times d}$, we use $\|x\|_2$ to denote its $\ell_2$ norm, $\|x\|_A$ to denote $\sqrt{x^{\top}Ax}$, $\|A\|_2$ to denote its operator norm, and $\sigma_{\min}(A)$ to denote its smallest eigenvalue. 
For two positive semidefinite matrices $A$ and $B$, we write $A \succeq B$ if and only if $A - B$ is positive semidefinite.
\section{An Analysis of Fitted Q-Iteration in the Discounted Setting}\label{sec:fqi}
In order to illustrate the error amplification issue and discuss conditions that permit sample-efficient offline RL, 
in this section, we analyze Fitted Q-Iteration (FQI)~\citep{gordon1999approximate} when applied to the offline policy evaluation problem under the realizability assumption.
Here we focus on FQI since it is the prototype of many practical algorithms.
For example, when DQN~\citep{mnih2015human} is run on off-policy data, and the target network is updated slowly, it can be viewed as an analog of FQI, with neural networks being the function approximator.
We give a description of FQI in Algorithm~\ref{algo:upper}. 
We also perform experiments on temporal difference methods in our experiments (Section~\ref{sec:exp}).
For simplicity, we assume a deterministic target policy $\pi$.
\begin{algorithm}[t]
	\caption{Fitted Q-Iteration (FQI)}
	\label{algo:main}
	\begin{algorithmic}[1]
	\STATE \textbf{Input:} policy $\pi$ to be evaluated, number of samples $N$, regularization parameter $\lambda > 0$, number of rounds $T$
	\label{line:sample}
	\STATE Take samples $(s_i, a_i) \sim \mu$, $r_i \sim r(s_i, a_i)$ and $\overline{s}_i \sim \trans(s_i, a_i)$ for each $i \in [N]$
	\STATE $\hat{\Lambda} = \frac{1}{N}\sum_{i \in [N]} \phi(s_i, a_i)\phi(s_i, a_i)^{\top} + \lambda I$
	\STATE $Q_0(\cdot, \cdot) = 0$ and $V_0(\cdot) = 0$
	\FOR{$t = 1, 2, \ldots, T$}
	\STATE $\hat{\theta}_t =\hat{\Lambda}^{-1} (\frac{1}{N} \sum_{i = 1}^{N} \phi(s_i, a_i) \cdot(r_i + \gamma \hat{V}_{t - 1}(\overline{s}_i))) $
	\STATE $\hat{Q}_t(\cdot, \cdot) = \phi(\cdot, \cdot)^{\top} \hat{\theta}_t$ and $\hat{V}_t(\cdot) = \hat{Q}_t(\cdot, \pi(\cdot))$
	\ENDFOR
	\STATE \textbf{return} $\hat{Q}_T(\cdot, \cdot)$
	\end{algorithmic}
	\label{algo:upper}
\end{algorithm}
%

We remark that the issue of error amplification discussed here is similar to that in~\citet{wang2020statistical}, which shows that  if one just assumes realizability, geometric error amplification is inherent in offline RL in the finite-horizon setting. 
Here we focus on the discounted case which exhibit some subtle differences (see, e.g., ~\citet{amortila2020variant}).

\paragraph{Notation.} We define $\Phi$ to be a $N \times d$ matrix, whose $i$-th row is $\phi(s_i, a_i)$, and define $\overline{\Phi}$ to be another $N \times d$ matrix whose $i$-th row is $\phi(\overline{s}_i, \pi(\overline{s}_i))$ (see Algorithm~\ref{algo:upper} for the definition of $\overline{s}_i$). 
For each $i \in [N]$, define $\zeta_i= r_i + V(\overline{s}_i) - Q(s_i, a_i)$.
Clearly, $\expect[\zeta_i] = 0$.
We use $\zeta \in \mathbb{R}^N$ to denote a vector whose $i$-th entry is $\zeta_i$.
We use $\Lambda = \expect_{(s, a) \sim \mu}\expect[\phi(s, a) \phi(s, a)^{\top}]$ to denote the feature covariance matrix of the data distribution, and use
\[
\overline{\Lambda} = \expect_{(s, a) \sim \mu, s' \sim P(s, a)} \expect[\phi(s', \pi(s')) \phi(s', \pi(s'))^{\top}]
\]
 to denote the  feature covariance matrix of the
one-step lookahead distribution induced by $\mu$ and $\pi$. 
We also use $\Lambda_{\mathrm{init}} = \expect_{s \sim \mu_{\mathrm{init}}} [\phi(s, \pi(s))\phi(s, \pi(s))^{\top}]$ to denote the feature covariance matrix induced by the initial state distribution.


Now we present a general lemma that characterizes the estimation error of Algorithm 1 by an equality. 
Later, we apply this general lemma to special cases.
%
%
\begin{lemma}\label{lem:upper_main}
Under Assumption~\ref{assmp:realizability},
\[
\theta_T - \theta^* 
= \sum_{t = 1}^T \left(\gamma L\right)^{t - 1}  (\frac{\gamma}{N} \hat{\Lambda}^{-1}\Phi^{\top} \zeta - \lambda \hat{\Lambda}^{-1}\theta^*)  
+ \left(\gamma L\right)^T  \theta^*
\]
where $L = \hat{\Lambda}^{-1}\Phi^{\top}\overline{\Phi} / N$. 
\end{lemma}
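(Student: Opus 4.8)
The plan is to run one iteration of FQI in vector form, extract from it a linear recursion for the parameter error $\hat{\theta}_t - \theta^*$, and then unroll that recursion as a finite geometric series in the matrix $\gamma L$. Everything here is an exact identity, so no probabilistic estimate is involved.

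First I would put line~6 of Algorithm~\ref{algo:upper} in matrix form. Let $r \in \mathbb{R}^N$ collect the observed rewards $r_i$; since $\hat{V}_{t-1}(\overline{s}_i) = \phi(\overline{s}_i, \pi(\overline{s}_i))^{\top}\hat{\theta}_{t-1} = (\overline{\Phi}\hat{\theta}_{t-1})_i$, the update reads
\[
\hat{\theta}_t = \hat{\Lambda}^{-1}\Big(\tfrac{1}{N}\Phi^{\top}r + \tfrac{\gamma}{N}\Phi^{\top}\overline{\Phi}\,\hat{\theta}_{t-1}\Big) = \tfrac{1}{N}\hat{\Lambda}^{-1}\Phi^{\top}r + \gamma L\,\hat{\theta}_{t-1},
\]
which is well defined because $\lambda > 0$ makes $\hat{\Lambda} \succeq \lambda I$ invertible. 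Then I would eliminate $r$ using realizability: by Assumption~\ref{assmp:realizability} we have $(\Phi\theta^*)_i = Q^{\pi}(s_i,a_i)$ and $(\overline{\Phi}\theta^*)_i = Q^{\pi}(\overline{s}_i, \pi(\overline{s}_i)) = V^{\pi}(\overline{s}_i)$, so the definition of $\zeta$ yields the exact identity $r = \Phi\theta^* - \gamma\,\overline{\Phi}\theta^* + \zeta$ (and the claim $\expect[\zeta_i] = 0$ is precisely the Bellman equation for $Q^{\pi}$). Substituting this in, and using the elementary identity $\tfrac{1}{N}\Phi^{\top}\Phi = \hat{\Lambda} - \lambda I$ --- so that $\hat{\Lambda}^{-1}\big(\tfrac{1}{N}\Phi^{\top}\Phi\big)\theta^* = \theta^* - \lambda\hat{\Lambda}^{-1}\theta^*$ --- and regrouping the $\gamma L\,\hat{\theta}_{t-1}$ term against the $-\gamma L\theta^*$ term it produces, the update collapses to a one-step affine recursion
\[
\hat{\theta}_t - \theta^* = \gamma L\,(\hat{\theta}_{t-1} - \theta^*) + \Big(\tfrac{1}{N}\hat{\Lambda}^{-1}\Phi^{\top}\zeta - \lambda\hat{\Lambda}^{-1}\theta^*\Big).
\]

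Finally I would unroll this. Since $Q_0 \equiv 0$ (equivalently $\hat{V}_0 \equiv 0$), the recursion starts from $\hat{\theta}_0 - \theta^* = -\theta^*$; iterating it $T$ times and summing the geometric series $\sum_{t=0}^{T-1}(\gamma L)^t$ writes $\hat{\theta}_T - \theta^*$ as the driving term $\tfrac{1}{N}\hat{\Lambda}^{-1}\Phi^{\top}\zeta - \lambda\hat{\Lambda}^{-1}\theta^*$ weighted by the powers $(\gamma L)^{t-1}$, $t = 1, \dots, T$, plus the propagated initialization error $(\gamma L)^T(\hat{\theta}_0 - \theta^*)$, which is the claimed identity.

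I do not expect a genuine obstacle: the statement is an exact algebraic identity and the proof is pure bookkeeping. The steps needing care are (i) invoking the Bellman equation $Q^{\pi}(s,a) = \expect[r(s,a)] + \gamma\,\expect_{s' \sim P(s,a)}[V^{\pi}(s')]$ so the discount factors land correctly and $\expect[\zeta_i] = 0$; (ii) the regularization identity $\tfrac{1}{N}\Phi^{\top}\Phi = \hat{\Lambda} - \lambda I$, which is exactly what converts the would-be cancellation of the $\theta^*$ terms into the residual bias $-\lambda\hat{\Lambda}^{-1}\theta^*$; and (iii) checking that the initialization $\hat{\theta}_0 = 0$ is consistent with the $t = 1$ step of the recursion.
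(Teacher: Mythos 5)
Your proof is correct and follows essentially the same route as the paper's: vectorize the FQI update, use realizability together with the identity $\Phi^{\top}\Phi/N = \hat{\Lambda} - \lambda I$ to obtain the affine error recursion $\hat{\theta}_t - \theta^* = \gamma L(\hat{\theta}_{t-1}-\theta^*) + \bigl(\tfrac{\gamma}{N}\hat{\Lambda}^{-1}\Phi^{\top}\zeta - \lambda\hat{\Lambda}^{-1}\theta^*\bigr)$, and unroll it over $T$ rounds. Your tracking of the initialization term as $(\gamma L)^T(\hat{\theta}_0-\theta^*) = -(\gamma L)^T\theta^*$ is the consistent sign (the lemma as stated writes $+(\gamma L)^T\theta^*$, a harmless discrepancy that does not affect the subsequent norm bounds).
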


By Lemma~\ref{lem:upper_main}, to achieve a bounded error, the matrix $L = \hat{\Lambda}^{-1}\Phi^{\top}\overline{\Phi} / N$ should satisfy certain non-expansive properties.
Otherwise, the estimation error grows exponentially as $t$ increases, and geometric error amplification occurs. 
Now we discuss two cases when geometric error amplification does not
occur, in which case the estimation error can be bounded with a
polynomial number of samples.  

\paragraph{Policy Completeness.} 
The policy completeness assumption~\citep{szepesvari2005finite, duan2020minimax} assumes the feature mapping is complete under bellman updates.

\begin{assumption}[Policy Completeness]\label{assumption:completeness}
For any $\theta \in \mathbb{R}^d$, there exists $\theta' \in \mathbb{R}^d$, such that for any $(s, a) \in \states \times \actions$, \[\phi(s, a)^{\top}\theta' = \expect_{r \sim R(s, a), s' \sim P(s, a)}[r + \gamma \phi(s', \pi(s'))^{\top} \theta].\]
\end{assumption}

Now we show that under Assumption~\ref{assumption:completeness}, FQI achieves bounded error with polynomial number of samples. 
\begin{lemma}\label{lem:fqi_completeness}
Suppose $N \ge \mathrm{poly}(d, 1 / \varepsilon, 1 / (1 - \gamma), 1 / \sigma_{\min}(\Lambda))$, by taking $T \ge C \log\left(d / (\varepsilon(1 - \gamma))\right) / (1 - \gamma)$ for some constant $C > 0$, we have \[|\hat{Q}_T(s, a) - Q^{\pi}(s, a)| \le \varepsilon\] for all $(s, a) \in \states \times \actions$. 
\end{lemma}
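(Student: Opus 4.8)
The plan is to start from the exact error decomposition in Lemma~\ref{lem:upper_main} and show that under policy completeness the operator $L = \hat\Lambda^{-1}\Phi^\top\overline\Phi/N$ is (approximately) a contraction in the appropriate sense, so that the geometric series $\sum_{t\ge1}(\gamma L)^{t-1}$ converges and the residual term $(\gamma L)^T\theta^*$ decays. First I would argue about the population-level operator: completeness (Assumption~\ref{assumption:completeness}) says that for every $\theta$ there is $\theta'$ with $\phi(s,a)^\top\theta' = \expect[r+\gamma\phi(s',\pi(s'))^\top\theta]$; taking inner products against $\phi(s,a)$ under $\mu$ and using the definitions of $\Lambda$ and $\overline\Lambda$ gives $\Lambda\theta' = (\text{reward term}) + \gamma\,\overline\Lambda\theta$, i.e. the population Bellman update is the affine map $\theta\mapsto \Lambda^{-1}(b+\gamma\overline\Lambda\theta)$. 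Since the true Bellman operator is a $\gamma$-contraction in sup-norm on $Q$-functions and (by realizability plus completeness) $Q^\pi$ is the fixed point living in the span of $\phi$, the population iteration converges geometrically; the key quantitative fact I would extract is a bound of the form $\|\Lambda^{-1/2}\overline\Lambda\,\Lambda^{-1/2}\|_2 \le 1$ (or an $O(1)$ constant), which is what makes $\gamma L$ a strict contraction in the $\|\cdot\|_{\hat\Lambda}$-type norm at the population level.

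Next I would handle the sampling error, i.e. pass from $\Lambda,\overline\Lambda$ to the empirical $\hat\Lambda$, $\Phi^\top\Phi/N$, $\Phi^\top\overline\Phi/N$. With $N \ge \mathrm{poly}(d,1/\varepsilon,1/(1-\gamma),1/\sigma_{\min}(\Lambda))$ samples, standard matrix concentration (matrix Bernstein / Hoeffding, using $\|\phi\|_2\le1$) gives $\|\hat\Lambda - \Lambda - \lambda I\|_2$ and $\|\Phi^\top\overline\Phi/N - \overline\Lambda\|_2$ both small; choosing $\lambda$ polynomially small and absorbing it, one gets that $\|\gamma L\|$ in the relevant norm is at most $\gamma(1+o(1)) < 1$, say bounded by $(1+\gamma)/2$ after choosing constants, and similarly $\|\Phi^\top\zeta/N\|$ is small by a vector Bernstein bound since $\expect[\zeta_i]=0$, $|\zeta_i| = O(1/(1-\gamma))$ and the $\zeta_i$ are conditionally mean-zero. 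Then in Lemma~\ref{lem:upper_main} each term of the geometric sum is bounded by $((1+\gamma)/2)^{t-1}$ times something of size $\mathrm{poly}(d)/(N^{1/2}(1-\gamma))\cdot \varepsilon$-small, so the whole sum is $O(\varepsilon/(1-\gamma))$-controllable, and the residual $(\gamma L)^T\theta^*$ has norm at most $((1+\gamma)/2)^T\|\theta^*\|_2 \le ((1+\gamma)/2)^T\sqrt d/(1-\gamma)$, which is $\le \varepsilon$ once $T \ge C\log(d/(\varepsilon(1-\gamma)))/(1-\gamma)$ (using $\log(1/((1+\gamma)/2)) = \Theta(1-\gamma)$). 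Finally, $\|\phi(s,a)\|_2\le1$ converts the bound on $\|\hat\theta_T-\theta^*\|_2$ into the stated uniform bound $|\hat Q_T(s,a)-Q^\pi(s,a)| = |\phi(s,a)^\top(\hat\theta_T-\theta^*)| \le \varepsilon$ for all $(s,a)$.

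I expect the main obstacle to be making the contraction argument fully rigorous in the finite-sample regime: $L$ is not symmetric, and the natural norm in which the population operator $\gamma\Lambda^{-1}\overline\Lambda$ contracts is the data-dependent $\|\cdot\|_\Lambda$ (or $\|\cdot\|_{\hat\Lambda}$) norm rather than $\ell_2$, so one must carefully track how the empirical perturbations of $\hat\Lambda$, $\Phi^\top\Phi/N$ and $\Phi^\top\overline\Phi/N$ interact — e.g. writing $L = \hat\Lambda^{-1/2}(\hat\Lambda^{-1/2}\Phi^\top\overline\Phi\,\hat\Lambda^{-1/2}/N)\hat\Lambda^{1/2}$ and bounding the middle symmetric-ish factor — and showing the accumulated factor $1/\sigma_{\min}(\Lambda)$ only enters polynomially. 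A secondary technicality is that completeness as stated gives an exact closedness, so no approximation-error term appears and the fixed point of the empirical iteration is close to $\theta^*$; I would confirm there is no hidden bias by checking that the $\lambda$-regularization term $\lambda\hat\Lambda^{-1}\theta^*$ in Lemma~\ref{lem:upper_main} is genuinely $O(\lambda\|\theta^*\|/\sigma_{\min})$ and hence negligible for polynomially small $\lambda$. Everything else — the matrix/vector concentration bounds and the choice of $T$ — is routine once the contraction constant is pinned down.
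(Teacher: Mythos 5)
Your overall scaffolding (start from Lemma~\ref{lem:upper_main}, control the population operator, perturb to the empirical one, choose $T$) matches the paper, but your key quantitative step is wrong. You propose to extract from policy completeness a spectral bound of the form $\|\Lambda^{-1/2}\overline{\Lambda}\,\Lambda^{-1/2}\|_2 \le O(1)$ and conclude that $\gamma L$ is a strict contraction in a $\|\cdot\|_{\hat\Lambda}$-type norm, with each term of the geometric sum decaying like $((1+\gamma)/2)^{t}$. Completeness does not imply any such domination of $\overline{\Lambda}$ by $\Lambda$: it is a statement about the feature span being closed under the backup $P^{\pi}$, and says nothing about how the data distribution $\mu$ weights the features relative to the one-step lookahead distribution. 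The inequality $\overline{\Lambda}\preceq C\Lambda$ is precisely the separate low-distribution-shift condition (Assumption~\ref{assumption:low_distribution_shift}), which the paper uses to prove the \emph{other} result (Lemma~\ref{lem:fqi_distribution_shift}) and which is deliberately not assumed here; in effect you have sketched the proof of that lemma instead. Without it, writing $\Sigma = \expect_{(s,a)\sim\mu,\,s'\sim P(s,a)}[\phi(s,a)\phi(s',\pi(s'))^{\top}]$, the operator $\gamma\Lambda^{-1}\Sigma$ can have operator norm much larger than $1$ in any $\Lambda$-weighted $\ell_2$ norm, so the claimed one-step contraction does not follow.

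What completeness actually buys is non-expansiveness in the \emph{sup norm over realized feature values}: since $P^{\pi}$ maps the column span of $\Phi_{\mathrm{all}}$ into itself, the weighted least-squares projection inside $L$ acts as the identity on $P^{\pi}\Phi_{\mathrm{all}}x$, so $\Phi_{\mathrm{all}}(\Lambda^{-1}\Sigma)^{t}x = (P^{\pi})^{t}\Phi_{\mathrm{all}}x$, and $P^{\pi}$ is a stochastic matrix, hence non-expansive in $\|\cdot\|_{\infty}$. This yields $|\phi(s,a)^{\top}(\Lambda^{-1}\Sigma)^{t}x|\le\|x\|_2$ for all $t$ (the paper's Lemma~\ref{lem:non_expansive}) --- uniform boundedness of \emph{all powers}, not a one-step contraction --- and the geometric decay of the residual $(\gamma L)^{T}\theta^*$ comes solely from the explicit $\gamma^{T}$ factor, at the price of a polynomial $(T+1)$ blowup from the finite-sample perturbation, which the paper controls by a binomial expansion of $\bigl(\Lambda^{-1}\Sigma + (L-\Lambda^{-1}\Sigma)\bigr)^{t}$ after forcing $\|L-\Lambda^{-1}\Sigma\|_2\le\sigma_{\min}(\Lambda)/T$ via concentration. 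If you replace your spectral-contraction step with this $\ell_{\infty}$ argument (and accordingly bound $|\phi(s,a)^{\top}L^{t}v|\le(t+1)\|v\|_2$ rather than a $\hat\Lambda$-weighted norm of $L^{t}v$), the rest of your plan --- concentration for $\hat\Lambda$, the cross term and $\Phi^{\top}\zeta/N$, negligibility of the $\lambda$ term, and the choice of $T$ --- goes through essentially as you describe.
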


\begin{proof}[Proof Sketch]
By Lemma~\ref{lem:upper_main}, it suffices to show the non-expansiveness of $L =  \hat{\Lambda}^{-1}\Phi^{\top}\overline{\Phi} / N$.
For intuition, let us consider the case where $N \to
\infty$ and $\lambda \to 0$.
Let $\Phi_{\mathrm{all}} \in \mathbb{R}^{|\states| |\actions| \times d}$ denote a matrix whose row indexed by $(s, a) \in \states \times \actions$ is $\phi(s, a) \in \mathbb{R}^d$.
Let $D^{\mu} \in \mathbb{R}^{|\states| |\actions| \times |\states| |\actions|}$ denote a diagonal matrix whose diagonal entry indexed by $(s, a)$ is $\mu(s, a)$. 
We use $P^{\pi} \in \mathbb{R}^{|\states| |\actions| \times |\states| |\actions|}$ to denote a matrix where
\[
P^{\pi}((s, a), (s', a')) = \begin{cases}
P(s' \mid s, a) & a' = \pi(s)\\
0 & \text{otherwise} 
\end{cases}.\]
When $N \to \infty$ and $\lambda \to 0$, we have $\hat{\Lambda} = \Phi_{\mathrm{all}}^{\top}D^{\mu}\Phi_{\mathrm{all}}$ and $\frac{\Phi^{\top}\overline{\Phi}}{N} = \Phi_{\mathrm{all}}^{\top} D^{\mu} P^{\pi} \Phi_{\mathrm{all}}$.
By Assumption~\ref{assumption:completeness}, for any $x \in \mathbb{R}^d$, there exists $x'$ such that $P^{\pi}\Phi_{\mathrm{all}} x = \Phi_{\mathrm{all}} x'$.
Thus, 
\[
\Phi_{\mathrm{all}}  \hat{\Lambda}^{-1}\frac{\Phi^{\top}\overline{\Phi}}{N} x
= \Phi_{\mathrm{all}} (\Phi_{\mathrm{all}}^{\top}D^{\mu}\Phi_{\mathrm{all}} )^{-1} \Phi_{\mathrm{all}}^{\top} D^{\mu} P^{\pi} \Phi_{\mathrm{all}} x = \Phi_{\mathrm{all}} x' = P^{\pi} \Phi_{\mathrm{all}} x.
\]
Therefore, the
magnitude of entries in $\Phi x$ will not be amplified after applying
$\hat{\Lambda}^{-1}\Phi^{\top}\overline{\Phi} / N$ onto $x$ since $\|P^{\pi} \Phi_{\mathrm{all}} x\|_{\infty} \le \|\Phi_{\mathrm{all}} x\|_{\infty}$.

In the formal proof, we combine the above with standard concentration arguments to obtain a finite-sample rate. 
\end{proof}

We remark that variants of Lemma~\ref{lem:fqi_completeness} have been known in the literature (see, e.g.,~\citep{duan2020minimax} for a similar analysis in the finite-horizon setting). Here we present Lemma~\ref{lem:fqi_completeness} mainly to illustrate the versatility of Lemma~\ref{lem:upper_main}.  

\paragraph{Low Distribution Shift.}
Now we focus on the case where the distribution shift between the data distributions and the distribution induced by the target policy is low. 
Here, our low distribution shift condition is similar to that in~\citet{wang2020statistical}, though we focus on the discounted case while \citet{wang2020statistical} focus on the finite-horizon case. 

To measure the distribution shift, our main assumption is as follows.

\begin{assumption}[Low Distribution Shift]\label{assumption:low_distribution_shift}
There exists $C_{\mathrm{policy}} < 1 / \gamma^2$ such that $\overline{\Lambda} \preceq C_{\mathrm{policy}} \Lambda$ and $\Lambda_{\mathrm{init}} \preceq C_{\mathrm{init}} \Lambda$
\end{assumption}

Assumption~\ref{assumption:low_distribution_shift} assumes that the data distribution dominates both the one-step lookahead distribution and the initial state distribution, in a spectral sense.
For example, when the data distribution $\mu$ is induced by the target policy $\pi$ itself, and $\pi$ induces the same data distribution for all timesteps $t \ge 0$, Assumption~\ref{assumption:low_distribution_shift} holds with $C_{\mathrm{policy}} = C_{\mathrm{eval}} = 1$.
In general,  $C_{\mathrm{policy}}$ characterizes the difference between the data distribution and the one-step lookahead distribution induced by the data distribution and the target policy.

Now we show that under Assumption~\ref{assumption:low_distribution_shift}, FQI achieves bounded error with polynomial number of samples.
The proof can be found in the appendix. 

\begin{lemma}\label{lem:fqi_distribution_shift}
Suppose $N \ge \mathrm{poly}(d, 1 / \varepsilon, 1 / (1 - C_{\mathrm{policy}}^{1/2}\gamma), 1 / C_{\mathrm{init}})$.
By taking $T \ge C \log (d \cdot C_{\mathrm{init}} / (\varepsilon(1 -C_{\mathrm{policy}}^{1/2} \gamma))) / (1 - C_{\mathrm{policy}}^{1/2}\gamma)$ for some $C > 0$, under Assumption~\ref{assmp:realizability} and~\ref{assumption:low_distribution_shift}, we have
$\expect_{s \sim \mu_{\mathrm{init}}}[(V^{\pi}(s) - \hat{V}_T(s))^2] \le \varepsilon$. 
\end{lemma}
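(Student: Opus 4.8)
The plan is to invoke the exact error decomposition of Lemma~\ref{lem:upper_main} and control the error propagation through the operator $L = \hat{\Lambda}^{-1}\Phi^{\top}\overline{\Phi}/N$, but now measured in a data-dependent quadratic norm rather than in $\ell_\infty$. Concretely, I would first analyze the population-level operator. With $N\to\infty$ and $\lambda\to 0$, $\hat\Lambda \to \Lambda$ and $\Phi^\top\overline\Phi/N \to \expect_{(s,a)\sim\mu, s'\sim P}[\phi(s,a)\phi(s',\pi(s'))^\top] =: M$, so $L \to \Lambda^{-1}M$. The key spectral estimate is that for any $\theta\in\mathbb R^d$, $\|L\theta\|_\Lambda \le \gamma^{-1}C_{\mathrm{policy}}^{1/2}\|\theta\|_\Lambda \cdot$ (wait --- more precisely) $\|\Lambda^{-1}M\theta\|_\Lambda^2 = \theta^\top M^\top \Lambda^{-1} M\theta$, and by Cauchy--Schwarz in the form $M^\top\Lambda^{-1}M \preceq \overline\Lambda$ (this is the standard inequality $\expect[\phi\psi^\top]^\top \Lambda^{-1}\expect[\phi\psi^\top] \preceq \expect[\psi\psi^\top]$ when $\Lambda = \expect[\phi\phi^\top]$), we get $\|L\theta\|_\Lambda^2 \le \theta^\top\overline\Lambda\theta \le C_{\mathrm{policy}}\,\theta^\top\Lambda\theta = C_{\mathrm{policy}}\|\theta\|_\Lambda^2$ using Assumption~\ref{assumption:low_distribution_shift}. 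Hence $\|\gamma L\theta\|_\Lambda \le \gamma C_{\mathrm{policy}}^{1/2}\|\theta\|_\Lambda$, and since $C_{\mathrm{policy}} < 1/\gamma^2$ we have a contraction factor $\rho := \gamma C_{\mathrm{policy}}^{1/2} < 1$ in the $\|\cdot\|_\Lambda$ norm.

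Next I would pass from the population operator to the empirical one. Using a matrix-Bernstein / covariance-concentration argument, for $N \ge \mathrm{poly}(d, 1/\varepsilon, 1/(1-\rho), 1/\sigma_{\min}(\Lambda) \text{ or } 1/C_{\mathrm{init}})$ and an appropriately small $\lambda$, the empirical quantities $\hat\Lambda$ and $\Phi^\top\overline\Phi/N$ are close to their population counterparts, so $\|\gamma \hat L\theta\|_{\hat\Lambda} \le (\rho + o(1))\|\theta\|_{\hat\Lambda}$; choosing $N$ large enough makes the effective contraction factor, say $\rho' = (\rho + 1)/2 < 1$. Then in Lemma~\ref{lem:upper_main}'s identity I bound each term in the $\|\cdot\|_{\hat\Lambda}$ norm: the geometric sum $\sum_{t=1}^T (\gamma L)^{t-1}(\cdots)$ contributes at most $\frac{1}{1-\rho'}$ times the one-step error $\|\frac{\gamma}{N}\hat\Lambda^{-1}\Phi^\top\zeta\|_{\hat\Lambda} + \lambda\|\hat\Lambda^{-1}\theta^*\|_{\hat\Lambda}$, and the residual term $(\gamma L)^T\theta^*$ has norm at most $(\rho')^T\|\theta^*\|_{\hat\Lambda} \le (\rho')^T \sqrt{d}/(1-\gamma)$, which is driven below $\varepsilon$ by the stated choice $T \ge C\log(dC_{\mathrm{init}}/(\varepsilon(1-\rho)))/(1-\rho)$. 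The noise term $\|\Phi^\top\zeta/N\|_{\hat\Lambda^{-1}}$ is handled by a standard self-normalized / vector-Bernstein bound giving $O(\sqrt{d/N})$ up to log factors (using $|\zeta_i|\le 1/(1-\gamma)$), and the regularization term is $O(\lambda\|\theta^*\|/\sigma_{\min}(\hat\Lambda))$, both made $\le \varepsilon/\mathrm{poly}$ by the sample-size and $\lambda$ choices.

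Finally I would translate the bound on $\|\hat\theta_T - \theta^*\|_\Lambda$ (equivalently $\|\hat\theta_T-\theta^*\|_{\hat\Lambda}$, up to the concentration slack) into the claimed guarantee $\expect_{s\sim\mu_{\mathrm{init}}}[(V^\pi(s)-\hat V_T(s))^2]\le\varepsilon$. Since $V^\pi(s) - \hat V_T(s) = \phi(s,\pi(s))^\top(\theta^* - \hat\theta_T)$, we have $\expect_{s\sim\mu_{\mathrm{init}}}[(V^\pi(s)-\hat V_T(s))^2] = (\theta^*-\hat\theta_T)^\top \Lambda_{\mathrm{init}}(\theta^*-\hat\theta_T) \le C_{\mathrm{init}}\,\|\theta^*-\hat\theta_T\|_\Lambda^2$ by the second part of Assumption~\ref{assumption:low_distribution_shift}; rescaling the target accuracy by $C_{\mathrm{init}}$ (which is why $C_{\mathrm{init}}$ appears in the sample-complexity and in the $\log$ for $T$) closes the argument. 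The main obstacle I anticipate is the concentration step: one must show not merely that $\hat\Lambda\approx\Lambda$ but that the contraction property in the random norm $\|\cdot\|_{\hat\Lambda}$ is preserved with the right polynomial dependence — in particular controlling $\hat\Lambda^{-1}\Phi^\top\overline\Phi/N$ uniformly enough that the $T$-fold product does not blow up the $o(1)$ errors, which requires being careful that the concentration error per step is small relative to $1-\rho$ before it is amplified $T = \tilde O(1/(1-\rho))$ times.
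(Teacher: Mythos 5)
Your proposal is correct and follows essentially the same route as the paper's proof: both rest on the Cauchy--Schwarz inequality $M^{\top}\Lambda^{-1}M \preceq \overline{\Lambda} \preceq C_{\mathrm{policy}}\Lambda$ (making $\gamma L$ a $\gamma C_{\mathrm{policy}}^{1/2}$-contraction in the $\Lambda$-weighted norm), combined with the decomposition of Lemma~\ref{lem:upper_main}, covariance concentration, the self-normalized tail bound for the noise term, and $\Lambda_{\mathrm{init}} \preceq C_{\mathrm{init}}\Lambda$ for the final conversion. The only differences are cosmetic: the paper peels off one factor of $\gamma L$ per step in empirical norms, absorbing a per-step concentration slack of $1/T$ so that $(C_{\mathrm{policy}}+1/T)^{T-1} \le e\,C_{\mathrm{policy}}^{T-1}$ (exactly the amplification issue you flag as the main obstacle), and bounds the $2T+1$ terms individually via Cauchy--Schwarz rather than summing a geometric series.
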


\subsection{Simulation Results}
We now provide simulation results on a synethic environment to better illustrate the issue of error amplification and the tightness of our characterization of FQI in Lemma~\ref{lem:upper_main}. 

\begin{figure*}[h]
	\centering
	\subfigure[$N  = 100$]{
	\includegraphics[width=0.48\textwidth]{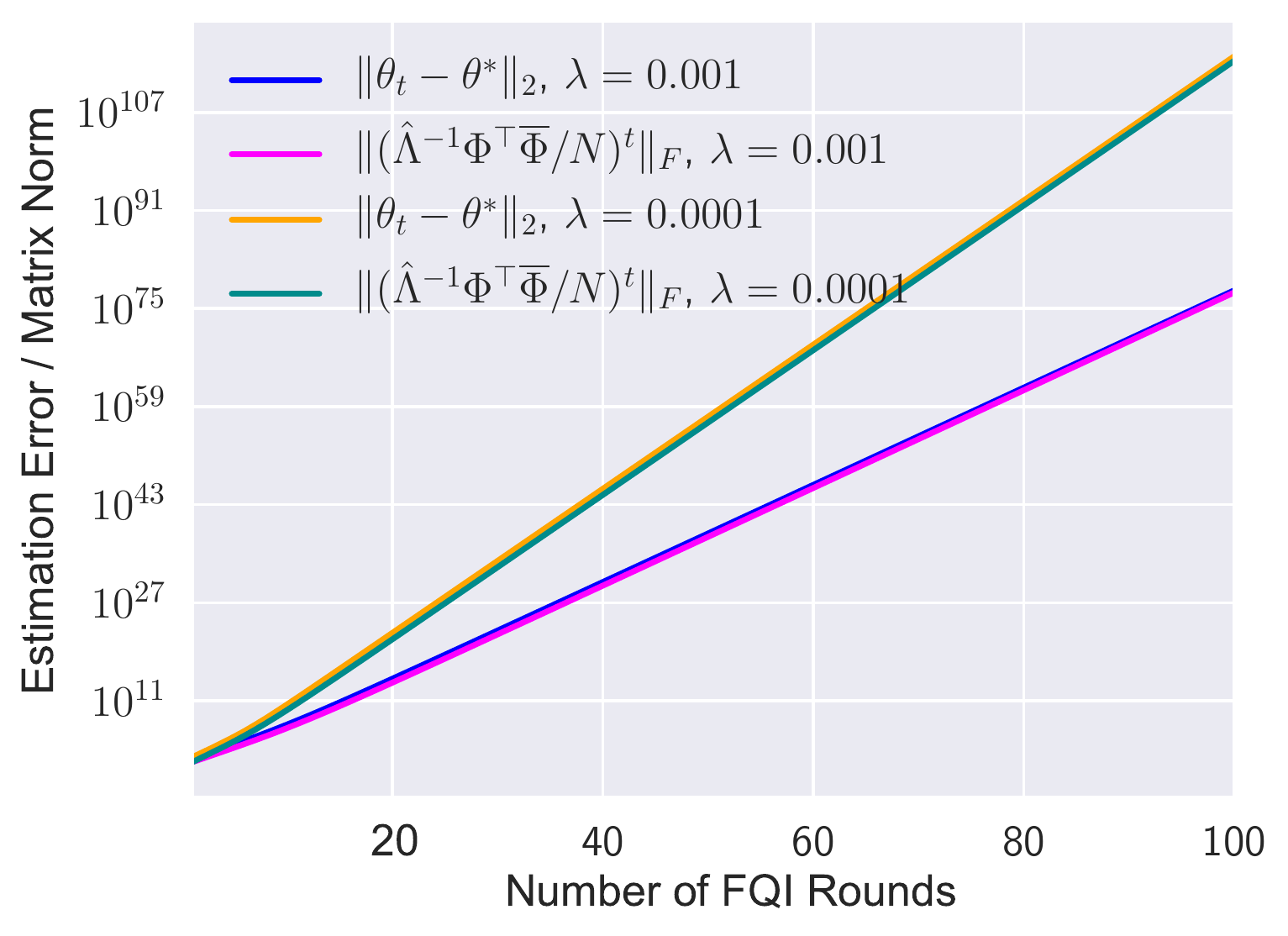}
	}
	\subfigure[$N = 200$]{
	\includegraphics[width=0.48\textwidth]{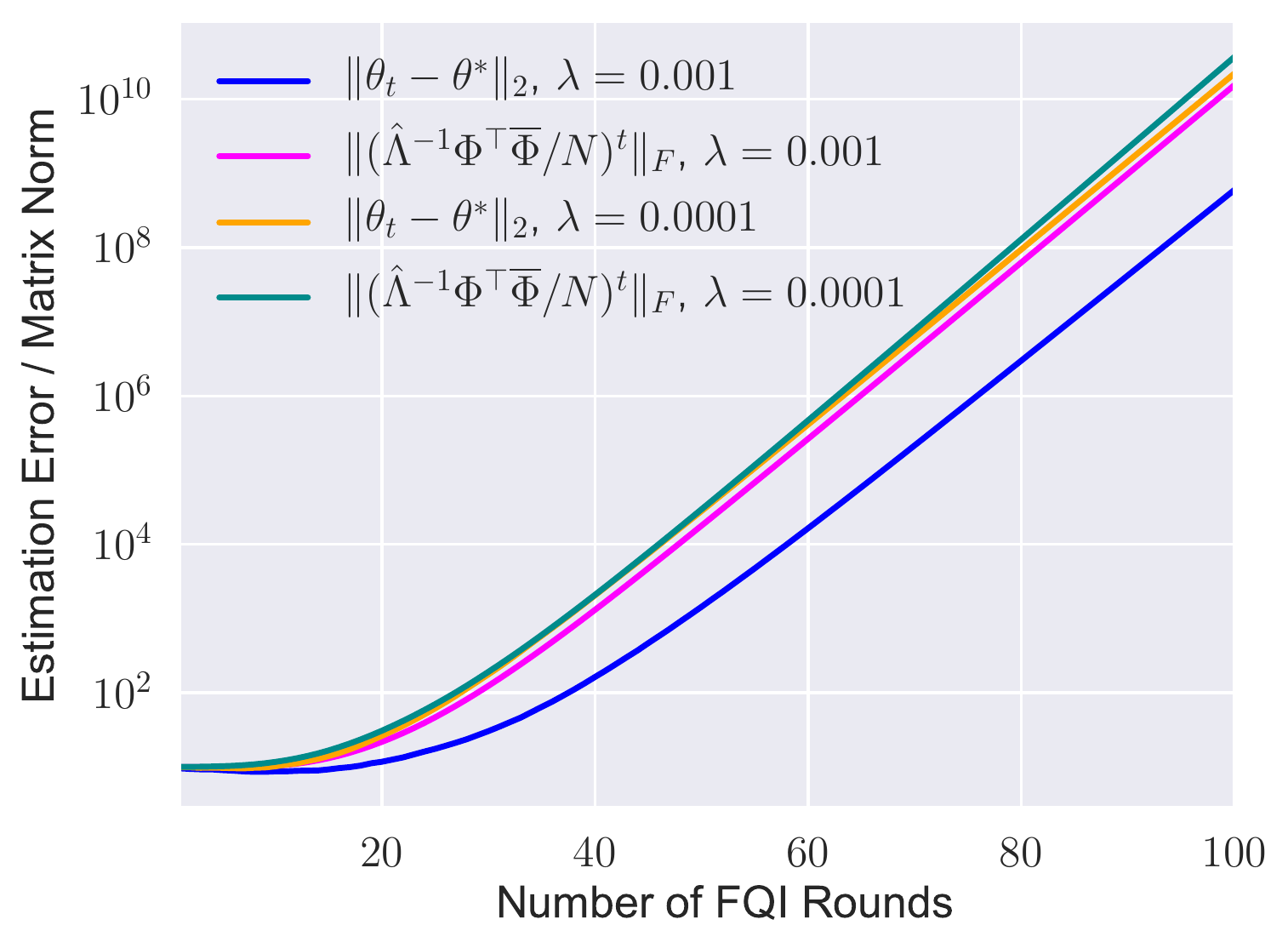}
	}
	\caption{In this figure we report the results of the simulation when $N = 100$ and $N = 200$. The $x$-axis is the number of rounds $t$ we run FQI, while the $y$-axis is the estimation error $\|\theta_t - \theta^*\|_2$ or the the Frobenius norm of $(\hat{\Lambda}^{-1}\Phi^{\top}\overline{\Phi})^t$, taking average over $100$ repetitions. }
	\label{fig:simulation}
\end{figure*}

\paragraph{Simulation Settings.}
In our construction, the number of data points is $|D| = N$, where $N = 100$ or $N = 200$.
The feature dimension is fixed to be $d = 100$ and the discount factor $\gamma = 0.99$. 
We draw $\theta^*$ from $\mathcal{N}(0, I_d)$. 
The data distribution, the transition operator and the rewards are all deterministic in this environment. 
For each $(s_i, a_i, r_i, s_i') \in D$, $\phi(s_i, a_i)$ and $\phi(s_i', \pi(s_i'))$ are independently drawn from $\mathcal{N}(0, I_d)$, and $r_i = \phi(s, a)^{\top}\theta^* - \gamma \phi(s', \pi(s'))^{\top} \theta^*$ so that Assumption~\ref{assmp:realizability} holds. 
We then run FQI in Algorithm~\ref{algo:upper}, by setting $T = 100$ and $\lambda = 10^{-4}$ or $10^{-3}$.
In Figure~\ref{fig:simulation}, we plot the estimation error $\|\theta_t - \theta^*\|_2$ and the Frobenius norm of $(\hat{\Lambda}^{-1}\Phi^{\top}\overline{\Phi})^t$, for $t = 1, 2, \ldots, 100$. 
We repeat the experiment for $100$ times and report the mean estimation error and the mean Frobenius norm of $(\hat{\Lambda}^{-1}\Phi^{\top}\overline{\Phi})^t$.

We remark that our dataset $D$ has sufficient coverage over the feature space, both when $N = 100$ and $N = 200$.
This is because the feature covariance matrix has lower bounded eigenvalue with high probability in both cases, according to standard random matrix theory~\citep{chen2005condition}. 
%

\paragraph{Results.}
For deterministic environments, by Lemma~\ref{lem:upper_main}, the estimation error is dominated by $(\hat{\Lambda}^{-1}\Phi^{\top}\overline{\Phi})^t \theta^*$.
As shown in Figure~\ref{fig:simulation}, geometric error amplification does occur, and the norm of $(\hat{\Lambda}^{-1}\Phi^{\top}\overline{\Phi})^t$ grows exponentially as $t$ increases. 
Moreover, the norm of $(\hat{\Lambda}^{-1}\Phi^{\top}\overline{\Phi})^t$ has almost the same growth trend as $\|\theta_t - \theta^*\|_2$.
E.g., when $N = 200$, the estimation error $\|\theta_t - \theta^*\|_2$ grows exponentially, although much slower than the case when $N = 100$.
In that case, the norm of $(\hat{\Lambda}^{-1}\Phi^{\top}\overline{\Phi})^t$  also increases much slower than the case when $N = 200$.
Our simulation results show that the issue of error amplification could occur even in simple environments, and our theoretical result in Lemma~\ref{lem:upper_main} gives a tight characterization of the estimation error. 


\section{Experiments}\label{sec:exp}

%
The goal of our experimental evaluation is to understand whether offline RL methods are sensitive to distribution shift in practical tasks, given a good representation (features extracted from pre-trained neural networks or random features).
Our experiments are performed on a range of challenging tasks from the OpenAI gym benchmark suite~\citep{brockman2016openai}, including two environments with discrete action space (MountainCar-v0, CartPole-v0) and four environments with continuous action space (Ant-v2, HalfCheetah-v2, Hopper-v2, Walker2d-v2).
We also provide further discussion on our methodologies in Section~\ref{sec:further}.

\subsection{Experimental Methodology}
Our methodology proceeds according to the following steps:

\begin{enumerate}
\item We \textbf{decide on a (target) policy} to be evaluated, along with a
  good feature mapping for this policy.  
\item \textbf{Collect offline data} using trajectories that are a mixture of the target policy
along with another distribution. 
\item \textbf{Run offline RL methods} to evaluate the target policy
  using the feature mapping found in Step~1 and the offline data obtained in
  Step~2. 
\end{enumerate}

We now give a detailed description for each step.

\paragraph{Step 1: Determine the Target Policy. }
To find a policy to be evaluated together with a good representation, we run classical online RL methods.
For environments with discrete action space (MountainCar-v0, CartPole-v0), we run Deep Q-learning (DQN)~\citep{mnih2015human}, while for environments with continuous action space (Ant-v2, HalfCheetah-v2, Hopper-v2, Walker2d-v2), we run Twin Delayed Deep Deterministic policy gradient (TD3)~\citep{fujimoto2018addressing}.
The hyperparameters used can be found in Section~\ref{sec:exp_details}. 
The target policy is set to be the final policy output by DQN or TD3. 
We also set the feature mapping to be the output of the last hidden layer of the learned value function networks, extracted in the final stage of the online RL methods. 
Since the target policy is set to be the final policy output by the online RL methods, such feature mapping contains sufficient information to represent the value functions of the target policy.
We also perform experiments using random Fourier features~\citep{rahimi2007random}.

\paragraph{Step 2: Collect Offline Data. }
We consider two styles of shifted distributions: distributions induced by random policies and by lower performance policies. 
When the data collection policy is the same as the target policy, we
will see that offline methods achieve low estimation error, as expected.
In our experiments, we use the target policy to generate a dataset $D^{\star}$ with $1$ million samples. 
We then consider two types of datasets induced by shifted distributions: adding random trajectories into $D^{\star}$, and adding samples induced by lower performance policies into $D^{\star}$.
In both cases, the amount of data from the target policy remains unaltered (fixed to be $1$ million). 
For the first type of dataset, we add 0.5 million, 1 million, or 2 million samples from random trajectories into $D^{\star}$.
For the second type of dataset, we manually pick four lower performance policies $\pi_{\mathrm{sub}}^1, \pi_{\mathrm{sub}}^2, \pi_{\mathrm{sub}}^3, \pi_{\mathrm{sub}}^4$ with $V^{\pi_{\mathrm{sub}}^1} > V^{\pi_{\mathrm{sub}}^2} > V^{\pi_{\mathrm{sub}}^3} > V^{\pi_{\mathrm{sub}}^4}$, and use each of them to collect $1$ million samples.
We call these four datasets (each with $1$ million samples) $D_{\mathrm{sub}}^1, D_{\mathrm{sub}}^2, D_{\mathrm{sub}}^3, D_{\mathrm{sub}}^4$, and we run offline RL methods on $D^{\star} \cup D_{\mathrm{sub}}^i$ for each $i \in \{1, 2, 3, 4\}$.

\paragraph{Step 3: Run Offline RL Methods. }
With the collected offline data and the target policy (together with a good representation), we can now run offline RL methods to evaluate the (discounted) value of the target policy.
In our experiments, we run FQI (described in Section~\ref{sec:fqi}) and Least-Squares Temporal Difference\footnote{See the Section~\ref{sec:exp_details} for a description of LSTD.} (LSTD, a temporal difference offline RL method)~\citep{bradtke1996linear}. 
For both algorithms, the only hyperparameter is the regularization parameter $\lambda$ (cf. Algorithm~\ref{algo:upper}), which we choose from $\{10^{-1}, 10^{-2}, 10^{-3}, 10^{-4}, 10^{-8}\}$.
In our experiments, we report the performance of the best-performing
$\lambda$ (measured in terms of the square root of the mean squared
estimation error in the final stage of the algorithm, taking average
over all repetitions of the experiment); such favorable
hyperparameter tuning is clearly not possible in practice (unless we
have interactive access to the environment). See
Section~\ref{sec:results} for more discussion on hyperparameter
tuning. 

%

In our experiments, we repeat this whole process $5$ times.
For each FQI round, we report the square root of the mean squared evaluation error, taking
average over $100$ randomly chosen states. 
We also report the values ($V^{\pi}(s)$) of those randomly chosen
states in Table~\ref{table:value}.
We note that in our experiments, the randomness combines both from the feature generation process (representation uncertainty, Step 1) and the dataset (Step 2).
Even though we draw millions of samples in Step 2, the estimation of FQI could still have high variance.
Note that this is consistent with our theory in Lemma~\ref{lem:upper_main}, which shows that the variance can also be exponentially amplified without strong representation conditions and low distribution shift conditions.  
We provide more discussion regarding this point in Section~\ref{sec:details_3}. 

\begin{table*}
\centering
\begin{tabular}{lcccc}
\hline
Dataset & $D^{\star}$ & $D^{\star}$ + 0.5x random & $D^{\star}$ + 1x random & $D^{\star}$+ 2x random\\
\hline


Hopper-v2 & $2.18 \pm 1.14$  &  $9.38 \pm 3.84$  &  $13.18 \pm 2.77$  &  $16.86 \pm 2.84$ \\

Walker2d-v2 & $13.88 \pm 11.22$  &  $32.73 \pm 11.05$  &  $45.61 \pm 17.06$  &  $67.78 \pm 24.77$ \\
\hline
\end{tabular}
\caption{\emph{Performance of LSTD}. Performance of LSTDwith features from pre-trained neural networks and distributions induced by random policies.
Each number of is the square root of the mean squared error of the
estimation, taking average over $5$ repetitions, $\pm$ standard
deviation. 
}
\label{table:lstd}
\end{table*}
\begin{table}[h]
\centering
\begin{tabular}{ll}
\hline
Environment & Discounted Value\\
\hline
CartPole-v0 & $90.17 \pm 20.61$\\
Hopper-v2 & $321.42 \pm 30.26$\\
Walker2d-v2 &$336.64 \pm 49.80$\\
\hline
\end{tabular}
\caption{\emph{Values of Randomly Chosen States}. Mean value of the $100$ randomly chosen states (used for evaluating the estimations), $\pm$ standard deviation. }
\label{table:value}
\end{table}

\subsection{Results and Analysis}\label{sec:results}
Due to space limitations, we present experiment results on
Walker2d-v2, Hopper-v2 and CartPole-v0. Other experimental results are
provided in Section~\ref{sec:exp_results}.

\paragraph{Distributions Induced by Random Policies.}
We first present the performance of FQI with features from pre-trained neural networks and distributions induced by random policies.
The results are reported in Figure~\ref{fig:noise}.
Perhaps surprisingly, compared to the result on $D^{\star}$, adding
more data (from random trajectories) into the dataset generally hurts
the performance.  
With more data added into the dataset, the performance generally becomes worse. 
Thus, even with features from pre-trained neural networks, the performance of offline RL methods is still sensitive to data distribution.

\paragraph{Distributions Induced by Lower Performance Polices. }
Now we present the performance of FQI with features from pre-trained neural networks and datasets with samples from lower performance policies.
The results are reported in Figure~\ref{fig:suboptimal}.
Similar to Figure~\ref{fig:noise}, adding more data into the dataset could hurt performance, and the performance of FQI is sensitive to the quality of the policy used to generate samples. 
Moreover, the estimation error increases exponentially in some cases (see, e.g., the error curve of $D^{\star} \cup D_{\mathrm{sub}}^2$ in Figure~\ref{fig:exp_blow_up}), showing that geometric error amplification is not only a theoretical consideration, but could occur in practical tasks when given a good representation as well. 

\paragraph{Random Fourier Features.}
Now we present the performance of FQI with random Fourier features and distributions induced by random policies.
The results are reported in Figure~\ref{fig:rff}.
Here we tune the hyperparameters  of the random Fourier features so that FQI achieves reasonable performance on $D^{\star}$. 
Again, with more data from random trajectories added into the dataset, the performance generally becomes worse. 
This implies our observations above hold not only for features from pre-trained neural networks, but also for random features. 
On the other hand, it is known random features achieve reasonable performance in policy gradient methods~\citep{NIPS2017_7233} in the online setting. 
This suggests that the representation condition required by offline policy evaluation could be stronger than that of policy gradient methods in online setting.

\begin{table}
\centering
\begin{tabular}{l|cccc}
\hline
\multirow{ 2}{*}{Environment} & $\pi_{\mathrm{sub}}^1$ & $\pi_{\mathrm{sub}}^2$ & $\pi_{\mathrm{sub}}^3$ & $\pi_{\mathrm{sub}}^4$\\
& \multicolumn{4}{c}{RMSE / Gap between target policy and comparison policy } \\
\hline
Ant-v2 & \color{red} $>$1000 / 26.18  & \color{red} $>$1000 / 35.07  & \color{red} $>$1000 / 145.83  & \color{red} $>$1000 / 146.15 \\
CartPole-v0 & \color{red} 6.58 / 4.18  & \color{red} $>$1000 / 7.04  & \color{red} 9.16 / 8.10  & \color{red} 15.08 / 12.86 \\
HalfCheetah-v2 & \color{blue} 35.54 / 118.29  & \color{blue} 36.45 / 166.31  & \color{blue} 36.81 / 346.05  & \color{blue} 86.31 / 482.80 \\
Hopper-v2 & \color{red} 36.22 / -4.84  & \color{red} 35.54 / -4.37  & \color{red} 165.11 / 5.97  & \color{red} $>$1000 / 17.43 \\
MountainCar-v0 & \color{red} 1.46 / 0.74  & \color{red} 2.19 / 1.54  & \color{blue} 2.43 / 2.64  & \color{red} $>$1000 / 3.98 \\
Walker2d-v2 & \color{red} 121.35 / 5.28  & \color{red} $>$1000 / 34.48  & \color{red} 43.47 / 35.30  & \color{red} $>$1000 / 107.93 \\
\hline
\end{tabular}
\caption{\emph{Policy Comparison Results}. Best viewed in color. We seek to determine if FQI
  succeeds in comparing policies (with using features from pre-trained
  neural networks) with datasets induced by lower performing policies.
  Roughly, a red entry can be viewed as a failure (ideally, we would
  hope that the gap is at least a factor of $2$ larger than the RMSE).
For each entry, the first number is the root mean squared error of the estimation of the target policy of FQI, over $5$ repetitions and $100$ randomly chosen initial states. 
The second number is the average gap between the value of the target policy and that of the lower performing policy ($\pi_{\mathrm{sub}}^1$, $\pi_{\mathrm{sub}}^2$, $\pi_{\mathrm{sub}}^3$, or $\pi_{\mathrm{sub}}^4$), over $5$ repetitions and evaluated using $100$ trajectories. 
An entry is marked \textcolor{red}{red} if the root mean squared error is larger than
the average gap, and is marked \textcolor{blue}{blue} otherwise. We write
$>1000$ when the root mean squared error is larger than $1000$.
}
\label{table:pc}
\end{table}

\paragraph{Policy Comparison.}
We further study whether it is possible to compare the value of the target policy and that of the lower performing policies using FQI. 
In Table~\ref{table:pc}, we present the policy comparison results of FQI with features from pre-trained neural networks and datasets induced by lower performing policies. 
For each lower performing policy $\pi_{\mathrm{sub}}^i$ where $i \in \{1, 2, 3, 4\}$, we report the root mean squared error of FQI when evaluating the target policy using $D^{\star} \cup D_{\mathrm{sub}}^i$, and the average gap between the value of the target policy and that of $\pi_{\mathrm{sub}}^i$.
If the root mean squared error is less than the average gap, then we mark the corresponding entry to be green (meaning that FQI can distinguish between the target policy and the lower performing policy).
If the root mean squared error is larger than the average gap, then we mark the corresponding entry to be red (meaning that FQI cannot distinguish between the target policy and the lower performing policy).
From Table~\ref{table:pc}, it is clear that for most settings, FQI cannot distinguish between the target policy and the lower performing policy.

\paragraph{Sensitivity to Hyperparameters.}
In previous experiments, we tune the regularization parameter $\lambda$ and report the performance of the best-performing $\lambda$.
However, we remark that in practice, without access to online samples, hyperparameter tuning is hard in offline RL. 
Here we investigate how sensitive FQI is to different regularization parameters $\lambda$.
The results are reported in Figure~\ref{fig:rff}.
Here we fix the environment to be Walker2d-v2 and vary the number of additional samples from random trajectories and the regularization parameter $\lambda$.
As observed in experiments, the regularization parameter $\lambda$ significantly affects the performance of FQI, as long as there are random trajectories added into the dataset.

\paragraph{Performance of LSTD.} Finally, we present the performance of LSTD with features from pre-trained neural networks and distributions induced by random policies.
The results are reported in Table~\ref{table:lstd}. 
With more data from random trajectories added into the dataset, the performance of LSTD becomes worse. 
This means the sensitivity to distribution shift is not specific to FQI, but also holds for LSTD.

\begin{figure*}[!h]
\thisfloatpagestyle{empty}
\centering
\subfigure[Walker2d-v2]{
\includegraphics[width=0.30\textwidth]{fig/fig_Walker2d-v2_0}
}
\subfigure[Hopper-v2]{
\includegraphics[width=0.30\textwidth]{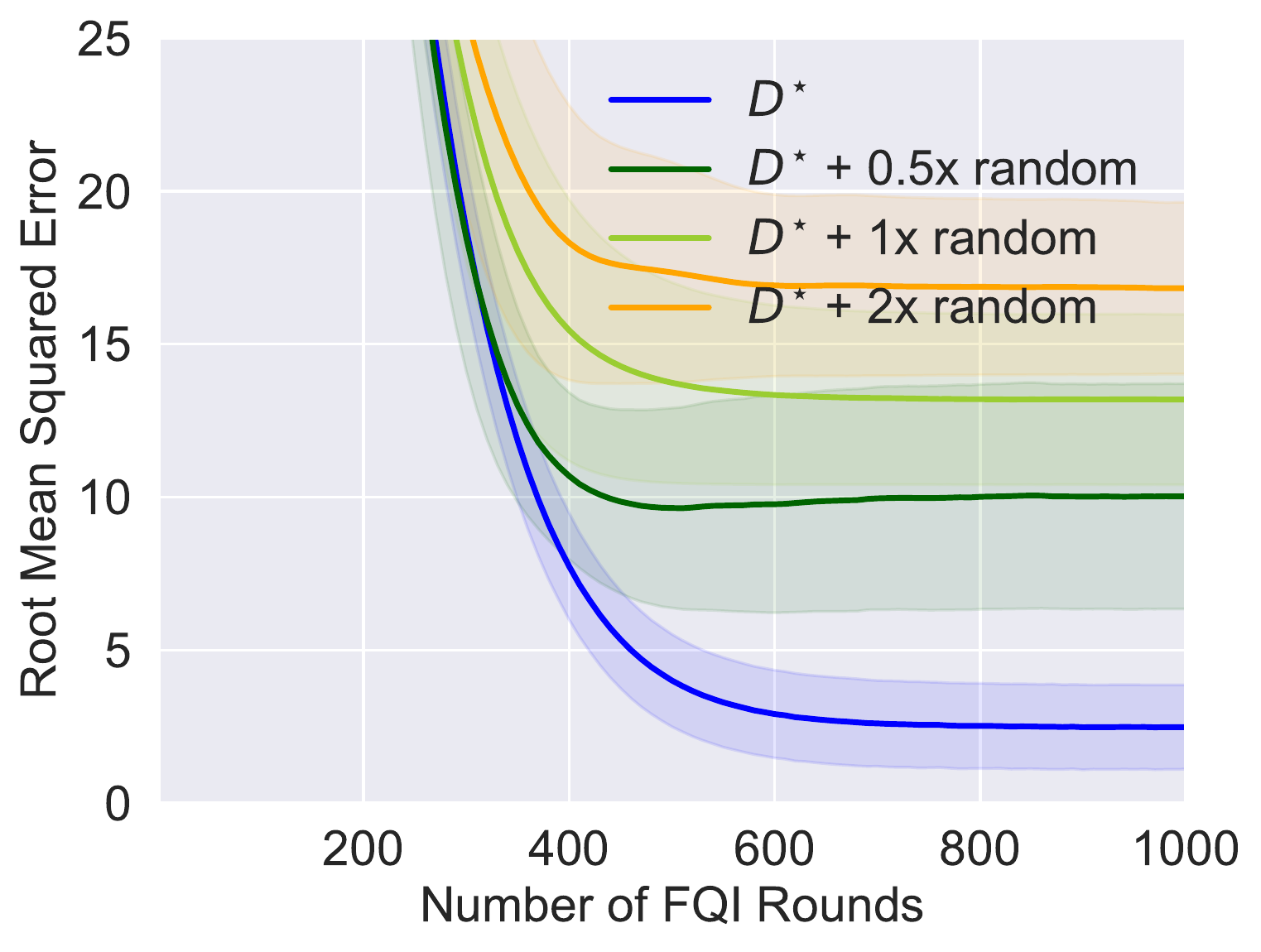}
}
\subfigure[CartPole-v0]{
\includegraphics[width=0.30\textwidth]{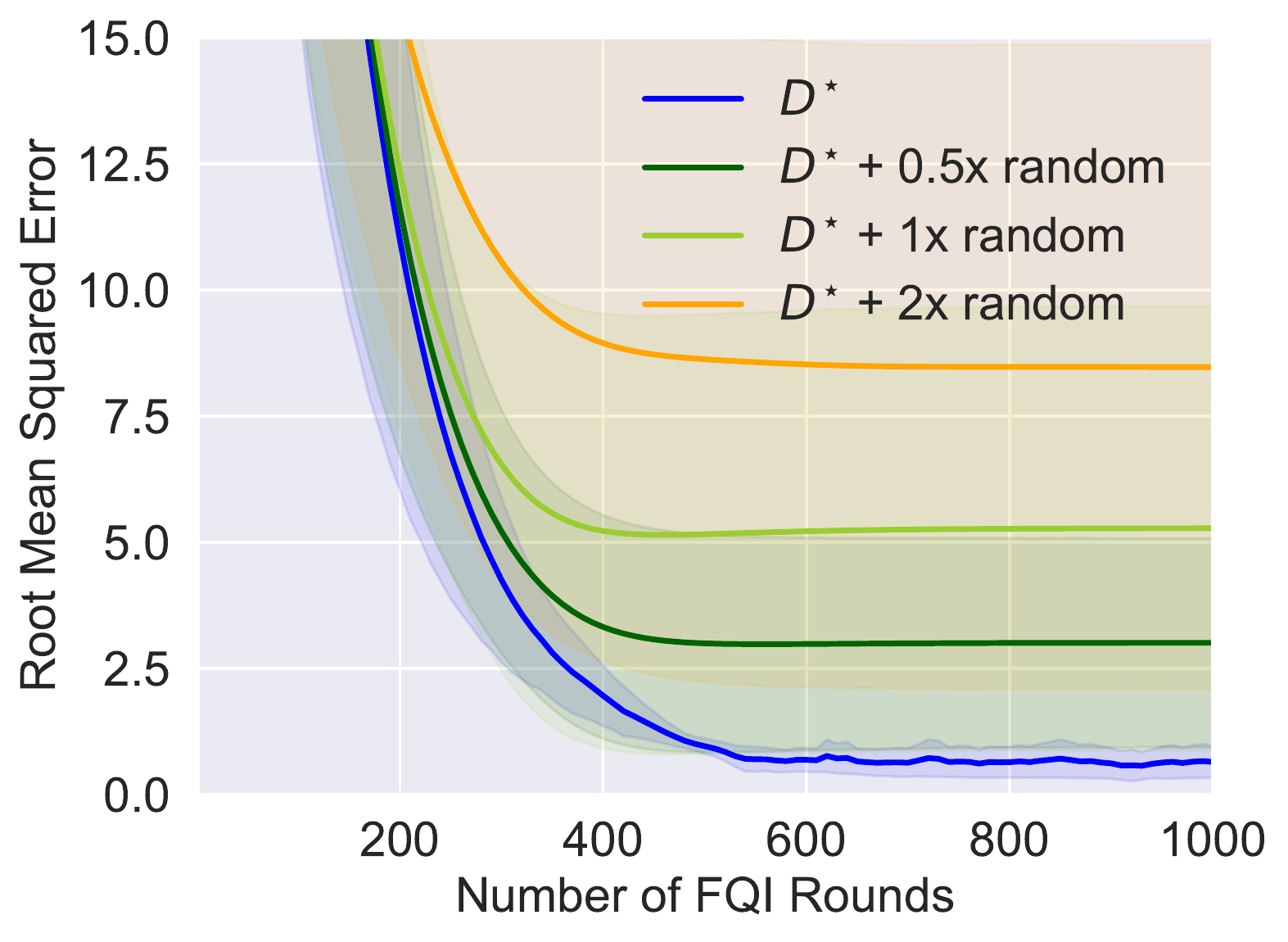}
}
\caption{Performance of FQI with features from pre-trained neural networks and datasets induced by random policies.
See Figure~\ref{fig:noise_add} in Section~\ref{sec:exp_results} for results on other environments. 
}
\label{fig:noise}

\subfigure[Walker2d-v2]{\label{fig:exp_blow_up}
\includegraphics[width=0.30\textwidth]{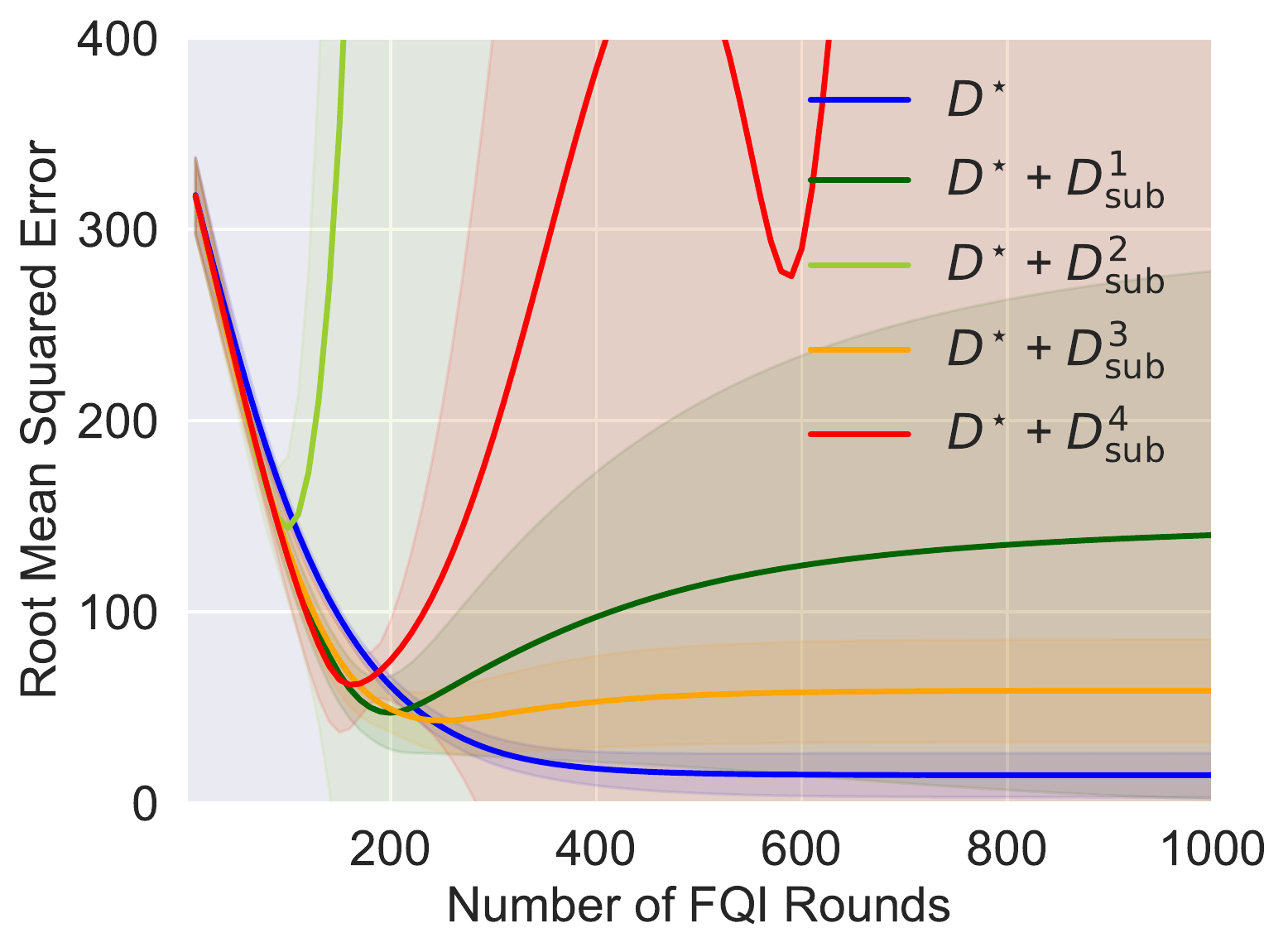}
}
\subfigure[Hopper-v2]{
\includegraphics[width=0.30\textwidth]{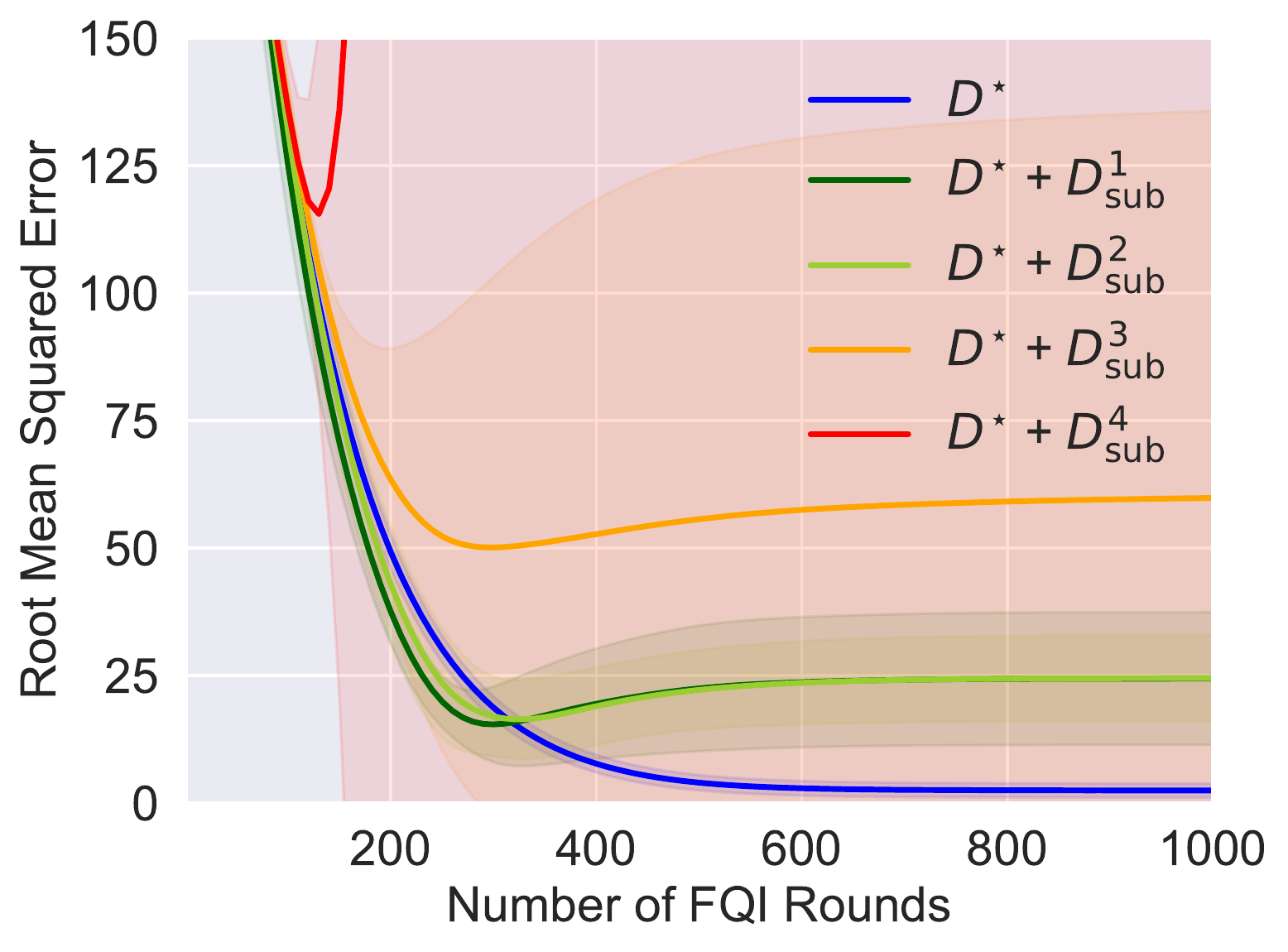}
}
\subfigure[CartPole-v0]{
\includegraphics[width=0.30\textwidth]{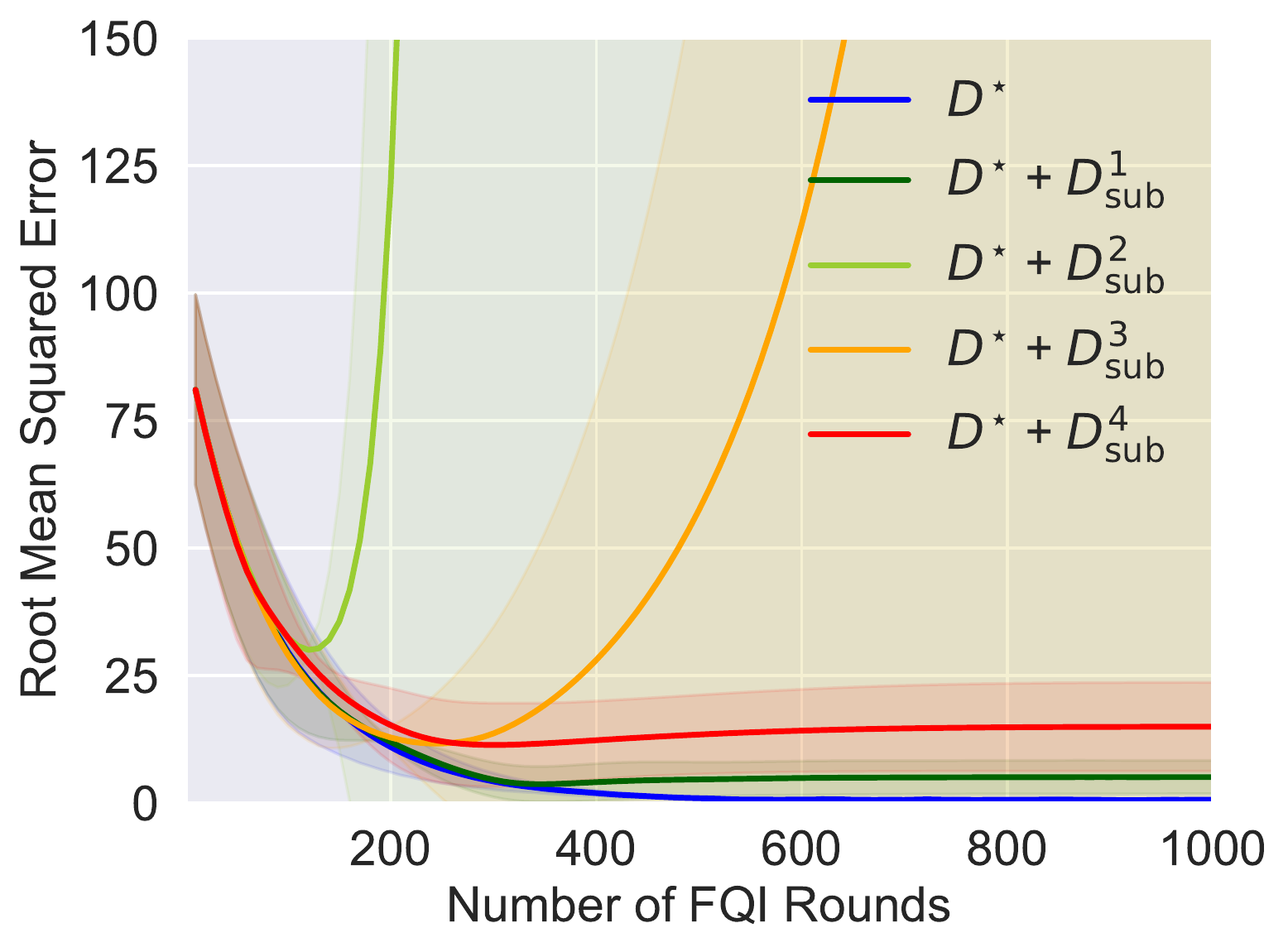}
}
\caption{Performance of FQI with features from pre-trained neural networks and datasets induced by lower performance policies.
See Figure~\ref{fig:suboptimal_add} in Section~\ref{sec:exp_results} for results on other environments. 
}
\label{fig:suboptimal}

\subfigure[Walker2d-v2]{
\includegraphics[width=0.30\textwidth]{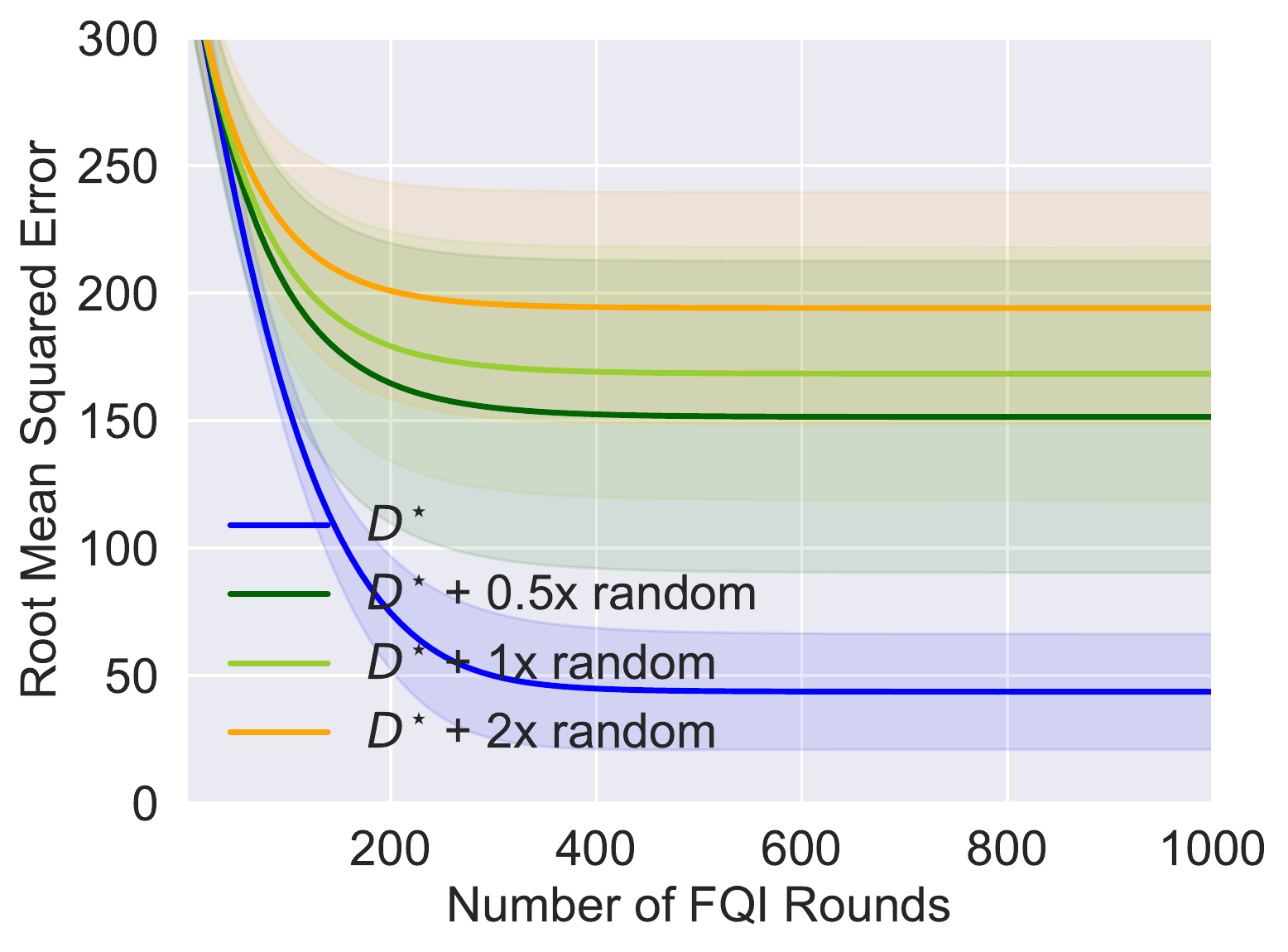}
}
\subfigure[Hopper-v2]{
\includegraphics[width=0.30\textwidth]{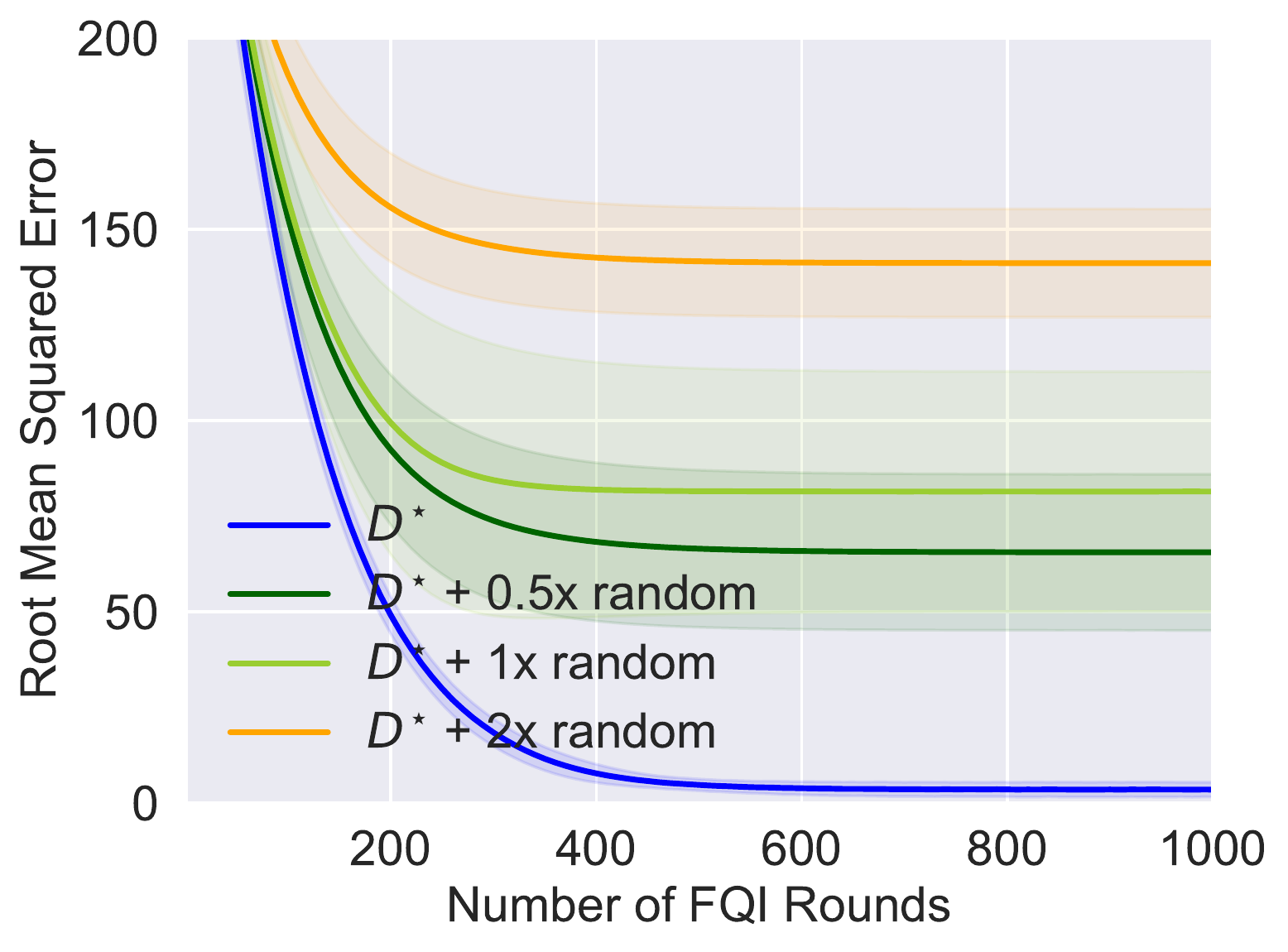}
}
\subfigure[CartPole-v0]{
\includegraphics[width=0.30\textwidth]{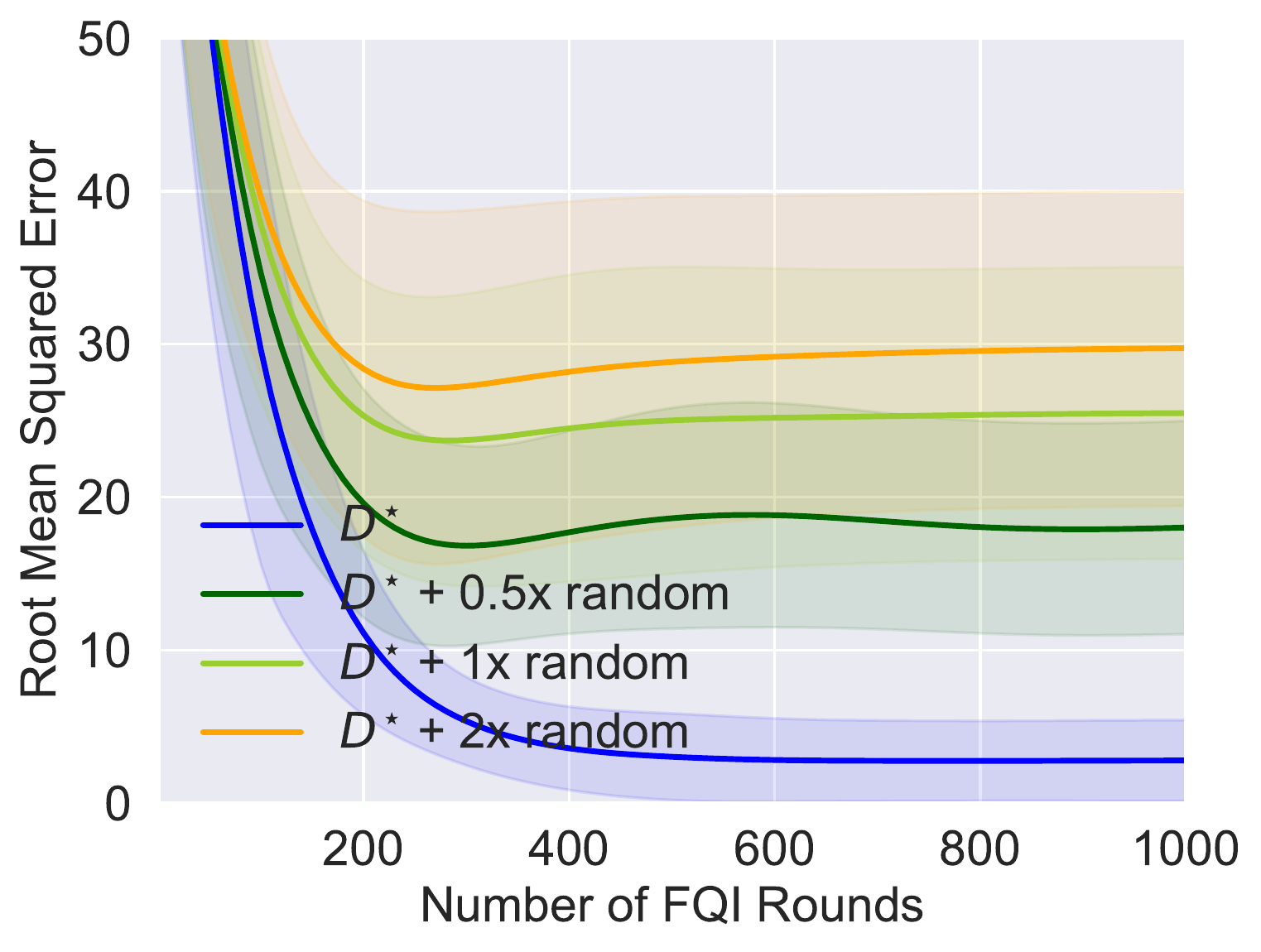}
}
\caption{Performance of FQI with random Fourier features and datasets induced by random policies.
See Figure~\ref{fig:rff_add} in Section~\ref{sec:exp_results} for results on other environments. 
}
\label{fig:rff}
\subfigure[$D^{\star}$ ]{
\includegraphics[width=0.30\textwidth]{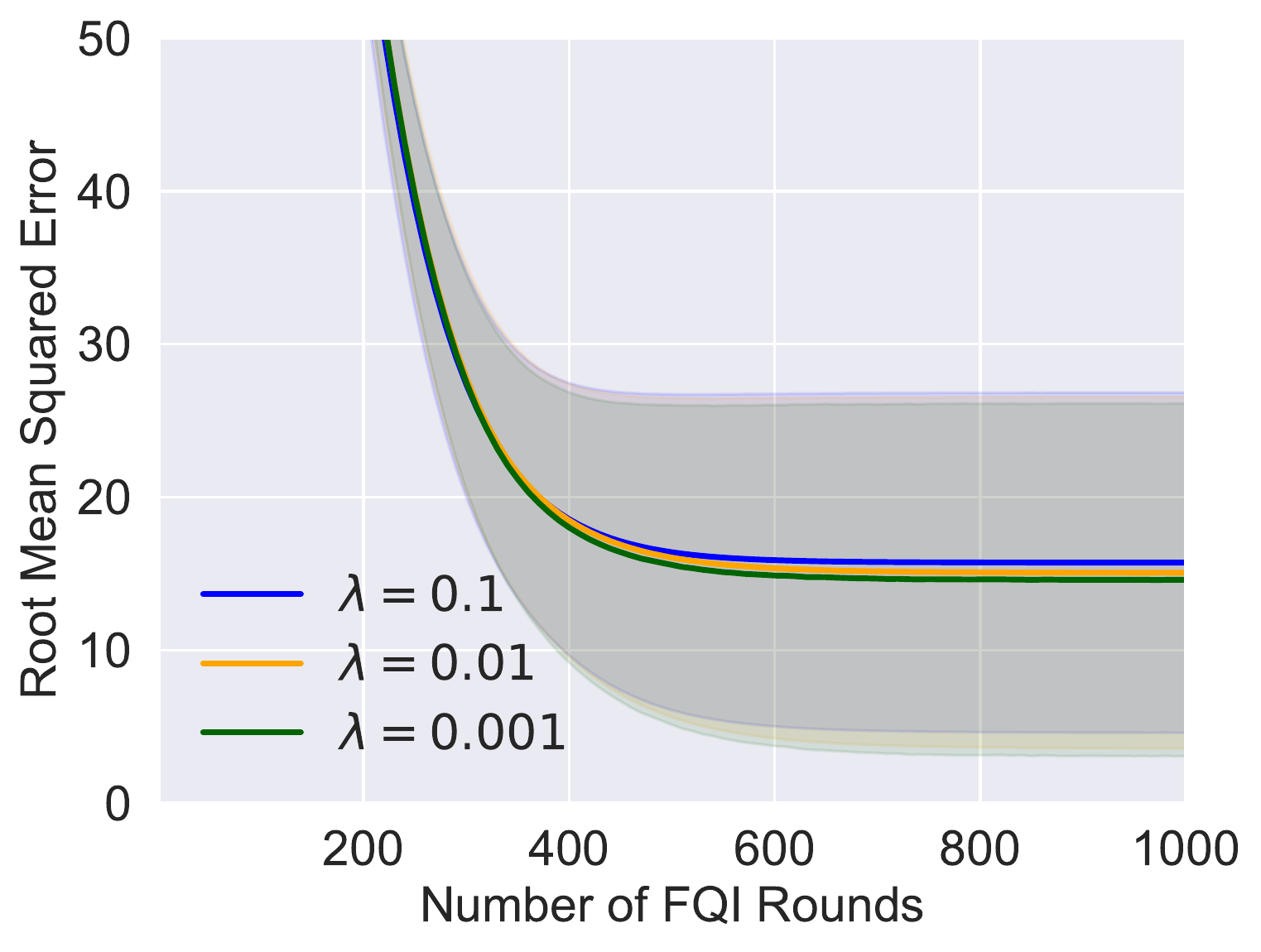}
}
\subfigure[$D^{\star}$ + 1x random]{
\includegraphics[width=0.30\textwidth]{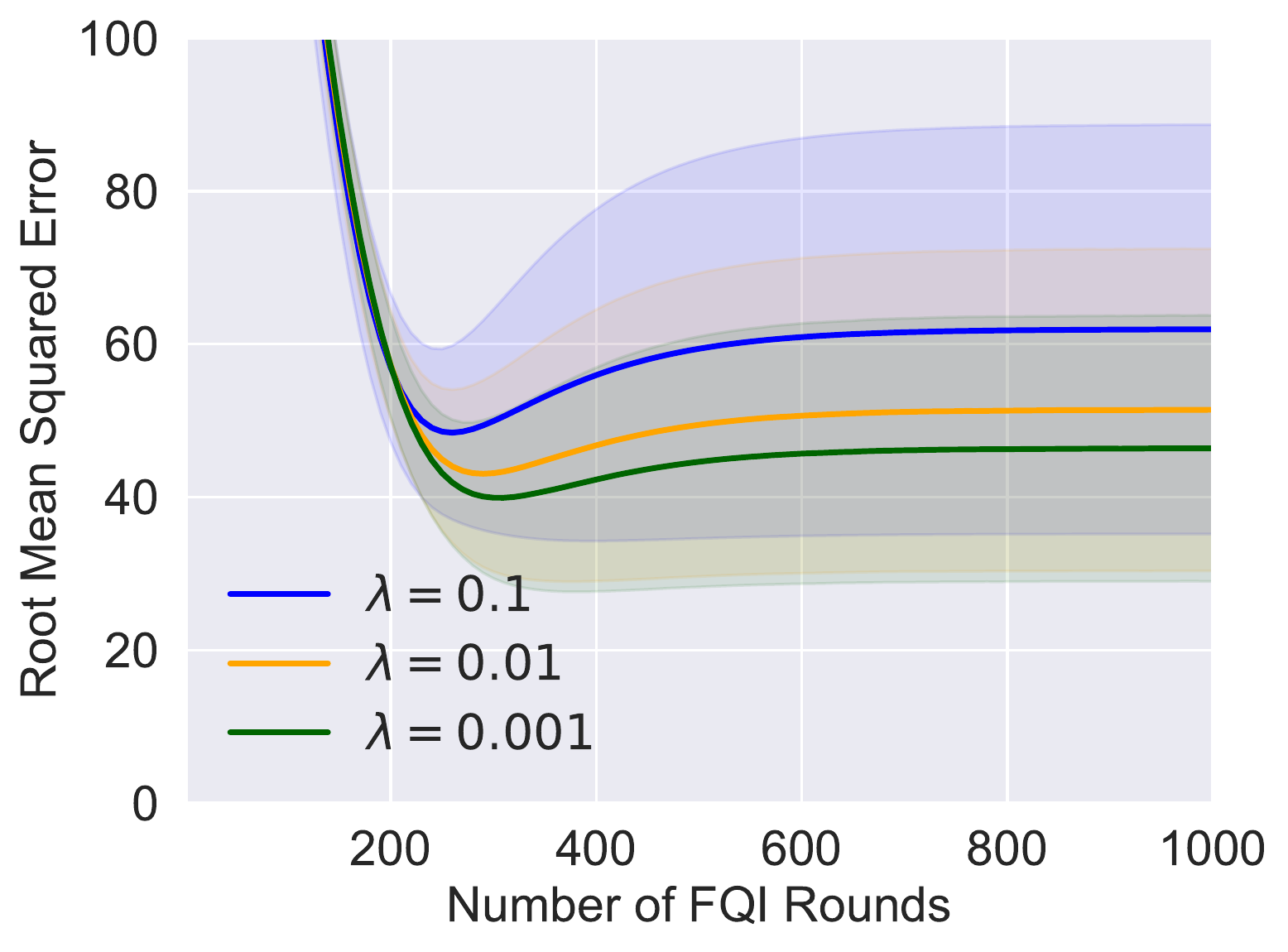}
}
\subfigure[$D^{\star}$+ 2x random]{
\includegraphics[width=0.30\textwidth]{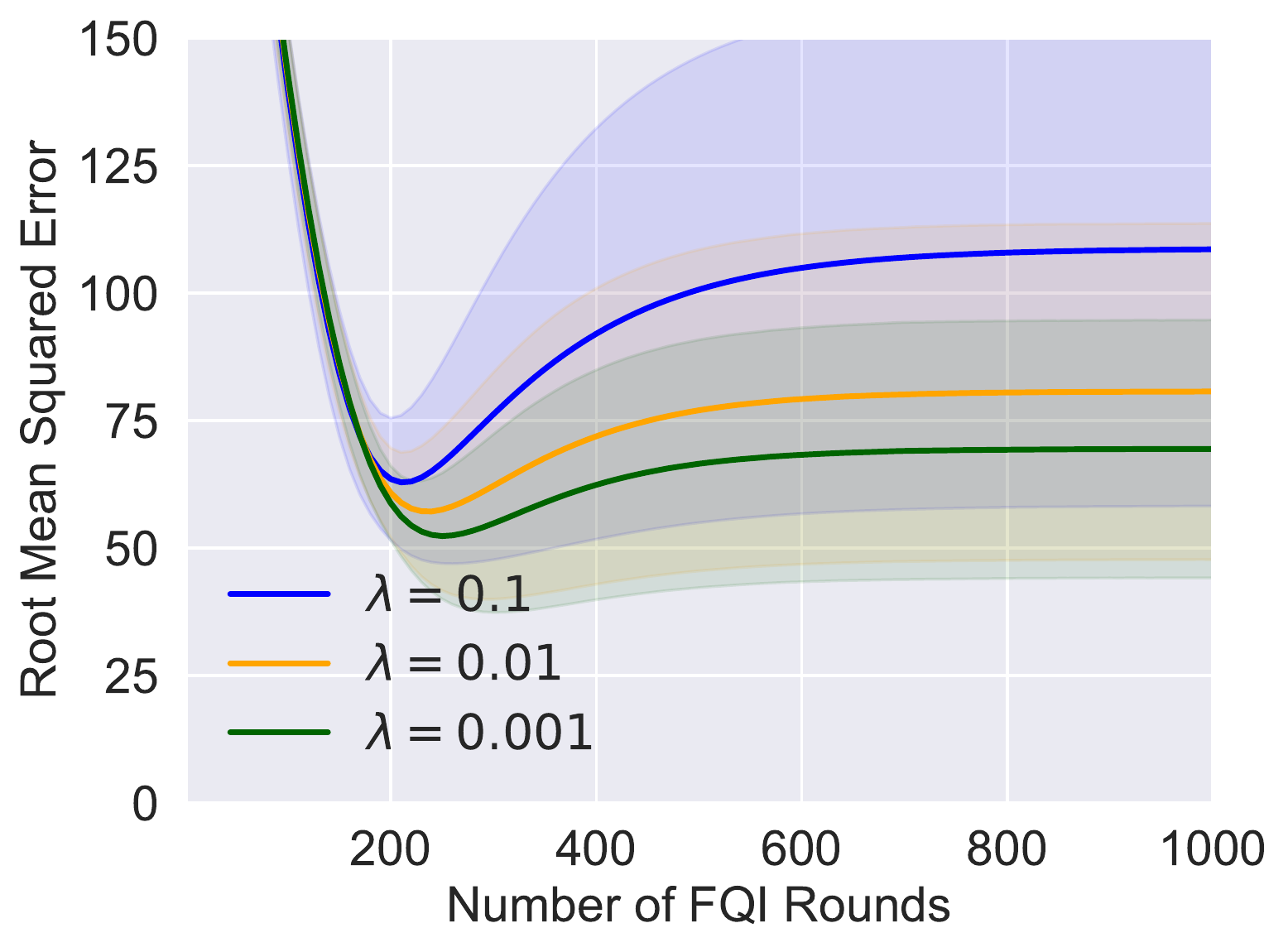}
}
\caption{Performance of FQI on Walker2d-v2, with features from pre-trained neural networks, datasets induced by random policies, and different regularization parameter $\lambda$.
See Figure~\ref{fig:ridge_ant} to Figure~\ref{fig:ridge_walker} in Section~\ref{sec:exp_results} for results on other environments. 
}
\label{fig:ridge}
\end{figure*}

\FloatBarrier
\section{Further Methodological Discussion}\label{sec:further}

We now expand on a few methodological motivations over the previous section, because these points merit further discussion.

\paragraph{Why did our methodology not compare to method $X$?}
As mentioned in Section~\ref{sec:related}, there are a number of algorithms for offline RL~\citep{fujimoto2019off, kumar2019stabilizing, wu2020behavior, jaques2020way, nachum2019algaedice, peng2019advantage,siegel2020keep, kumar2020conservative, agarwal2019striving, yu2020mopo, kidambi2020morel, rafailov2020offline}. In this work, our focus is on the more basic \emph{policy evaluation} problem, rather than \emph{policy improvement}, and because of this, we focus on FQI and LSTD due to that these other methodologies are designed for the further challenges associated with policy improvement.
Specifically, let us discuss two specific reasons why our current methodology is well motivated, in light of the broader set of neural approaches for offline policy improvement.
First, the main techniques in these prior empirical approaches largely focus on 
constraining the learned policy to be close to the behavioral policy, which is achieved by adding a penalty (by uncertainty quantification); in our setting, the target policy is given and such constraints are not needed due to that the target policy is not being updated. Furthermore, since we mainly focus on policy evaluation in this paper, it is not evident how to even impose such penalties (or constraints) for a fixed policy. Second, in this work, in order to better understand offline RL methods when combined with function approximation schemes, we decouple practical representational learning questions from the offline RL methods because this allows for us to directly examine if the given neural representation, which is sufficient to represent the value of the target policy, is sufficient for effective offline RL. By doing so, we can better understand if the error amplification effects (suggested by worst-case theoretical analysis) occur when combining pre-trained features with offline RL methods.  An interesting further question is how to couple the representational learning with offline RL (for the simpler question of policy evaluation) --- see the discussion in Section~\ref{sec:discussion}.

\paragraph{What about importance sampling approaches?}
One other approach we did not consider for policy evaluation is importance sampling~\citep{dudik2011doubly, mandel2014offline, thomas2015high, li2015toward, jiang2016doubly, thomas2016data, guo2017using, wang2017optimal, liu2018breaking, farajtabar2018more, xie2019towards, kallus2019efficiently, liu2019understanding, uehara2019minimax, kallus2020double, jiang2020minimax, feng2020accountable, yang2020off, nachum2019dualdice, zhang2020gendice, zhang2020gradientdice}.  
There is a precise sense in which importance sampling would in fact be
successful in our setting, and we view this as a point in favor of our
approach, which we now explain. First, to see that importance sampling
will be successful, note that due to the simplicity of our data
collection procedure, we have that all our experiments contain at
least $30\%$ of the trajectories collected from the target policy
itself, so if we have access to the correct importance weights, then
importance sampling can be easily seen to have low variance. However,
here, importance sampling has low variance only due to our highly
favorable (and unrealistic) scenario where our data collection has
such a high fraction of trajectories from the target policy itself. If
we consider a setting where the distribution shift is low in a
spectral sense, as in Section~\ref{sec:fqi},  but where the the data
collection does not have such a high fraction of trajectories
collected from the target policy, it is not evident how to effectively
implement the importance
sampling approach because there is no demonstrably (or provably) robust
method for combining function
approximation with importance sampling. In fact, methods which
combine importance sampling and function approximation are an
active and important area of research.

\section{Discussion and Implications}
\label{sec:discussion}

The main conclusion of this work, through extensive experiments on a
number of tasks, is that we observe substantial error
amplification, even when using pre-trained representations, even we
(unrealistically) tune hyper-parameters, regardless of what the distribution was
shifted to. Furthermore, this amplification even occurs under relatively mild distribution shift.
Our experiments complement the recent hardness results
in~\citet{wang2020statistical} showing the issue of error
amplification is a real practical concern. 

The implications of these results, both from a theoretical and an
empirical perspective, are that successful offline RL (where we seek
to go beyond the constraining, low distribution shift regime) requires substantially
stronger conditions beyond those which suffice for successful
supervised learning. These results also
raise a number of concerns about empirical practices employed in a
number of benchmarks. We now discuss these two points further.

\paragraph{Representation Learning in Offline RL.}
Our experiments demonstrate that the definition of a good
representation in offline RL is more subtle than in supervised
learning, since features extracted from pre-trained neural networks
are usually extremely effective in supervised learning.  Certainly,
features extracted from pre-trained neural networks and random
features satisfy the realizability assumption
(Assumption~\ref{assmp:realizability}) approximately.  However, from
our empirical findings, these features do not seem to satisfy strong
representation conditions
(e.g. Assumption~\ref{assumption:completeness}) that permits
sample-efficient offline RL.  This suggests that better representation
learning process (feature learning methods that differs from those
used in supervised learning) could be a route for achieving better
performance in offline RL.

\paragraph{Implications for Empirical Practices and Benchmarks.}
Our empirical findings suggests a closer inspection of certain 
empirical practices used in the evaluation of offline RL
algorithms.

\begin{itemize}
\item \textbf{Offline Data Collection.} Many empirical settings
  create an offline dataset under a distribution
  which contains a large fraction from the target
  policy itself (e.g. creating the dataset using an online RL
  algorithm). This may substantially limit the methodology to only
  testing algorithms in a low distribution shift regime; our
  results suggests this may not be reflective
  of what would occur with more realistic and diverse datasets.
\item \textbf{Hyperparameter Tuning in Offline RL.}
A number of methodologies tune hyperparameters using interactive
access to the environment, a practice that is clearly not possible with the
given offline dataset (e.g. see ~\citep{paine2020hyperparameter} for
further discussion). The instability of hyperparameter tuning, as
observed in our experiments, suggests that hyperparameter tuning in
offline RL may be a substantial hurdle.
\end{itemize}

Finally, we should remark that the broader motivation of our results
(and this discussion)
is to help with advancing
the field of offline RL through better linking our theoretical
understanding with the empirical practices. It is also worth noting
that there are notable empirical
successes in more realistic settings, e.g.~\citep{mandel2014offline,
  youtube}.

\section*{Acknowledgements}
The authors would like to thank Alekh Agarwal, Akshay Krishnamurthy,
Aditya Kusupati, and Nan Jiang for helpful discussions.
Sham M. Kakade acknowledges funding from the ONR award N00014-18-1-2247. 
Ruosong Wang and Ruslan Salakhutdinov are supported in part by NSF IIS1763562, AFRL CogDeCON FA875018C0014, and DARPA SAGAMORE HR00111990016.
Part of this work was done while Ruosong Wang was visiting the Simons Institute for the Theory of Computing.

\bibliography{bib}
\bibliographystyle{abbrvnat}
\appendix
\section{Omitted Proofs}\label{sec:proof}
\subsection{Proof of Lemma~\ref{lem:upper_main}}
Clearly, 
\begin{align*}
\hat{\theta}_t
 &= \hat{\Lambda}^{-1} \left(\frac{1}{N}\sum_{i = 1}^{N} \phi(s_i, a_i) \cdot(r_i + \gamma \hat{V}_{t - 1}(\overline{s}_i))\right)\\
 &= \hat{\Lambda}^{-1} \left(\frac{1}{N}\sum_{i = 1}^{N} \phi(s_i, a_i) \cdot(r_i + \gamma \hat{Q}_{t - 1}(\overline{s}_i, \pi(\overline{s}_i)))\right)\\ 
&= \hat{\Lambda}^{-1} \left(\frac{1}{N}\sum_{i = 1}^{N} \phi(s_i, a_i) \cdot(r_i + \gamma \phi(\overline{s}_i, \pi(\overline{s}_i))^{\top} \hat{\theta}_{t - 1})\right)\\
&= \hat{\Lambda}^{-1} \left(
\frac{1}{N}\sum_{i = 1}^{N} \phi(s_i, a_i) \cdot(r_i + \gamma \phi(\overline{s}_i, \pi(\overline{s}_i))^{\top} \theta^*))
+ 
\frac{1}{N} \sum_{i = 1}^{N} \phi(s_i, a_i) \cdot \gamma\phi(\overline{s}_i, \pi(\overline{s}_i))^{\top} (\hat{\theta}_{t - 1} - \theta^*)
)\right)\\
&= \hat{\Lambda}^{-1} \left(
\frac{1}{N}\sum_{i = 1}^{N} \phi(s_i, a_i) \cdot(r_i + \gamma \phi(\overline{s}_i, \pi(\overline{s}_i))^{\top} \theta^*))\right)
+ 
\gamma \hat{\Lambda}^{-1} \left(
\frac{1}{N}\sum_{i = 1}^{N} \phi(s_i, a_i) \cdot \phi(\overline{s}_i, \pi(\overline{s}_i))^{\top} (\hat{\theta}_{t - 1} - \theta^*)
)\right).
 \end{align*}
 For the first term, we have
 \begin{align*}
 &  \hat{\Lambda}^{-1} \left(\frac{1}{N}\sum_{i = 1}^{N} \phi(s_i, a_i) \cdot(r_i + \gamma \phi(\overline{s}_i, \pi(\overline{s}_i))^{\top} \theta^*))\right) \\
  = &\hat{\Lambda}^{-1}  \left(\frac{1}{N}\sum_{i = 1}^{N} \phi(s_i, a_i) \cdot(r_i + \gamma Q^{\pi}(\overline{s}_i, \pi(\overline{s}_i)))\right) \\
 = &\hat{\Lambda}^{-1}  \left(\frac{1}{N}\sum_{i = 1}^{N} \phi(s_i, a_i) \cdot(r_i + \gamma V^{\pi}(\overline{s}_i))\right) \\
 =& \hat{\Lambda}^{-1}  \left(\frac{1}{N}\sum_{i = 1}^{N} \phi(s_i, a_i) \cdot(Q^{\pi}(s_i, a_i) + \gamma \zeta_i)\right) \\
 =&\gamma \hat{\Lambda}^{-1} \cdot \frac{1}{N}  \sum_{i = 1}^{N} \phi(s_i, a_i) \cdot \zeta_i +  \hat{\Lambda}^{-1} \cdot \frac{1}{N}\sum_{i = 1}^{N} \phi(s_i, a_i) \cdot  \phi(s_i, a_i)^{\top} \cdot \theta^*\\
 =&\gamma \hat{\Lambda}^{-1} \cdot \frac{1}{N} \sum_{i = 1}^{N} \phi(s_i, a_i) \cdot \zeta_i +  \hat{\Lambda}^{-1} \cdot \frac{1}{N}(\hat{\Phi}^{\top}\hat{\Phi}) \cdot \theta^*\\
 = &\frac{\gamma}{N}   \hat{\Lambda}^{-1} \hat{\Phi}^{\top} \zeta+ \theta^* - \lambda \hat{\Lambda}^{-1}\theta^*.
 \end{align*}
 Therefore,
\begin{align*}
\hat{\theta}_T - \theta^*
&= \left(\frac{\gamma}{N} \hat{\Lambda}^{-1} \hat{\Phi}^{\top} \zeta - \lambda \hat{\Lambda}^{-1}\theta^*\right)+ \gamma \hat{\Lambda}^{-1} \frac{\Phi^{\top}\overline{\Phi}}{N} \cdot (\hat{\theta}_{T - 1} - \theta^*)\\
&=\left(\frac{\gamma}{N} \hat{\Lambda}^{-1} \hat{\Phi}^{\top} \zeta - \lambda \hat{\Lambda}^{-1}\theta^*\right)
+ \gamma \hat{\Lambda}^{-1}\frac{\Phi^{\top}\overline{\Phi}}{N} \left(\frac{\gamma}{N} \hat{\Lambda}^{-1} \hat{\Phi}^{\top} \zeta - \lambda \hat{\Lambda}^{-1}\theta^*\right)\\
&+ \left( \gamma \hat{\Lambda}^{-1}\frac{\Phi^{\top}\overline{\Phi}}{N} \right)^2(\hat{\theta}_{T - 2} - \theta^*)\\
& = \ldots\\
& = \sum_{t = 1}^T \left(\gamma \hat{\Lambda}^{-1}\frac{\Phi^{\top}\overline{\Phi}}{N}\right)^{t - 1} \cdot  \left(\frac{\gamma}{N} \hat{\Lambda}^{-1}\Phi^{\top} \zeta - \lambda \hat{\Lambda}^{-1}\theta^*\right) + \left(\gamma \hat{\Lambda}^{-1}\frac{\Phi^{\top}\overline{\Phi}}{N}\right)^T  \theta^*.
\end{align*}

\subsection{Proof of Lemma~\ref{lem:fqi_completeness}}
\newcommand{\cross}{\Lambda_{\mathrm{cross}}}
In this proof, we set the number of rounds $T = C_T \log(d / (\varepsilon(1 - \gamma))) / (1 - \gamma)$ for a sufficiently large constant $C_T$.
We set $\lambda$ so that $\lambda \le \varepsilon(1 - \gamma)\sigma_{\min}(\Lambda)/(8T^2\sqrt{d})$ and $\lambda \le \sigma_{\min}^3(\Lambda) / (20T)$.
We also set $N$ so that $N \ge C_NT^4d \log(1 / \delta)/(\varepsilon^2 \sigma_{\min}(\Lambda) (1 - \gamma)^2)$ and $N \ge C_N T^2d^2 \log(d / \delta) /  \sigma_{\min}^6(\Lambda)$ for a sufficiently large constant $C_N$. 
We use $L$ to denote $\hat{\Lambda}^{-1}\Phi^{\top}\overline{\Phi} / N$.
We use $\cross$ to denote \[\expect_{(s, a) \sim \mu, s' \sim P(\cdot \mid s, a)}[\phi(s, a)\phi(s', \pi(s'))^{\top}].\]
We use $\Phi_{\mathrm{all}} \in \mathbb{R}^{|\states| |\actions| \times d}$ to denote a matrix whose row indexed by $(s, a) \in \states \times \actions$ is $\phi(s, a) \in \mathbb{R}^d$.
We use $D^{\mu} \in \mathbb{R}^{|\states| |\actions| \times |\states| |\actions|}$ to denote a diagonal matrix whose diagonal entry indexed by $(s, a)$ is $\mu(s, a)$. 
We use $P^{\pi} \in \mathbb{R}^{|\states| |\actions| \times |\states| |\actions|}$ to denote a matrix where
\[
P^{\pi}((s, a), (s', a')) = \begin{cases}
P(s' \mid s, a) & a' = \pi(s)\\
0 & \text{otherwise} 
\end{cases}.
\]

Our first lemma shows that under Assumption~\ref{assumption:completeness}, $\Lambda^{-1} \cross$ satisfies certain ``non-expansiveness'' property. 

\begin{lemma}\label{lem:non_expansive}
Under Assumption~\ref{assumption:completeness}, for any integer $t \ge 0$, $x \in \mathbb{R}^d$ and $(s, a) \in \states \times \actions$, \[|\phi(s, a)(\Lambda^{-1} \cross)^t x| \le \|x\|_2.\] Moreover, 
\[\|(\Lambda^{-1} \cross)^t\|_2 \le \sigma_{\min}^{-1/2}(\Lambda).\]
\end{lemma}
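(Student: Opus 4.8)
The plan is to establish the claim first in an ``infinite-data'' world where expectations replace empirical averages, exactly as in the proof sketch of Lemma~\ref{lem:fqi_completeness}. Concretely, I would first show that $\Phi_{\mathrm{all}}(\Lambda^{-1}\cross)x = P^{\pi}\Phi_{\mathrm{all}} x'$ for a suitable $x'$; more precisely, observe that $\Lambda = \Phi_{\mathrm{all}}^{\top}D^{\mu}\Phi_{\mathrm{all}}$ and $\cross = \Phi_{\mathrm{all}}^{\top}D^{\mu}P^{\pi}\Phi_{\mathrm{all}}$, so that for any $x$, writing $y := \Phi_{\mathrm{all}}(\Lambda^{-1}\cross)x$, we have $y = \Phi_{\mathrm{all}}(\Phi_{\mathrm{all}}^{\top}D^{\mu}\Phi_{\mathrm{all}})^{-1}\Phi_{\mathrm{all}}^{\top}D^{\mu}(P^{\pi}\Phi_{\mathrm{all}}x)$, i.e.\ $y$ is the $D^{\mu}$-weighted least-squares projection of $P^{\pi}\Phi_{\mathrm{all}}x$ onto $\mathrm{colspan}(\Phi_{\mathrm{all}})$. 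By Assumption~\ref{assumption:completeness} applied coordinate-wise, $P^{\pi}\Phi_{\mathrm{all}}x$ already lies in $\mathrm{colspan}(\Phi_{\mathrm{all}})$ (there exists $x'$ with $\Phi_{\mathrm{all}}x' = P^{\pi}\Phi_{\mathrm{all}}x$, because each column of $\Phi_{\mathrm{all}}x$, being a vector of the form $\phi(\cdot,\cdot)^{\top}\theta$ over all state-action pairs, has Bellman-backup $\expect[r + \gamma\phi(s',\pi(s'))^{\top}\theta]$ equal to $\phi(\cdot,\cdot)^{\top}\theta'$ up to the reward term — here one must be a little careful since completeness as stated includes the reward; I would apply it with the $\theta$ corresponding to $x$ and absorb/ignore the additive reward contribution since we are tracking only the $P^{\pi}$-part, or equivalently note that linear closure of the features under $\phi^{\top}\theta \mapsto \gamma\, \expect[\phi(s',\pi(s'))^{\top}\theta]$ follows from the assumption by subtracting off the $\theta=0$ case). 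Hence the projection acts as the identity: $y = P^{\pi}\Phi_{\mathrm{all}}x$, so $\Phi_{\mathrm{all}}(\Lambda^{-1}\cross)x = P^{\pi}\Phi_{\mathrm{all}}x$. Iterating, $\Phi_{\mathrm{all}}(\Lambda^{-1}\cross)^t x = (P^{\pi})^t\Phi_{\mathrm{all}}x$.

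From here the first bound is immediate: $(P^{\pi})^t$ is a sub-stochastic matrix (each row sums to at most $1$, entries nonnegative), so $\|(P^{\pi})^t v\|_{\infty} \le \|v\|_{\infty}$ for any $v$; applying this with $v = \Phi_{\mathrm{all}}x$ and reading off the coordinate indexed by $(s,a)$ gives $|\phi(s,a)^{\top}(\Lambda^{-1}\cross)^t x| \le \|\Phi_{\mathrm{all}}x\|_{\infty} = \sup_{(s,a)}|\phi(s,a)^{\top}x| \le \|x\|_2$, using $\|\phi(s,a)\|_2 \le 1$ and Cauchy--Schwarz. For the operator-norm bound on $(\Lambda^{-1}\cross)^t$ itself, I would argue as follows: let $w = (\Lambda^{-1}\cross)^t x$; for every $(s,a)$ we have $|\phi(s,a)^{\top}w| \le \|x\|_2$, and in particular $\expect_{(s,a)\sim\mu}[(\phi(s,a)^{\top}w)^2] \le \|x\|_2^2$, i.e.\ $w^{\top}\Lambda w \le \|x\|_2^2$, hence $\sigma_{\min}(\Lambda)\|w\|_2^2 \le w^{\top}\Lambda w \le \|x\|_2^2$, giving $\|w\|_2 \le \sigma_{\min}^{-1/2}(\Lambda)\|x\|_2$ and therefore $\|(\Lambda^{-1}\cross)^t\|_2 \le \sigma_{\min}^{-1/2}(\Lambda)$.

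The main obstacle I anticipate is the careful handling of the reward term in Assumption~\ref{assumption:completeness}: the assumption gives closure of features under the \emph{full} Bellman operator (including $r$), whereas the matrix $\cross$ captures only the $\gamma P^{\pi}$ part. The clean fix is to invoke completeness twice — once at a given $\theta$ and once at $\theta = 0$ (which yields that $(s,a)\mapsto\expect[r\mid s,a]$ is itself linear in $\phi$) — and subtract, concluding that $(s,a)\mapsto\gamma\,\expect_{s'\sim P(\cdot\mid s,a)}[\phi(s',\pi(s'))^{\top}\theta]$ is in the span of $\phi$ for every $\theta$; scaling out the harmless factor $\gamma$ then gives exactly the closure of $\mathrm{colspan}(\Phi_{\mathrm{all}})$ under $P^{\pi}$ that the projection argument needs. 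A secondary point to get right is that this lemma is stated with exact population quantities ($\Lambda$, $\cross$), so no concentration is needed here — the finite-sample passage from $L$ to $\Lambda^{-1}\cross$ is deferred to the subsequent lemmas in the proof of Lemma~\ref{lem:fqi_completeness}, and I would keep that separation clean.
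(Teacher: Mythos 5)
Your proposal is correct and follows essentially the same route as the paper's proof: use policy completeness to show that $\mathrm{colspan}(\Phi_{\mathrm{all}})$ is closed under $P^{\pi}$, telescope to get $\Phi_{\mathrm{all}}(\Lambda^{-1}\cross)^t x = (P^{\pi})^t\Phi_{\mathrm{all}}x$, bound via the sup-norm non-expansiveness of $P^{\pi}$ and $\|\phi(s,a)\|_2\le 1$, and then convert to the operator-norm bound through $w^{\top}\Lambda w \le \|x\|_2^2$ and $\sigma_{\min}(\Lambda)$ (the paper phrases this last step as a contradiction, you argue it directly — same content). Your explicit handling of the reward term in Assumption~\ref{assumption:completeness} (applying it at $\theta$ and at $\theta=0$ and subtracting to isolate closure under $\gamma P^{\pi}$) is a small but genuine point of care that the paper's proof silently elides.
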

\begin{proof}
Clearly, 
\[
\Lambda^{-1} \cross = (\Phi_{\mathrm{all}}^{\top} D^{\mu} \Phi_{\mathrm{all}}^{\top}) \Phi_{\mathrm{all}}^{\top} D^{\mu} P^{\pi} \Phi_{\mathrm{all}}.
\]
For any $x \in \mathbb{R^d}$ with $\|x\|_2 = 1$, $(s, a) \in \states \times \actions$, and integer $t \ge 0$, we have
\[
\Phi_{\mathrm{all}} (\Lambda^{-1} \cross)^t x = \Phi_{\mathrm{all}}((\Phi_{\mathrm{all}}^{\top} D^{\mu} \Phi_{\mathrm{all}}^{\top})^{-1} \Phi_{\mathrm{all}}^{\top} D^{\mu} P^{\pi} \Phi_{\mathrm{all}})^t x.
\]
By Assumption~\ref{assumption:completeness}, there exists $x^{(1)}, x^{(2)}, \ldots, x^{(t)} \in \mathbb{R}^d$ such that 
\[
P^{\pi} \Phi_{\mathrm{all}} x = \Phi_{\mathrm{all}} x^{(1)}, P^{\pi} \Phi_{\mathrm{all}} x^{(1)} = \Phi_{\mathrm{all}} x^{(2)}, \ldots, P^{\pi} \Phi_{\mathrm{all}} x^{(t - 1)} = \Phi_{\mathrm{all}} x^{(t)}.
\]
Therefore,
\begin{align*}
\Phi_{\mathrm{all}} (\Lambda^{-1} \cross)^t x =& \Phi_{\mathrm{all}}((\Phi_{\mathrm{all}}^{\top} D^{\mu} \Phi_{\mathrm{all}}^{\top})^{-1} \Phi_{\mathrm{all}}^{\top} D^{\mu} P^{\pi} \Phi_{\mathrm{all}})^{t-1} (\Phi_{\mathrm{all}}^{\top} D^{\mu} \Phi_{\mathrm{all}}^{\top})^{-1} \Phi_{\mathrm{all}}^{\top} D^{\mu} P^{\pi} \Phi_{\mathrm{all}} x\\
= & \Phi_{\mathrm{all}}((\Phi_{\mathrm{all}}^{\top} D^{\mu} \Phi_{\mathrm{all}}^{\top})^{-1} \Phi_{\mathrm{all}}^{\top} D^{\mu} P^{\pi} \Phi_{\mathrm{all}})^{t-1}  x^{(1)} \\
= & \Phi_{\mathrm{all}}((\Phi_{\mathrm{all}}^{\top} D^{\mu} \Phi_{\mathrm{all}}^{\top})^{-1} \Phi_{\mathrm{all}}^{\top} D^{\mu} P^{\pi} \Phi_{\mathrm{all}})^{t-2}  x^{(2)} \\
= & \ldots\\
= & \Phi_{\mathrm{all}} x^{(t)}.
\end{align*}
On the other hand, for any $u \in \mathbb{R}^{|\states||\actions|}$, $\|P^{\pi}u\|_{\infty} \le \|u\|_{\infty}$,
which implies
\begin{align*}
&\|\Phi_{\mathrm{all}} (\Lambda^{-1} \cross)^t x\|_{\infty} = \|\Phi_{\mathrm{all}} x^{(t)}\|_{\infty} = \|P^{\pi} \Phi_{\mathrm{all}} x^{(t - 1)}\|_{\infty} \le \|\Phi_{\mathrm{all}} x^{(t - 1)}\|_{\infty}  \\
= &\|P^{\pi} \Phi_{\mathrm{all}} x^{(t - 2)}\|_{\infty} 
\le \ldots \le \|\Phi_{\mathrm{all}} x\|_{\infty}. 
\end{align*}
Therefore, for any $(s, a) \in \states \times \actions$,
\begin{equation}\label{eqn:norm_feature}
|\phi(s, a) (\Lambda^{-1} \cross)^t x| \le \|\Phi_{\mathrm{all}} x\|_{\infty} \le \|x\|_2 = 1
\end{equation}
since $\|\phi(s, a)\|_2 \le 1$ for all $(s, a) \in \states \times \actions$. 

Now we show that for any $y \in \mathbb{R}^d$ with $\|y\|_2 = 1$, we have
\[
|y^{\top}(\Lambda^{-1} \cross)^t x|  \le \sigma_{\min}^{-1/2}(\Lambda).
\]
Note that this implies $\|(\Lambda^{-1} \cross)^t x \|_2 \le \sigma_{\min}^{-1/2}(\Lambda)$ for all $x \in \mathbb{R}^d$ with $\|x\|_2 = 1$,
and therefore $\|(\Lambda^{-1} \cross)^t\|_2 \le \sigma_{\min}^{-1/2}(\Lambda)$.

Suppose for the sake of contradiction that there exists $y \in \mathbb{R}^d$ with $\|y\|_2 = 1$ such that
\[
|y^{\top}(\Lambda^{-1} \cross)^t x|   > \sigma_{\min}^{-1/2}(\Lambda).
\]
By Cauchy–Schwarz inequality, this implies
\[
\|(\Lambda^{-1} \cross)^t x\|_2 > \sigma_{\min}^{-1/2}(\Lambda).
\]
Moreover, by Equation~\eqref{eqn:norm_feature} and H\"older's inequality
\[
\|(\Lambda^{-1} \cross)^t x\|_{\Lambda}^2 = \sum_{(s, a) \in \states \times \actions} \mu(s, a) (\phi(s, a)^{\top} (\Lambda^{-1} \cross)^t x)^2 \le 1,
\]
which implies
\[
\sigma_{\min}(\Lambda) \le 1 / \|(\Lambda^{-1} \cross)^t x\|_2^2.
\]
The fact that $\|(\Lambda^{-1} \cross)^t x\|_2^2 > \sigma_{\min}^{-1}(\Lambda)$ leads to a contradiction. 
\end{proof}

Now we show that by taking the number of samples $N$ polynomially large, the matrix $L = \hat{\Lambda}^{-1}\Phi^{\top}\overline{\Phi} / N$ concentrates around $\Lambda^{-1} \cross$.

\begin{lemma}\label{lem:perturbation}
When $N \ge C_1 T^2d^2 \log(d / \delta) /  \sigma_{\min}^6(\Lambda) $ for some constant $C_1 > 0$, with probability $1 - \delta / 2$, $\|L - \Lambda^{-1} \cross\|_2 \le \sigma_{\min}(\Lambda) / T$.
\end{lemma}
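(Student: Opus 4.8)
The plan is to control $L=\hat{\Lambda}^{-1}\Phi^{\top}\overline{\Phi}/N$ by showing that the two empirical matrices it is built from concentrate around their population counterparts, and then propagating these two errors through the matrix inverse. Write $\hat{\Lambda}_0=\frac1N\sum_{i}\phi(s_i,a_i)\phi(s_i,a_i)^{\top}$, so that $\hat{\Lambda}=\hat{\Lambda}_0+\lambda I$ with $\expect[\hat{\Lambda}_0]=\Lambda$, and observe that $\expect\bigl[\frac1N\Phi^{\top}\overline{\Phi}\bigr]=\cross$ exactly by the definitions of $\overline{\Phi}$ and $\cross$. Since $\|\phi(s,a)\|_2\le 1$, each summand of either empirical average has operator norm at most $1$.

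First I would invoke a standard concentration bound for sums of i.i.d.\ matrices --- e.g.\ bounding each of the $d^2$ scalar entries by Hoeffding and taking a union bound, then using $\|A\|_2\le\|A\|_F\le d\max_{ij}|A_{ij}|$ (this crude step is what produces the $d^2$ factor in the sample size), or alternatively matrix Bernstein together with the Hermitian dilation for the non-symmetric matrix $\frac1N\Phi^{\top}\overline{\Phi}$ --- to conclude that, with probability at least $1-\delta/4$ each,
\[
\|\hat{\Lambda}_0-\Lambda\|_2\le\epsilon_N\quad\text{and}\quad\Bigl\|\tfrac1N\Phi^{\top}\overline{\Phi}-\cross\Bigr\|_2\le\epsilon_N,
\]
where $\epsilon_N$ is of order $\sqrt{d^2\log(d/\delta)/N}$. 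By the hypothesis $N\ge C_1T^2d^2\log(d/\delta)/\sigma_{\min}^6(\Lambda)$, choosing $C_1$ large enough forces $\epsilon_N\le\sigma_{\min}^3(\Lambda)/(20\,T)$, and in particular $\epsilon_N\le\sigma_{\min}(\Lambda)/4$ (using $\sigma_{\min}(\Lambda)\le\|\Lambda\|_2\le 1$ and $T\ge 1$).

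Next I would use this to show $\hat{\Lambda}$ is well conditioned. Since the proof of Lemma~\ref{lem:fqi_completeness} takes $\lambda\le\sigma_{\min}^3(\Lambda)/(20T)\le\sigma_{\min}(\Lambda)/4$, on the event $\|\hat{\Lambda}_0-\Lambda\|_2\le\sigma_{\min}(\Lambda)/4$ we get $\hat{\Lambda}\succeq\tfrac12\sigma_{\min}(\Lambda)I$, hence $\|\hat{\Lambda}^{-1}\|_2\le 2/\sigma_{\min}(\Lambda)$; also $\|\Lambda^{-1}\|_2=1/\sigma_{\min}(\Lambda)$ and $\|\hat{\Lambda}-\Lambda\|_2\le\epsilon_N+\lambda$. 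The resolvent identity $\hat{\Lambda}^{-1}-\Lambda^{-1}=\hat{\Lambda}^{-1}(\Lambda-\hat{\Lambda})\Lambda^{-1}$ then yields $\|\hat{\Lambda}^{-1}-\Lambda^{-1}\|_2\le 2(\epsilon_N+\lambda)/\sigma_{\min}^2(\Lambda)$. Finally I would combine everything through the splitting
\[
L-\Lambda^{-1}\cross=\hat{\Lambda}^{-1}\Bigl(\tfrac1N\Phi^{\top}\overline{\Phi}-\cross\Bigr)+\bigl(\hat{\Lambda}^{-1}-\Lambda^{-1}\bigr)\cross,
\]
and, using $\|\cross\|_2\le 1$ (each feature has norm at most $1$), bound
\[
\|L-\Lambda^{-1}\cross\|_2\le\frac{2}{\sigma_{\min}(\Lambda)}\epsilon_N+\frac{2}{\sigma_{\min}^2(\Lambda)}(\epsilon_N+\lambda)\le\frac{4}{\sigma_{\min}^2(\Lambda)}(\epsilon_N+\lambda).
\]
Since $\epsilon_N+\lambda\le\sigma_{\min}^3(\Lambda)/(10T)$ by construction, the right-hand side is at most $\sigma_{\min}(\Lambda)/T$, and a union bound over the two concentration events gives failure probability at most $\delta/2$.

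The conceptual content here is light, and I expect the only genuinely nonroutine point --- the \emph{main obstacle} --- to be obtaining matrix concentration for the \emph{non-symmetric} empirical cross-covariance $\frac1N\Phi^{\top}\overline{\Phi}$, which is not a sum of PSD matrices; this is handled by the dilation trick (or by the entrywise/union-bound route, which also explains the crude $d^2$ factor in the stated sample size). Everything else is bookkeeping: tracking constants through the resolvent identity and checking that the chosen $\lambda$ and $N$ make all error terms $O(\sigma_{\min}^3(\Lambda)/T)$.
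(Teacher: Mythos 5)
Your proposal is correct and follows essentially the same route as the paper: entrywise Chernoff/Hoeffding concentration with a union bound and the Frobenius-norm domination of the operator norm (the source of the $d^2$ factor), a perturbation bound for $\hat{\Lambda}^{-1}$ (you use the resolvent identity where the paper uses a Neumann series, which is the same calculation), and a two-term splitting of $L-\Lambda^{-1}\Lambda_{\mathrm{cross}}$ that is algebraically equivalent to the paper's three-term one; the constants check out. No gaps.
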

\begin{proof}
We show that with probability $1 - \delta / 2$, $\|\hat{\Lambda} - \Lambda\|_2  \le \sigma_{\min}^3(\Lambda) / (20T)$ and $\|\Phi^{\top}\overline{\Phi} / N - \cross\|_2  \le \sigma_{\min}^3(\Lambda) / (20T)$.
For each index $(i, j) \in \{1, 2, \ldots, d\} \times \{1, 2, \ldots, d\}$, by Chernoff bound, with probability at least $1 - \delta / (2d^2)$, we have
\[
|(\Phi^{\top}\Phi/N)_{i, j} - \Lambda_{i, j}| \le \sigma_{\min}^3(\Lambda) / (20Td)
\]
and 
\[
|(\Phi^{\top}\overline{\Phi}/N)_{i, j} - (\cross)_{i, j}| \le \sigma_{\min}^3(\Lambda) / (20Td).
\]
By union bound and the fact that the operator norm of a matrix is upper bounded its Frobenius norm, with probability $1 - \delta / 2$, 
we have
\[
\|\Phi^{\top}\Phi/N- \Lambda\|_2  \le \sigma_{\min}^3(\Lambda) / (20T)
\]
and
\[
\|\Phi^{\top}\overline{\Phi} / N - \cross\|_2  \le \sigma_{\min}^3(\Lambda) / (20T).
\]
Since $\lambda \le \sigma_{\min}^3(\Lambda) / (20T)$,
\[
\|\hat{\Lambda}- \Lambda\|_2 \le \sigma_{\min}^3(\Lambda) / (10T).
\]
Note that
\[
\hat{\Lambda}^{-1} 
= (\Lambda + (\hat{\Lambda} - \Lambda))^{-1} 
= (\Lambda(I + \Lambda^{-1}(\hat{\Lambda} - \Lambda)))^{-1} 
=  (I + \Lambda^{-1} (\hat{\Lambda} - \Lambda))^{-1}\Lambda^{-1} .
\]
By Neumann series,
\[
(I + \Lambda^{-1} (\hat{\Lambda} - \Lambda))^{-1} = I - \Lambda^{-1} (\hat{\Lambda} - \Lambda) + (\Lambda^{-1} (\hat{\Lambda} - \Lambda))^2 + \ldots,
\]
and therefore
\[
\|I - (I + \Lambda^{-1} (\hat{\Lambda} - \Lambda))^{-1}\|_2 \le \sigma_{\min}^{-1} \cdot \sigma_{\min}^3(\Lambda) / (5T) =\sigma_{\min}^2(\Lambda)/ (5T)  .
\]
Hence,
\[
\|\hat{\Lambda}^{-1} - \Lambda^{-1}\|_2 = \|  ((I + \Lambda^{-1} (\hat{\Lambda} - \Lambda))^{-1} - I) \Lambda^{-1}\| \le  \sigma_{\min}^2(\Lambda) / (5T) \cdot 1 / \sigma_{\min}  =  \sigma_{\min} (\Lambda) / (5T).
\]
Finally,
\begin{align*}
\|L - \Lambda^{-1} \cross\|_2  &= \| \hat{\Lambda}^{-1}\Phi^{\top}\overline{\Phi} / N - \Lambda^{-1} \cross\|_2 \\
& \le \|(\hat{\Lambda}^{-1} - \Lambda^{-1})\cross\|_2 + \|\Lambda^{-1} (\Phi^{\top}\overline{\Phi} / N - \cross)\|_2 + \|(\hat{\Lambda}^{-1} - \Lambda^{-1}) (\Phi^{\top}\overline{\Phi} / N - \cross)\|_2\\
& \le \sigma_{\min} (\Lambda) / T.
\end{align*}
\end{proof}
Now we use Lemma~\ref{lem:non_expansive} and Lemma~\ref{lem:perturbation} to prove the following lemma.
\begin{lemma}\label{lem:non_expansive_final}
Conditioned on the event in Lemma~\ref{lem:perturbation}, for any integer $0 \le t \le T$, $x \in \mathbb{R}^d$ and $(s, a) \in \states \times \actions$, $|\phi(s, a)^{\top} L^t x| \le (t + 1) \|x\|_2$. 
\end{lemma}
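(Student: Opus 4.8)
The plan is to prove the bound by induction on $t$. The base case $t = 0$ is immediate, since $|\phi(s,a)^\top x| \le \|\phi(s,a)\|_2\|x\|_2 \le \|x\|_2$. For the inductive step, I would write $L = \Lambda^{-1}\cross + E$, where, by Lemma~\ref{lem:perturbation} and the conditioning, $\|E\|_2 \le \sigma_{\min}(\Lambda)/T$, and then use the telescoping identity $L^t = (\Lambda^{-1}\cross)^t + \sum_{k=0}^{t-1} (\Lambda^{-1}\cross)^k E\, L^{t-1-k}$, which follows from $L^t - (\Lambda^{-1}\cross)^t = \sum_{k=0}^{t-1} (\Lambda^{-1}\cross)^k (L - \Lambda^{-1}\cross) L^{t-1-k}$ after a one-line algebraic check.

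Applying $\phi(s,a)^\top$ and expanding, the first term contributes at most $\|x\|_2$ to $|\phi(s,a)^\top L^t x|$ directly by Lemma~\ref{lem:non_expansive}. For each summand, I would invoke Lemma~\ref{lem:non_expansive} with the matrix $(\Lambda^{-1}\cross)^k$ applied to the vector $y := E\, L^{t-1-k} x$, giving $|\phi(s,a)^\top (\Lambda^{-1}\cross)^k E\, L^{t-1-k} x| \le \|y\|_2 \le \|E\|_2 \cdot \|L^{t-1-k} x\|_2$. So everything reduces to controlling $\|L^{t-1-k} x\|_2$ for exponents $t-1-k < t$.

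The key step — and the place where the argument has real content — is that a naive submultiplicative bound $\|L^j\|_2 \le (\|\Lambda^{-1}\cross\|_2 + \|E\|_2)^j$ grows geometrically and would be useless; instead I would route the operator-norm control through the inductive hypothesis and the same Hölder/spectral trick used in Lemma~\ref{lem:non_expansive}. Concretely, for $j < t$ the inductive hypothesis gives $|\phi(s,a)^\top L^j x| \le (j+1)\|x\|_2$ for every $(s,a)$, so $\|L^j x\|_\Lambda^2 = \sum_{(s,a) \in \states \times \actions} \mu(s,a)\,(\phi(s,a)^\top L^j x)^2 \le (j+1)^2 \|x\|_2^2$; combined with $\|v\|_\Lambda^2 \ge \sigma_{\min}(\Lambda)\|v\|_2^2$, this yields $\|L^j x\|_2 \le (j+1)\|x\|_2 / \sqrt{\sigma_{\min}(\Lambda)}$. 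Plugging $j = t-1-k$ and using $\|E\|_2 \le \sigma_{\min}(\Lambda)/T$ together with $\sigma_{\min}(\Lambda) \le 1$ (which holds since $\|\phi(s,a)\|_2 \le 1$ forces $\Lambda \preceq I$), each summand is at most $\frac{t-k}{T}\|x\|_2$. Summing over $k = 0, \dots, t-1$ gives $\frac{t(t+1)}{2T}\|x\|_2 \le \frac{t+1}{2}\|x\|_2$, where the last inequality uses $t \le T$. Hence $|\phi(s,a)^\top L^t x| \le \|x\|_2 + \frac{t+1}{2}\|x\|_2 \le (t+1)\|x\|_2$ for $t \ge 1$, closing the induction. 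The main obstacle is exactly this: preventing geometric blow-up of $\|L^j\|_2$ by bootstrapping the inductive hypothesis through the coverage constant $\sigma_{\min}(\Lambda)$, and exploiting $t \le T$ so that the quadratic-in-$t$ accumulation of perturbation errors collapses back to a linear bound.
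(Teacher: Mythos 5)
Your proof is correct, and it reaches the same $(t+1)\|x\|_2$ bound by a genuinely different decomposition. The paper expands $L^t = (\Lambda^{-1}\Lambda_{\mathrm{cross}} + E)^t$ fully into $2^t$ non-commutative words, groups the $\binom{t}{i}$ words containing $i$ copies of the perturbation $E = L - \Lambda^{-1}\Lambda_{\mathrm{cross}}$, and bounds each such word by $T^{-i}\|x\|_2$ by combining the pointwise bound of Lemma~\ref{lem:non_expansive} (for the leading block) with the operator-norm bound $\|(\Lambda^{-1}\Lambda_{\mathrm{cross}})^k\|_2 \le \sigma_{\min}^{-1/2}(\Lambda)$ (for the interior blocks) and $\|E\|_2 \le \sigma_{\min}(\Lambda)/T$; the multiplicity is then absorbed via $\binom{t}{i}T^{-i} \le 1$. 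You instead use the first-order telescoping identity $L^t = (\Lambda^{-1}\Lambda_{\mathrm{cross}})^t + \sum_{k=0}^{t-1}(\Lambda^{-1}\Lambda_{\mathrm{cross}})^k E\, L^{t-1-k}$ and close an induction by converting the inductive hypothesis into the $\ell_2$ bound $\|L^j x\|_2 \le (j+1)\|x\|_2/\sigma_{\min}^{1/2}(\Lambda)$ through the $\Lambda$-norm --- exactly the H\"older/spectral mechanism the paper deploys inside the proof of Lemma~\ref{lem:non_expansive}, but applied to $L^j$ itself rather than to powers of $\Lambda^{-1}\Lambda_{\mathrm{cross}}$. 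Both routes correctly identify that naive submultiplicative bounds on $\|L\|_2$ would blow up geometrically, and both neutralize this through the coverage constant $\sigma_{\min}(\Lambda)$ (together with $\sigma_{\min}(\Lambda)\le 1$, which you rightly justify from $\|\phi(s,a)\|_2\le 1$) and the constraint $t \le T$. Your version trades the paper's combinatorial bookkeeping over $2^t$ words (which, incidentally, contains some indexing slips in the published argument) for a cleaner $t$-term sum, at the cost of carrying a strong induction; the constants and hypotheses consumed are identical.
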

\begin{proof}
Clearly, $L^t = (\Lambda^{-1} \cross + (L - \Lambda^{-1} \cross))^t$, and we apply binomial expansion on \[\phi(s, a)^{\top}L^t x = \phi(s, a)^{\top}(\Lambda^{-1} \cross + (L - \Lambda^{-1} \cross))^t x.\]
As a result, for each $i \in \{0, 1, 2, \ldots, t\}$, there are $\binom{t}{i}$ terms in \[\phi(s, a)^{\top}(\Lambda^{-1} \cross + (L - \Lambda^{-1} \cross))^t x\] where $\Lambda^{-1} \cross$ appears for $i$ times and $L - \Lambda^{-1} \cross$ appears for $t - i$ times.

When $i = 0$, by Lemma~\ref{lem:non_expansive}, \[|\phi(s, a) (\Lambda^{-1} \cross)^t x| \le \|x\|_2.\]
When $0 < i \le t$, by Lemma~\ref{lem:perturbation} and Lemma~\ref{lem:non_expansive}, each of the $\binom{t}{i}$ terms (where $\Lambda^{-1} \cross$ appears for $i$ times and $L - \Lambda^{-1} \cross$ appears for $t - i$ times) can be bounded by $T^{-i}\|x\|_2$, and therefore their summation can be bounded by $\binom{t}{i} \cdot T^{-i}\|x\|_2 \le \|x\|_2$. 
Summing up over all $i \in \{0, 1, 2, \ldots, t\}$ finishes the proof. 
\end{proof}
\begin{lemma}\label{lem:ridge}
By taking $N \ge C_1 T^2d^2 \log(d / \delta) /  \sigma_{\min}^6(\Lambda)$ and $N \ge C_2T^4d \log(1 / \delta)/(\varepsilon^2 \sigma_{\min}(\Lambda) (1 - \gamma)^2)$ for some constants $C_1>0$ and $C_2 > 0$, with probability $1 - \delta / 2$, 
\[
\|\gamma \hat{\Lambda}^{-1}\Phi^{\top} \zeta / N- \lambda \hat{\Lambda}^{-1}\theta^*\|_2 \le  \varepsilon / (2T^2). 
\]
\end{lemma}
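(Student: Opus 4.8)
The plan is to split the quantity by the triangle inequality into a regularization-bias term $\lambda\|\hat{\Lambda}^{-1}\theta^*\|_2$ and a statistical-noise term $\gamma\|\hat{\Lambda}^{-1}\Phi^{\top}\zeta/N\|_2$, and to show that each is at most $\varepsilon/(4T^2)$ on a high-probability event. I would first record that on the event of Lemma~\ref{lem:perturbation} (there $\|\hat{\Lambda}-\Lambda\|_2 \le \sigma_{\min}^3(\Lambda)/(10T) \le \tfrac12\sigma_{\min}(\Lambda)$, using $\Lambda \preceq I$ and $T \ge 1$, so $\hat{\Lambda} \succeq \tfrac12\Lambda$ and hence $\|\hat{\Lambda}^{-1}\|_2 \le 2/\sigma_{\min}(\Lambda)$) the bias term is controlled deterministically: using Assumption~\ref{assmp:realizability} to get $\|\theta^*\|_2 \le \sqrt{d}/(1-\gamma)$,
\[
\lambda\|\hat{\Lambda}^{-1}\theta^*\|_2 \le \lambda \cdot \frac{2}{\sigma_{\min}(\Lambda)} \cdot \frac{\sqrt{d}}{1-\gamma} \le \frac{\varepsilon}{4T^2},
\]
where the last inequality is exactly the stated choice $\lambda \le \varepsilon(1-\gamma)\sigma_{\min}(\Lambda)/(8T^2\sqrt{d})$.

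The substance is the noise term. The key structural observation is that $(\phi(s_i,a_i)\zeta_i)_{i=1}^N$ is a vector-valued martingale-difference sequence: writing $\mathcal{F}_i := \sigma\big((s_j,a_j,r_j,\overline{s}_j)_{j<i},\, (s_i,a_i)\big)$, the feature $\phi(s_i,a_i)$ is $\mathcal{F}_i$-measurable while $\expect[\zeta_i \mid \mathcal{F}_i] = 0$; moreover $r_i \in [0,1]$ and $Q^\pi, V^\pi \in [0, 1/(1-\gamma)]$ give $|\zeta_i| \le 2/(1-\gamma)$. I would then invoke a self-normalized (elliptical-potential) tail bound for vector-valued martingales with regularizer $N\lambda I$: with probability $1-\delta/2$,
\[
\Big\| \tfrac1N \Phi^{\top}\zeta \Big\|_{\hat{\Lambda}^{-1}}^2 = \tfrac1N \Big\| \sum_{i=1}^N \phi(s_i,a_i)\zeta_i \Big\|_{(\Phi^{\top}\Phi + N\lambda I)^{-1}}^2 \le \frac{C'}{N(1-\gamma)^2}\big(d\log(1/\lambda) + \log(1/\delta)\big),
\]
where the $\log\det$ factor is bounded using $\|\phi(s,a)\|_2 \le 1$. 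Converting the $\hat{\Lambda}^{-1}$-norm bound to an $\ell_2$ bound via $\|\hat{\Lambda}^{-1}\tfrac1N\Phi^{\top}\zeta\|_2 \le \sigma_{\min}^{-1/2}(\hat{\Lambda})\,\|\tfrac1N\Phi^{\top}\zeta\|_{\hat{\Lambda}^{-1}} \le \sqrt{2/\sigma_{\min}(\Lambda)}\,\|\tfrac1N\Phi^{\top}\zeta\|_{\hat{\Lambda}^{-1}}$ on the conditioning event, and noting $1/\lambda$ is polynomial in $(T, d, 1/\varepsilon, 1/(1-\gamma), 1/\sigma_{\min}(\Lambda))$ so that $\log(1/\lambda)$ is absorbed into the constant and $\log(1/\delta)$, the hypothesis $N \ge C_2 T^4 d \log(1/\delta)/(\varepsilon^2 \sigma_{\min}(\Lambda)(1-\gamma)^2)$ makes this term at most $\varepsilon/(4T^2)$. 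Adding the two bounds yields $\varepsilon/(2T^2)$.

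The main obstacle is obtaining the correct parameter dependence in the noise term. A naive coordinatewise Hoeffding bound plus a union over the $d$ coordinates would bound $\|\tfrac1N\Phi^{\top}\zeta\|_2$ directly and then lose a second power of $1/\sigma_{\min}(\Lambda)$ (through $\|\hat{\Lambda}^{-1}\|_2^2$) and a polynomial factor of $d$ rather than a logarithmic one — incompatible with the claimed $N$. The self-normalized martingale inequality, together with the conversion through $\sigma_{\min}(\hat{\Lambda})$, is precisely what keeps the dependence to one power of $1/\sigma_{\min}(\Lambda)$ and only $d\log(1/\lambda)$; the accompanying care points are verifying the filtration makes $\phi(s_i,a_i)$ predictable (so the martingale structure is legitimate) and that the almost-sure bound on $\zeta_i$ genuinely follows from $r_h \in [0,1]$ and the definition of $Q^\pi, V^\pi$.
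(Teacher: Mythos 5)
Your proposal is correct and follows essentially the same route as the paper: both split off the bias term $\lambda\|\hat{\Lambda}^{-1}\theta^*\|_2$ deterministically via $\|\hat{\Lambda}^{-1}\|_2\le 2/\sigma_{\min}(\Lambda)$ and the stated choice of $\lambda$, and both control the noise term through the self-normalized quadratic form $\|\zeta\|^2_{\Phi\hat{\Lambda}^{-1}\Phi^{\top}/N}$ before paying a single factor of $\sigma_{\min}(\Lambda)^{-1}$ to convert back to the $\ell_2$ norm --- the paper simply applies the Hsu--Kakade--Zhang tail bound for quadratic forms conditionally on $\Phi$ (the samples being i.i.d., no martingale machinery is needed), obtaining $C_3 d\log(1/\delta)/(1-\gamma)^2$ with no $\log(1/\lambda)$ term. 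The only cosmetic difference is that your Abbasi-Yadkori-style martingale bound carries an extra $d\log(1/\lambda)$ factor, which would require $N$ larger by a logarithm of the problem parameters than literally stated, but this is immaterial for the $\mathrm{poly}(\cdot)$ sample-complexity claim of Lemma~\ref{lem:fqi_completeness}.
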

\begin{proof}
According to the argument in the proof of Lemma~\ref{lem:perturbation}, with probability $1 - \delta / 4$, $\|\hat{\Lambda}^{-1}\|_2 \le 2 / \sigma_{\min}(\Lambda)$.
Clearly,
\[
\|\gamma \hat{\Lambda}^{-1}\Phi^{\top} \zeta / N\|_2^2 \le \| \hat{\Lambda}^{-1}\Phi^{\top} \zeta / N\|_2^2 \le \frac{1}{N} \|\hat{\Lambda}^{-1}\|_2 \cdot \|\zeta\|_{\Phi \hat{\Lambda}^{-1}\Phi^{\top} /N}^2.
\]
According to~\citep{hsu2012tail}, with probability $1 - \delta / 4$, there exists a constant $C_3 > 0$ such that 
\[
\|\zeta\|_{\Phi \hat{\Lambda}^{-1}\Phi^{\top} /N}^2 \le C_3 d \log(1 / \delta) / (1 - \gamma)^2. 
\]
Therefore, conditioned on the two events defined above, 
\[
\|\gamma \hat{\Lambda}^{-1}\Phi^{\top} \zeta / N\|_2 \le  \left( \frac{1}{N} \|\hat{\Lambda}^{-1}\|_2 \cdot \|\zeta\|_{\Phi \hat{\Lambda}^{-1}\Phi^{\top} /N}^2\right)^{1/2} \le \varepsilon / (4T^2).
\]
Moreover, conditioned on the event above, since $\|\theta^*\|_2 \le \sqrt{d} / (1 - \gamma)$ and $\lambda \le \varepsilon(1 - \gamma)\sigma_{\min}(\Lambda)/(8T^2\sqrt{d})$, we also have
\[
\|\lambda \hat{\Lambda}^{-1}\theta^*\|_2 \le  \varepsilon / 4T^2. 
\]
We finish the proof by applying the triangle inequality. 
\end{proof}

Now we are ready to prove Lemma~\ref{lem:fqi_completeness}. 
\begin{proof}[Proof of Lemma~\ref{lem:fqi_completeness}]
According to Lemma~\ref{lem:upper_main}, we only need to prove that for any $(s, a) \in \states \times \actions$, 
\[
\left|\phi(s,a)^{\top} \left(\sum_{t = 1}^T \left(\gamma \hat{\Lambda}^{-1}\frac{\Phi^{\top}\overline{\Phi}}{N}\right)^{t - 1} \cdot  \left(\frac{\gamma}{N} \hat{\Lambda}^{-1}\Phi^{\top} \zeta - \lambda \hat{\Lambda}^{-1}\theta^*\right) + \left(\gamma \hat{\Lambda}^{-1}\frac{\Phi^{\top}\overline{\Phi}}{N}\right)^T  \theta^* \right) \right|\le \varepsilon. 
\]
Conditioned on the events in Lemma~\ref{lem:perturbation} and Lemma~\ref{lem:ridge}, by Lemma~\ref{lem:non_expansive_final} and Lemma~\ref{lem:ridge}, 
for any $(s, a) \in \states \times \actions$, 
\[
\left|\phi(s,a)^{\top} \left(\sum_{t = 1}^T \left(\gamma \hat{\Lambda}^{-1}\frac{\Phi^{\top}\overline{\Phi}}{N}\right)^{t - 1} \cdot  \left(\frac{\gamma}{N} \hat{\Lambda}^{-1}\Phi^{\top} \zeta - \lambda \hat{\Lambda}^{-1}\theta^*\right) \right) \right|\le \sum_{t = 1}^T t \cdot \|\gamma \hat{\Lambda}^{-1}\Phi^{\top} \zeta / N- \lambda \hat{\Lambda}^{-1}\theta^*\|_2 \le \varepsilon / 2. 
\]
Moreover, by taking $T = C_T \log(d / (\varepsilon(1 - \gamma))) / (1 - \gamma)$ for some constant $C_T > 0$, for any $(s, a) \in \states \times \actions$, we have
\[
\left|\phi(s,a)^{\top}  \left(\gamma \hat{\Lambda}^{-1}\frac{\Phi^{\top}\overline{\Phi}}{N}\right)^T  \theta^*  \right| \le \gamma^T (T + 1)\|\theta^*\|_2 \le \gamma^T (T + 1) \sqrt{d} / (1 - \gamma)  \le \varepsilon / 2.
\]
Therefore, for any $(s, a) \in \states \times \actions$, 
\[
\left|\phi(s,a)^{\top} \left(\sum_{t = 1}^T \left(\gamma \hat{\Lambda}^{-1}\frac{\Phi^{\top}\overline{\Phi}}{N}\right)^{t - 1} \cdot  \left(\frac{\gamma}{N} \hat{\Lambda}^{-1}\Phi^{\top} \zeta - \lambda \hat{\Lambda}^{-1}\theta^*\right) + \left(\gamma \hat{\Lambda}^{-1}\frac{\Phi^{\top}\overline{\Phi}}{N}\right)^T  \theta^* \right) \right|\le \varepsilon. 
\]
\end{proof}

\subsection{Proof of Lemma~\ref{lem:fqi_distribution_shift}}

In this proof, we set $T \ge C_T \log (d \cdot C_{\mathrm{init}} / (\varepsilon(1 -C_{\mathrm{policy}}^{1/2} \gamma))) / (1 - C_{\mathrm{policy}}^{1/2}\gamma)$ for sufficiently large constant $C_T > 0$. 
We fix $\lambda = C_{\lambda} T \sqrt{d \log (d / \delta) / N} / \lambda$ for some sufficiently large constant $C_{\lambda}$. 
We set $N = C_N T^6 d^3 C^2_{\mathrm{init}} \log(d / \delta) / (\varepsilon^2 (1 - \gamma)^4)$ for sufficiently large $C_N > 0$.
We use $L$ to denote $\hat{\Lambda}^{-1}\Phi^{\top}\overline{\Phi} / N$.

By standard matrix concentration inequalities~\citep{tropp2015introduction}, we can show that $\Phi^{\top}\Phi/ N$ (and $\overline{\Phi}^{\top}\overline{\Phi}/N$) concentrates around $\Lambda$ (and $\overline{\Lambda}$). 

\begin{lemma}\label{lem:concentrate}
With probability $1 - \delta / 2$, for some constant $C_4 > 0$, we have
\[
\|\Phi^{\top}\Phi/ N - \Lambda\|_2 \le C_4 \sqrt{d \log (d / \delta) / N}
\]
and
\[
\|\overline{\Phi}^{\top}\overline{\Phi}/ N - \overline{\Lambda}\|_2 \le C_4 \sqrt{d \log (d / \delta) / N}
\]
Conditioned on the event above, since $\lambda = C_{\lambda} T \sqrt{d \log (d / \delta) / N} / \lambda$, we have
\[
\hat{\Lambda} = \Phi^{\top}\Phi / N + \lambda I \succeq \Lambda.
\]
\end{lemma}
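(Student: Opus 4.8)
The plan is to prove the two operator-norm bounds by a routine matrix-concentration argument and then read off $\hat{\Lambda}\succeq\Lambda$ directly. First I would write $\Phi^{\top}\Phi/N=\frac{1}{N}\sum_{i=1}^{N}X_i$ where $X_i=\phi(s_i,a_i)\phi(s_i,a_i)^{\top}$ are i.i.d.\ with $\expect[X_i]=\Lambda$. Since $\|\phi(s,a)\|_2\le 1$ we have $0\preceq X_i\preceq I$, so $\|X_i-\Lambda\|_2\le 1$ deterministically; moreover $X_i$ is rank one, so $X_i^2=\|\phi(s_i,a_i)\|_2^2\,X_i\preceq X_i$, whence $\expect[(X_i-\Lambda)^2]=\expect[X_i^2]-\Lambda^2\preceq\Lambda\preceq I$, giving a matrix-variance parameter at most $1/N$ for the average. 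The identical setup applies to $\overline{\Phi}^{\top}\overline{\Phi}/N=\frac{1}{N}\sum_i\overline{X}_i$ with $\overline{X}_i=\phi(\overline{s}_i,\pi(\overline{s}_i))\phi(\overline{s}_i,\pi(\overline{s}_i))^{\top}$, which are i.i.d.\ with $\expect[\overline{X}_i]=\overline{\Lambda}$.

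Next I would invoke a matrix Bernstein inequality (the matrix concentration bounds of \citet{tropp2015introduction} cited in the proof setup): with probability at least $1-\delta/4$,
\[
\|\Phi^{\top}\Phi/N-\Lambda\|_2 \le C\left(\sqrt{\tfrac{\log(d/\delta)}{N}}+\tfrac{\log(d/\delta)}{N}\right)\le C_4\sqrt{\tfrac{d\log(d/\delta)}{N}},
\]
where the last inequality uses that $N$ is taken polynomially large, so the $\log(d/\delta)/N$ term is dominated. The same bound holds for $\|\overline{\Phi}^{\top}\overline{\Phi}/N-\overline{\Lambda}\|_2$ with probability $1-\delta/4$, and a union bound over the two events gives probability $1-\delta/2$. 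On this event, $\hat{\Lambda}-\Lambda=(\Phi^{\top}\Phi/N-\Lambda)+\lambda I\succeq\bigl(\lambda-C_4\sqrt{d\log(d/\delta)/N}\bigr)I$; since $\Lambda\preceq I$ forces $\sigma_{\min}(\Lambda)\le 1$ and $T\ge 1$, the prescribed $\lambda$ (of order $C_\lambda T\sqrt{d\log(d/\delta)/N}/\sigma_{\min}(\Lambda)$) exceeds $C_4\sqrt{d\log(d/\delta)/N}$ once $C_\lambda$ is a large enough constant, so $\hat{\Lambda}-\Lambda\succeq 0$, i.e.\ $\hat{\Lambda}\succeq\Lambda$.

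There is no genuine obstacle here — this is a textbook estimate. The only point needing a little care is the bookkeeping: pinning down the variance proxy so the fluctuation is of order $\sqrt{1/N}$ up to logarithmic factors, and then confirming that the sample-size and regularization choices fixed at the start of the proof of Lemma~\ref{lem:fqi_distribution_shift} indeed make $\lambda$ dominate this fluctuation, which is exactly what the PSD comparison requires. If one prefers to avoid matrix Bernstein, the cruder entrywise-plus-Frobenius-norm argument used in the proof of Lemma~\ref{lem:perturbation} also delivers the claim, at the cost of extra polynomial factors in $d$ that are harmless given the stated $N$.
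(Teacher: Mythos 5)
Your proposal is correct and matches the paper's (implicit) argument: the paper proves this lemma only by citing the matrix concentration inequalities of Tropp, and your matrix Bernstein computation with the rank-one bounded summands, union bound over the two matrices, and the observation that $\lambda$ dominates the fluctuation (modulo the evident typo in the paper's definition of $\lambda$, which you resolve sensibly) is exactly the standard route intended. Nothing is missing.
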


Now we are ready to prove Lemma~\ref{lem:fqi_distribution_shift}. By Lemma~\ref{lem:upper_main}, we have
\[
\expect_{s \sim \mu_{\mathrm{init}}}[(V^{\pi}(s) - \hat{V}_T(s))^2]
=\|\theta_T -  \theta^*\|_{\Lambda_\mathrm{init}}^2
= \left \| \left(\sum_{t = 1}^T \left(\gamma L\right)^{t - 1}  (\frac{\gamma}{N} \hat{\Lambda}^{-1}\Phi^{\top} \zeta - \lambda \hat{\Lambda}^{-1}\theta^*)  \right)
+ \left(\gamma L\right)^T  \theta^* \right \| _{\Lambda_\mathrm{init}}^2. 
\]
Therefore,
\begin{align*}
&\expect_{s \sim \mu_{\mathrm{init}}}[(V^{\pi}(s) - \hat{V}_T(s))^2] \\
\le& (2T + 1) \left( 
\sum_{t = 1}^T \left\| (\gamma L)^{t - 1}  \frac{\gamma}{N} \hat{\Lambda}^{-1}\Phi^{\top} \zeta\right\|_{\Lambda_\mathrm{init}}^2 
+ \sum_{t = 1}^T \left\| (\gamma L)^{t - 1}  \lambda \hat{\Lambda}^{-1} \theta^* \right\|_{\Lambda_\mathrm{init}}^2 
+ \|\left(\gamma L\right)^T  \theta^*\|_{\Lambda_\mathrm{init}}^2 
\right).
\end{align*}

Note that
\[
\left\|\left(\gamma \hat{\Lambda}^{-1}\Phi^{\top}\overline{\Phi} / N \right)^T  \theta^*\right\|_{\Lambda_{\mathrm{init}}}^2
 \le \gamma^2 \|\hat{\Lambda}^{-1/2} \Lambda_{\mathrm{init}} \hat{\Lambda}^{-1/2} \|_2  \cdot \|\Phi (N\hat{\Lambda})^{-1} \Phi^{\top}\|_2 \cdot \left\|\left(\gamma \Lambda^{-1} \Phi^{\top}\overline{\Phi} / N\right)^{T - 1}  \theta^*\right\|_{\overline{\Phi}^{\top}\overline{\Phi}/N}^2
\]
and
\begin{align*}
& \left\|\left(\gamma \Lambda^{-1} \Phi^{\top}\overline{\Phi} / N\right)^{T - 1}  \theta^*\right\|_{\overline{\Phi}^{\top}\overline{\Phi}/N}^2\\
\le &\gamma^2 \|\hat{\Lambda}^{-1/2} (\overline{\Phi}^{\top}\overline{\Phi}/N) \hat{\Lambda}^{-1/2} \|_2  \cdot \|\Phi (N\hat{\Lambda})^{-1} \Phi^{\top}\|_2 \cdot \left\|\left(\gamma \Lambda^{-1}\Phi^{\top}\overline{\Phi} / N \right)^{T - 2}  \theta^*\right\|_{\overline{\Phi}^{\top}\overline{\Phi}/N}^2.
\end{align*}
Conditioned on the event in Lemma~\ref{lem:concentrate}, we have $\hat{\Lambda} \succeq \Lambda$ and $N\hat{\Lambda} \succeq \Phi^{\top}\Phi$.
This implies
\[
\|\Phi (N\hat{\Lambda})^{-1} \Phi^{\top}\|_2 \le 1.
\]
By Assumption~\ref{assumption:low_distribution_shift}, we also have
\[
\|\hat{\Lambda}^{-1/2} \Lambda_{\mathrm{init}} \hat{\Lambda}^{-1/2} \|_2 \le C_{\mathrm{init}}
\]
and
\begin{align*}
 &\|\hat{\Lambda}^{-1/2} (\overline{\Phi}^{\top}\overline{\Phi}/N) \hat{\Lambda}^{-1/2} \|_2  \le  \|\hat{\Lambda}^{-1/2} \overline{\Lambda} \hat{\Lambda}^{-1/2} \|_2  +  \|\hat{\Lambda}^{-1/2} (\overline{\Phi}^{\top}\overline{\Phi}/N - \overline{\Lambda}) \hat{\Lambda}^{-1/2} \|_2 \\\
\le &C_{\mathrm{policy}} + C_4 \sqrt{d \log (d / \delta) / N} / \lambda \le  C_{\mathrm{policy}} + 1 / T
\end{align*}
since $\lambda = C_{\lambda} T \sqrt{d \log (d / \delta) / N} / \lambda$ for sufficiently large constant $C_{\lambda}$. 
Therefore,
\[
\left\|\left(\gamma \hat{\Lambda}^{-1}\Phi^{\top}\overline{\Phi} / N \right)^T  \theta^*\right\|_{\Lambda_{\mathrm{init}}}^2 \le \gamma^{2T}  C_{\mathrm{init}} (C_{\mathrm{policy}} + 1 / T)^{T - 1} \|\theta^*\|_{\overline{\Phi}^{\top}\overline{\Phi}/N}^2 \le e \gamma^{2T} C_{\mathrm{init}} C_{\mathrm{policy}}^{T - 1} d / (1 - \gamma)^2. 
\]
Therefore. by taking $T \ge C_5 \log (d \cdot C_{\mathrm{init}} / (\varepsilon(1 -C_{\mathrm{policy}}^{1/2} \gamma))) / (1 - C_{\mathrm{policy}}^{1/2}\gamma)$ for some constant $C_5 > 0$, we have \[\left\|\left(\gamma \hat{\Lambda}^{-1}\Phi^{\top}\overline{\Phi} / N \right)^T  \theta^*\right\|_{\Lambda_{\mathrm{init}}}^2 \le \varepsilon / (2T + 1)^2.\]

Furthermore,
\[
\left\|\frac{\gamma}{N} \hat{\Lambda}^{-1}\Phi^{\top} \zeta\right\|_{\overline{\Phi}^{\top}\overline{\Phi}/N}^2
\le \frac{\gamma^2}{N} \|\hat{\Lambda}^{-1/2} ( \overline{\Phi}^{\top}\overline{\Phi}/N  )\hat{\Lambda}^{-1/2}\|_2 \cdot \|\zeta\|_{\Phi (N \hat{\Lambda})^{-1} \Phi^{\top}}^2 \le \frac{\gamma^2}{N} (C_{\mathrm{policy}} + 1 / T)  \cdot \|\zeta\|_{\Phi (N \hat{\Lambda})^{-1} \Phi^{\top}}^2.
\] 

According to~\citep{hsu2012tail}, with probability $1 - \delta / 2$, there exists a constant $C_5 > 0$ such that 
\[
\|\zeta\|_{\Phi (N\hat{\Lambda})^{-1}\Phi^{\top}}^2 \le C_5d  \log(1 / \delta) / (1 - \gamma)^2. 
\]
By Assumption~\ref{assumption:low_distribution_shift}, we have $C_{\mathrm{policy}}\gamma^2 < 1$, which implies
\[
\left\|\frac{\gamma}{N} \hat{\Lambda}^{-1}\Phi^{\top} \zeta\right\|_{\overline{\Phi}^{\top}\overline{\Phi}/N}^2 \le C_5 \frac{1 + 1/T}{N} d  \log(d / \delta) / (1 - \gamma)^2. 
\]
Hence, there exists a constant $C_6 > 0$, such that for each $t \in \{1, 2, \ldots, T\}$,
\[
 \left\| (\gamma L)^{t - 1}  \frac{\gamma}{N} \hat{\Lambda}^{-1}\Phi^{\top} \zeta\right\|_{\Lambda_\mathrm{init}}^2 
 \le C_6  C_{\mathrm{init}}   d  \log(d / \delta) / (1 - \gamma)^2 / N.
\]
Similarly, for each $t \in \{1, 2, \ldots, T\}$,
\[
 \left\| (\gamma L)^{t - 1}  \lambda \hat{\Lambda}^{-1} \theta^* \right\|_{\Lambda_\mathrm{init}}^2  \le e C_{\mathrm{init}} \lambda^2 \|\theta^*\|_{\hat{\Lambda}^{-1}}^2 \le e C_{\mathrm{init}}\lambda d / (1 - \gamma)^2 = C_{7} C_{\mathrm{init}} T d \sqrt{d \log (d / \delta) / N} / (1 - \gamma)^2 
  \]
  where $C_7 > 0$ is a constant. 
 Therefore, since $N = C_N T^6 d^3 C^2_{\mathrm{init}} \log(d / \delta) / (\varepsilon^2 (1 - \gamma)^4)$ for sufficiently large $C_N > 0$, we have \[\left\| (\gamma L)^{t - 1}  \frac{\gamma}{N} \hat{\Lambda}^{-1}\Phi^{\top} \zeta\right\|_{\Lambda_\mathrm{init}}^2  \le \varepsilon/ (2T + 1)^2\]and
 \[
 \left\| (\gamma L)^{t - 1}  \lambda \hat{\Lambda}^{-1} \theta^* \right\|_{\Lambda_\mathrm{init}}^2 \le \varepsilon/ (2T + 1)^2,
 \]
 which implies
 \[
 \expect_{s \sim \mu_{\mathrm{init}}}[(V^{\pi}(s) - \hat{V}_T(s))^2]  \le \varepsilon. 
 \]

\clearpage

\section{Additional Experiment Details}\label{sec:exp_details}
In this section, we provide more details about our experiments.

\subsection{Details in Step 1}
In this section, we provide details of the first step of our experiments (i.e., the step for deciding on a target policy to be evaluated, along with a good feature mapping for this policy). 

When running DQN and TD3, the number of hidden layers is always set to be $3$. 
The activation function is always set to be leaky ReLU with slop $0.1$, i.e., 
\[
\sigma(x) = \begin{cases}
x & x \ge 0\\
-0.1 x & x < 0
\end{cases}.
\]
When running DQN and TD3, there is no bias term in the output layer. 

For TD3, we use the official implementation released by the authors~\citep{fujimoto2018addressing}\footnote{\url{https://github.com/sfujim/TD3}}.
Except for hyperparameters explicitly mentioned above, we use the default hyperparameters in the implementation released by the authors.
For DQN, we write our own implementation (using the PyTorch package), and the choices of hyperparameters are reported in Table~\ref{table:hyper_dqn}. 
\begin{table}
\centering
\begin{tabular}{l l}
\hline
\textbf{Hyperparameter} & \textbf{Choice}\\
\hline
Number of Hidden Units in Q-network & 100\\
Number of Hidden Layers in Q-network & 3\\
Activation Function in Q-network & Leaky ReLU with slope $0.1$\\
Bias Term in the Output Layer & None\\
Batch Size & 128\\
Optimizer & Adam\\
Target Update Rate ($\tau$) & 0.1 \\
Total Number of Time Steps & 200000\\
Replay Buffer Size & 1000000\\
Gradient Clipping & False\\
Discount Factor & 0.99\\
Exploration Strategy & $\varepsilon$-Greedy\\
Initial Value of $\varepsilon$ & 1\\
Minimum Value of $\varepsilon$ & $0.01$\\
Decrease of $\varepsilon$ After Each Episode &$0.01$\\
Iterations Per Time Step & 1\\
\hline
\end{tabular}
\caption{\emph{Hyperparameters of DQN}.}\label{table:hyper_dqn}. 
\end{table}
We also report the learning curves of both algorithm in Figure~\ref{fig:learning_curves}.
\begin{figure}
\centering
\subfigure[Ant-v2 (TD3)]{
\includegraphics[width=0.30\textwidth]{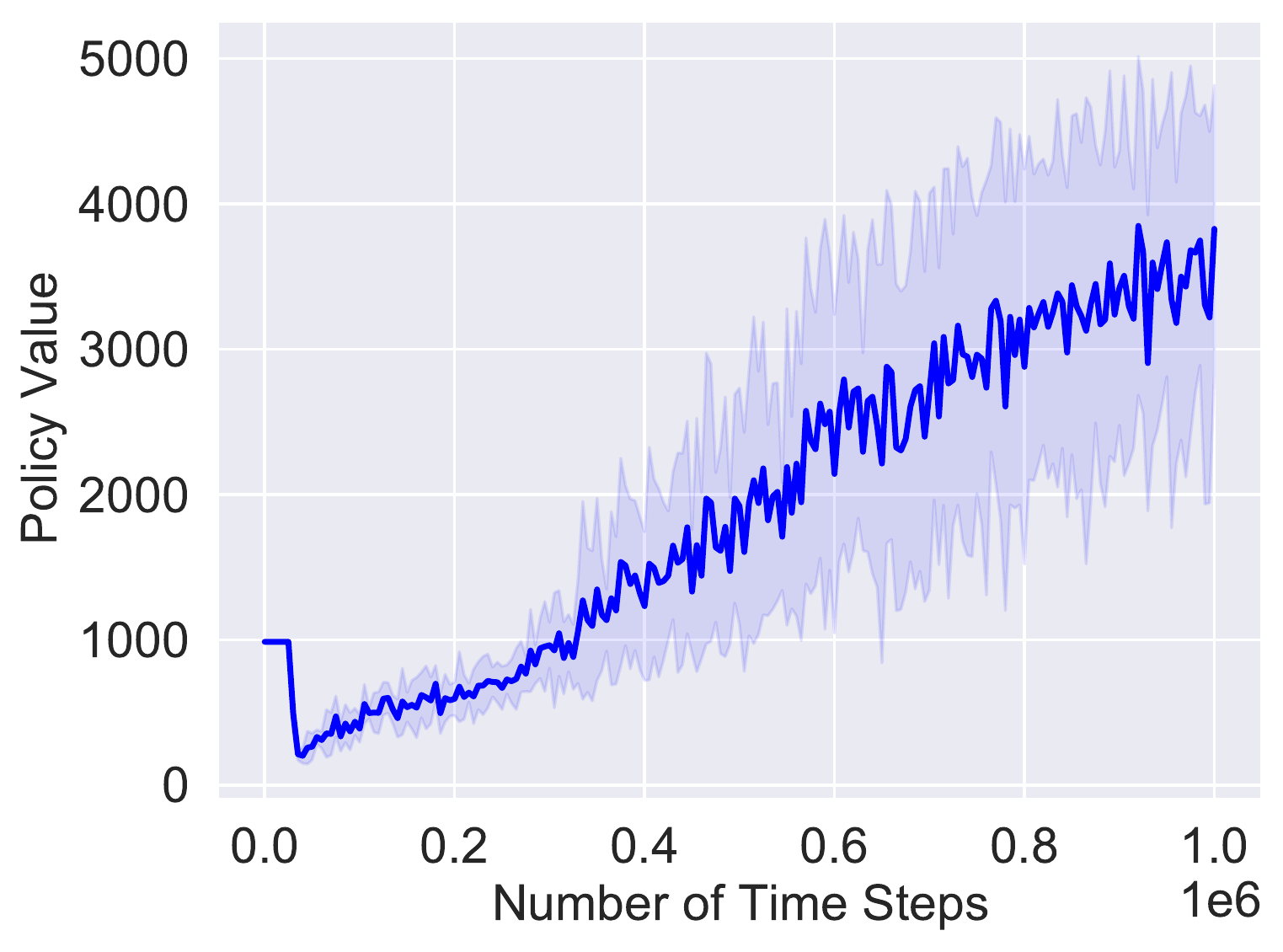}
}
\subfigure[CartPole-v0 (DQN)]{
\includegraphics[width=0.30\textwidth]{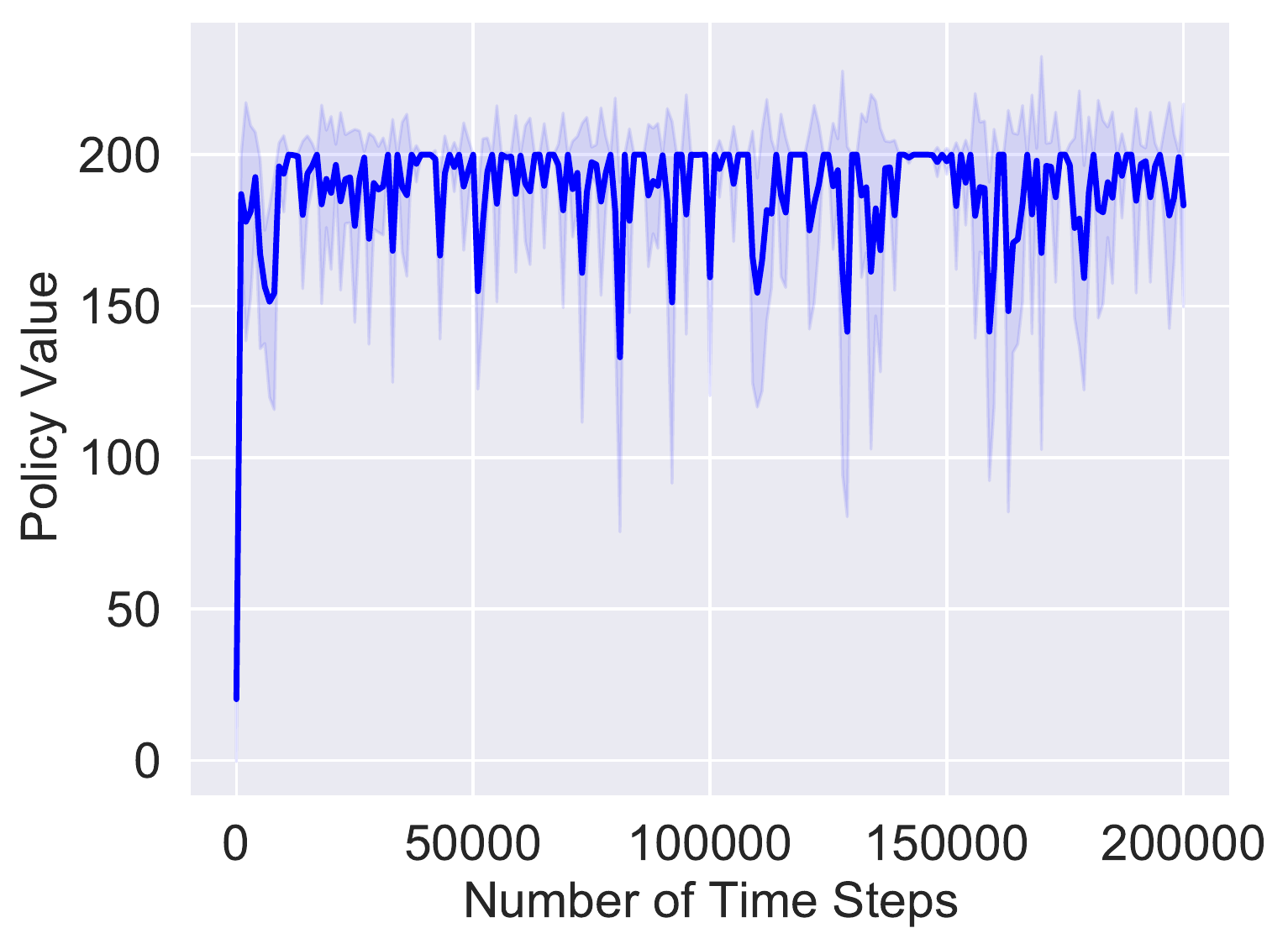}
}
\subfigure[HalfCheetah-v2 (DQN)]{
\includegraphics[width=0.30\textwidth]{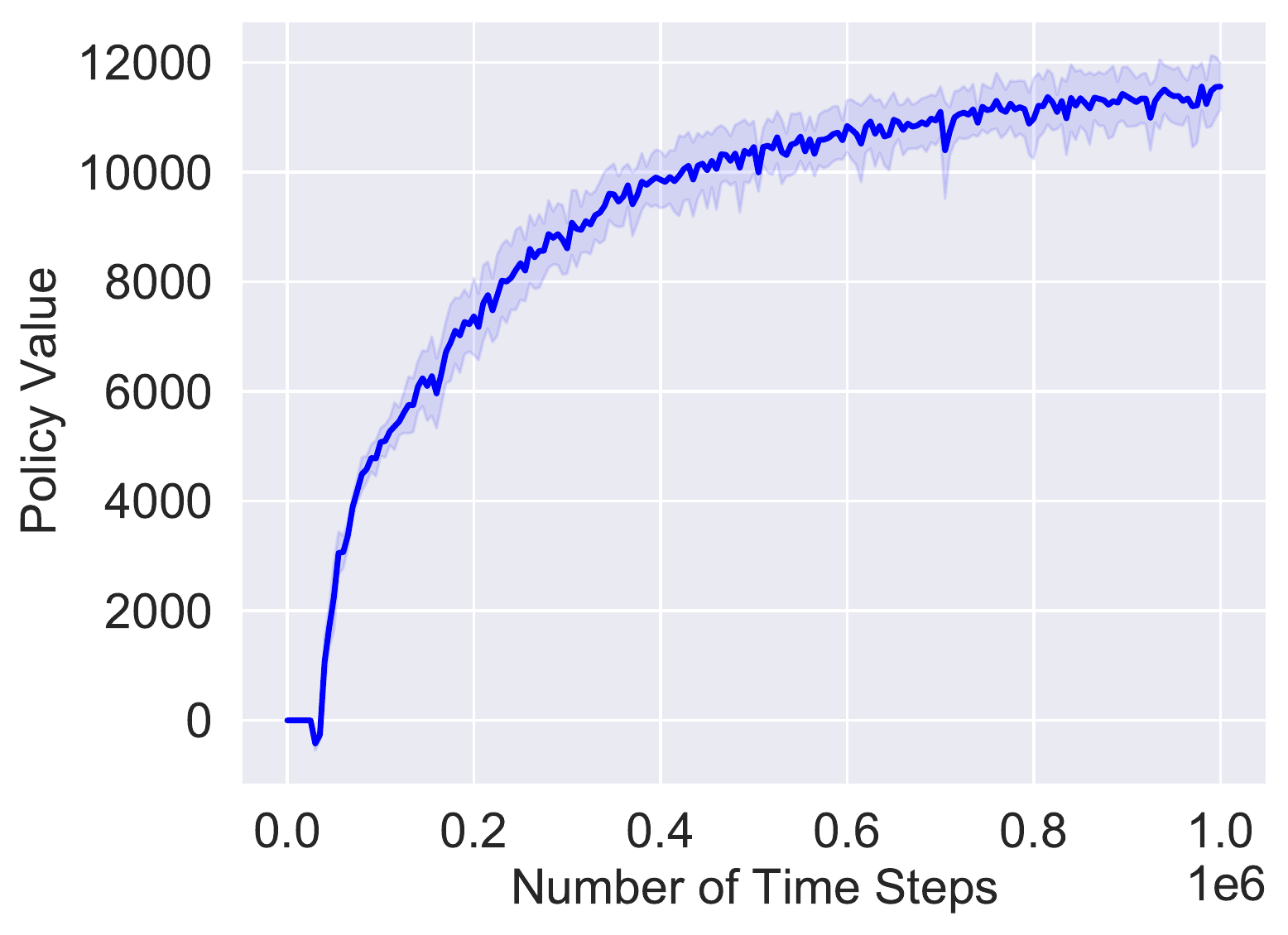}
}
\subfigure[Hopper-v2 (TD3)]{
\includegraphics[width=0.30\textwidth]{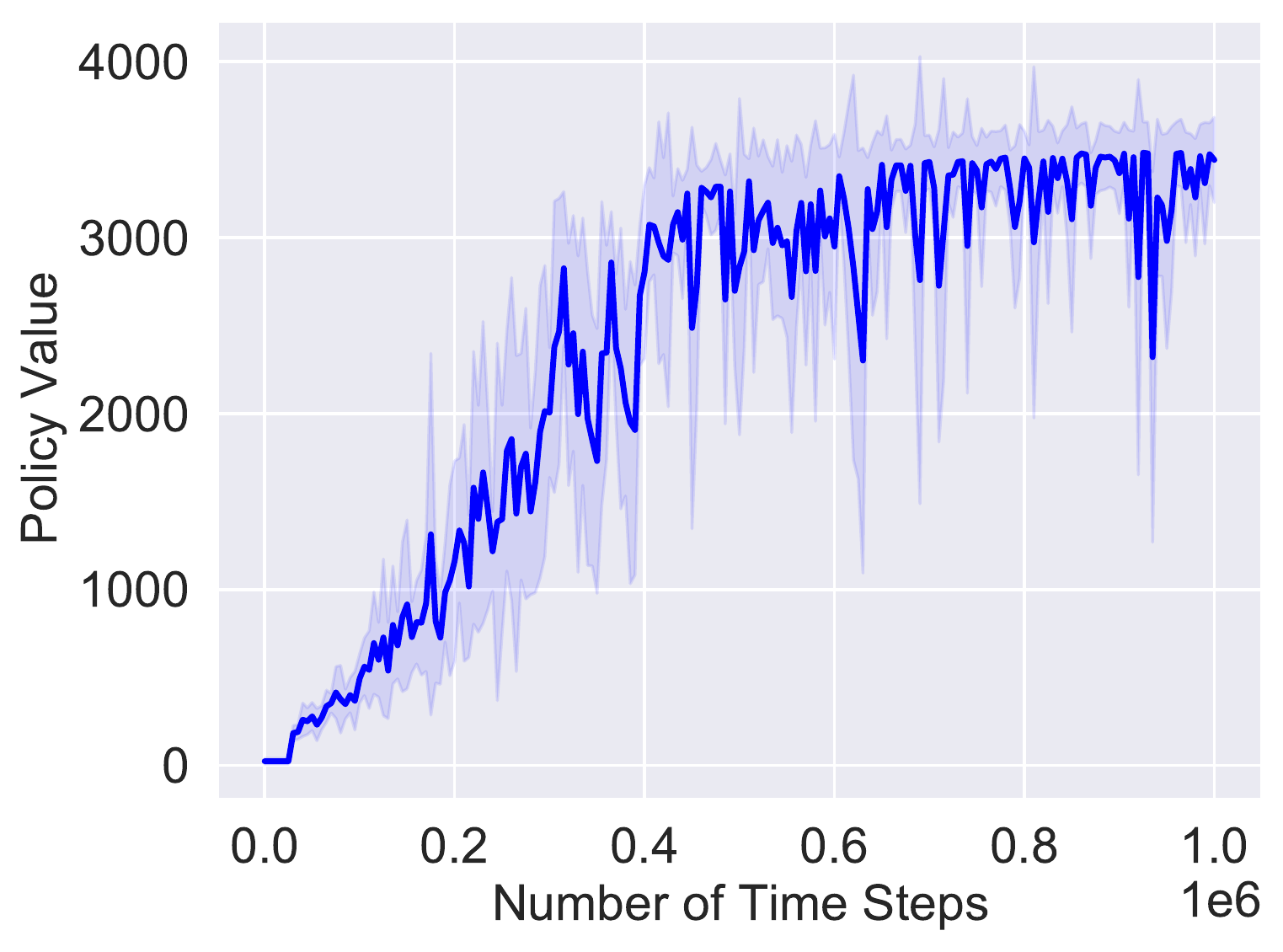}
}
\subfigure[MountainCar-v0 (TD3)]{
\includegraphics[width=0.30\textwidth]{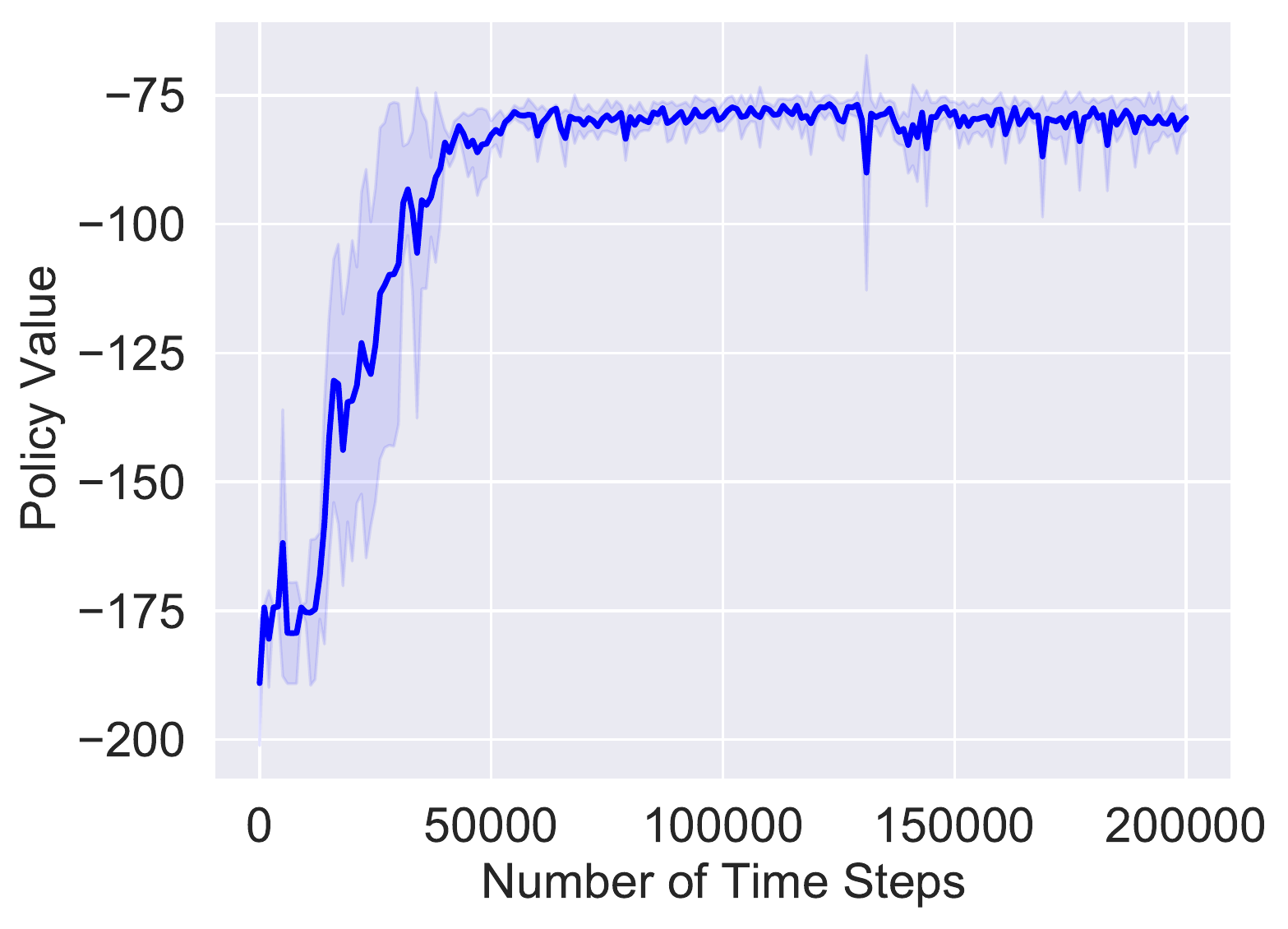}
}
\subfigure[Walker2d-v2 (DQN)]{
\includegraphics[width=0.30\textwidth]{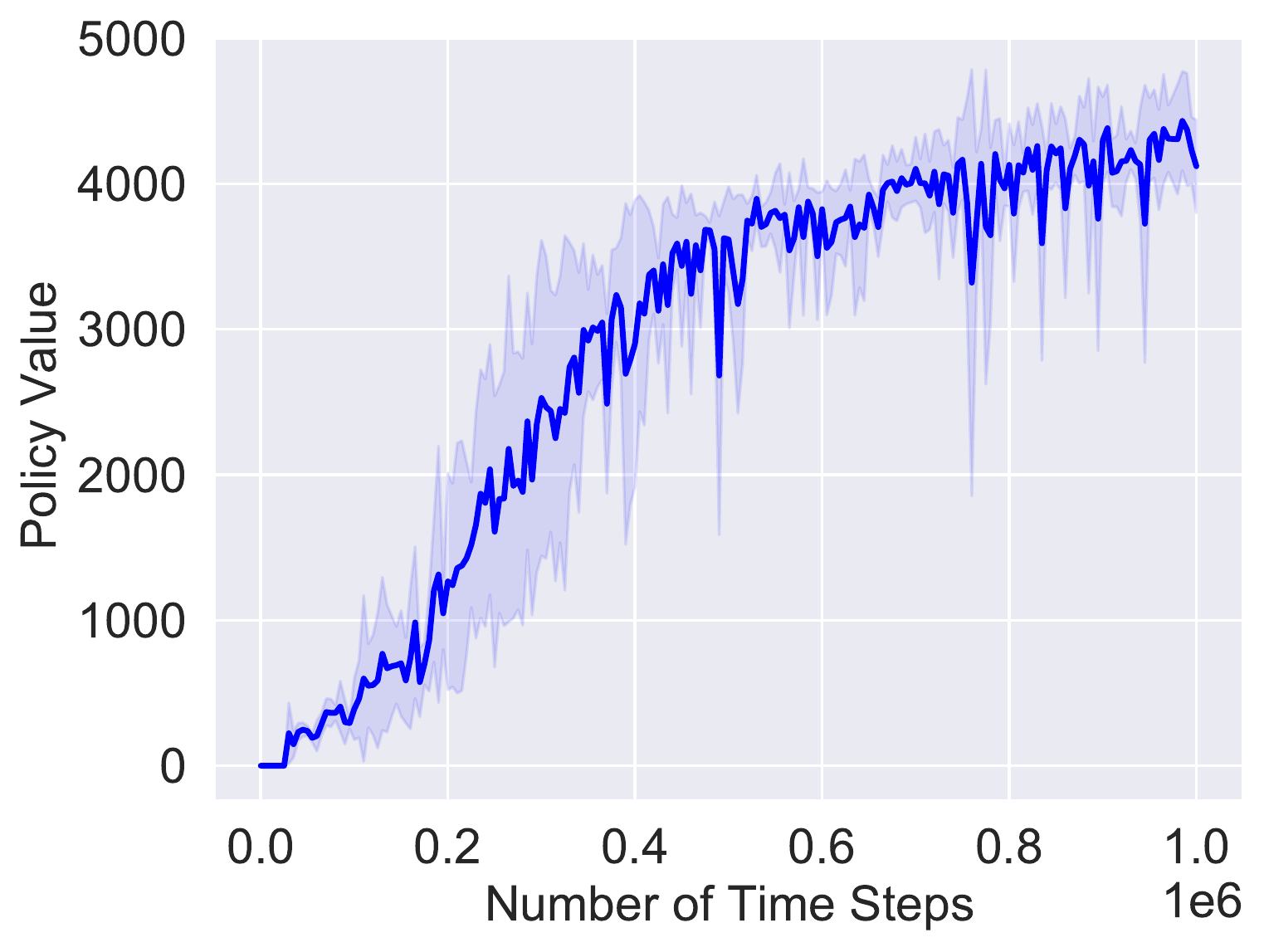}
}
\caption{Learning curves of DQN and TD3.}
\label{fig:learning_curves}
\end{figure}

\paragraph{Reward Shaping in MountainCar-v0.}
When running DQN on MountainCar-v0, we slightly modify the reward function to facilitate exploration. 
It is known that without exploration bonus, exploration in MountainCar-v0 is a hard problem (see e.g.~\citep{houthooft2016vime}), and using exploration bonus potentially leads to a representation that is incompatible with the original problem. 
To mitigate the issue of exploration, we slightly modify the reward function. 
Suppose the current position of the car is $x$.
In the original problem, the reward is set to be $-1$ if $x < 0.5$, and is set to be $0$ if $x \ge 0.5$. 
In our case, we set the reward to be $\max\{-1, x - 0.5\}$. 
Using such a modified reward function, the reward values are still in $[-1, 0]$, while being smoother (with respect to the current position of the car) and therefore facilitates exploration. 
All of our experiments are performed on this modified version of MountainCar-v0.

\paragraph{Details of Random Fourier Features. } Recall that for a vector $x \in \mathbb{R}^{D}$, its random Fourier feature is defined to be $\phi(x) = \cos(\sqrt{2 \gamma_{\mathrm{RFF}}}Wx + u)$. Here $W \in \mathbb{R}^{d \times D}$ and $u \in \mathbb{R}^D$, and each entry of $W$ is sampled from the standard Gaussian distribution and each entry of $u$ is sampled from $[0, 2\pi]$ uniformly at random. 
Here, both $\gamma_{\mathrm{RFF}}$ and the feature dimension $d$ are hyperparameter. 
Moreover, the $\cos$ function is understood in a pointwise sense. 
Also note that we use the same $W$ and $u$ for all $x \in \mathbb{R}^d$. 
In order to properly set the hyperparameter $\gamma_{\mathrm{RFF}}$, we first randomly sample $10^4$ pairs of data points, and use $D_{\mathrm{median}}$ to denote the median of the squared $\ell_2$ distances of the $10^4$ pairs of data points.
Then, for each environment, we manually pick a constant $C$ and set $\gamma_{\mathrm{RFF}} = C / D_{\mathrm{median}}$.
The choices of $d$ and $C$ for each environment are reported in Table~\ref{table:rff}.
\begin{table}
\parbox{.45\linewidth}{
\centering
\begin{tabular}{l l l}
\hline
Environment & $\gamma_{\mathrm{RFF}}$ & $d$\\
\hline
Ant-v2 & $0.1 / D_{\mathrm{median}}$ & $512$\\
CartPole-v0 & $1 / D_{\mathrm{median}}$ & $100$\\
HalfCheetah-v2 & $1 / D_{\mathrm{median}}$ & $256$\\
Hopper-v2 & $1 / D_{\mathrm{median}}$ & $256$\\
MountainCar-v0 & $64 / D_{\mathrm{median}} $& $100$\\
Walker2d-v2 & $4 / D_{\mathrm{median}}$ & $512$\\
\hline
\end{tabular}
\caption{\emph{Hyperparamters of random Fourier features}. }\label{table:rff}
}
\hfill
\parbox{.45\linewidth}{
\begin{tabular}{ll}
\hline
Environment & Discounted Value\\
\hline
Ant-v2 & $411.77 \pm 96.824$\\
CartPole-v0 & $90.17 \pm 20.61$\\
HalfCheetah-v2 & $1053.71 \pm 121.74$\\
Hopper-v2 & $321.42 \pm 30.26$\\
MountainCar-v0 & $-26.16 \pm 17.83$\\
Walker2d-v2 & $336.64 \pm 49.80$\\
\hline
\end{tabular}
\caption{\emph{Values of Randomly Chosen States}. Mean value of the $100$ randomly chosen states (used for evaluating the estimations), $\pm$ standard deviation. }
\label{table:value_supp}

}
\end{table}

\subsection{Details in Step 2}
Recall that in Step 2 of our experiment (i.e., the step for collecting offline data), we use four lower performing policies (for each environment) $\pi_{\mathrm{sub}}^1, \pi_{\mathrm{sub}}^2, \pi_{\mathrm{sub}}^3, \pi_{\mathrm{sub}}^4$ to collect offline data. 
In order to find these lower performing policies, when running DQN, we evaluate the current policy every $1000$ time steps, store the policy if the value of the current policy bypasses certain threshold, and then increase the threshold. 
When running TD3, we evaluate the current policy every $5000$ time steps, store the policy if the value of the current policy bypasses certain threshold, and then increase the threshold. 
We then manually pick four policies for each environment.
The values of these lower performing policies are reported in Table~\ref{table:lower_performing}. 
\begin{table}
\centering
\begin{tabular}{l c c c c}
\hline
Environment & $V^{\pi_{\mathrm{sub}}^1}$ & $V^{\pi_{\mathrm{sub}}^2}$ & $V^{\pi_{\mathrm{sub}}^3}$ & $V^{\pi_{\mathrm{sub}}^4}$\\
\hline
Ant-v2 & 4157.40 / 359.65 & 3289.39 / 307.75 & 2131.66 / 227.15 & 1682.81 / 198.89\\
CartPole-v0 & 198.55 / 90.25 & 186.80 / 85.80 & 171.70 / 86.88 & 169.70 / 81.73\\
HalfCheetah-v2 & 10181.87 / 734.22 & 8210.71 / 602.07 & 6129.40 / 457.19 & 4091.48 / 316.07\\
Hopper-v2 & 3519.63 / 267.17 & 2840.49 / 265.56 & 2386.34 / 257.85 & 1124.92 / 245.80\\
MountainCar-v0 & -80.84 / -54.20 & -83.66 / -54.17 & -90.04 / -57.14 & -94.80 / -58.09\\
Walker2d-v2 & 4247.12 / 250.25 & 3340.85 / 224.25 & 2072.05 / 218.81 & 1321.86 / 156.27\\
\hline
\end{tabular}
\caption{\emph{Value of Lower Performing Policies}. For each entry, the first number is the undiscounted value, while the second number is the discounted value.}\label{table:lower_performing}
\end{table}

\subsection{Details in Step 3}\label{sec:details_3}

\paragraph{The LSTD Algorithm.}
Here we give a description of the LSTD algorithm (proposed in~\citep{bradtke1996linear}) in Algorithm~\ref{algo:lstd}.

\paragraph{Evaluating Offline RL Methods. }
Recall that when evaluating the performance of offline RL methods, we report the square root of the mean squared evaluation error, taking average over $100$ randomly chosen states.
In order to have a diverse set of states with a wide range of  values when evaluating the performance, we sample $100$ trajectories using the target policy, and randomly choose a state from the first $100$ time steps on each sampled trajectory.  
We also report the values ($V^{\pi}(s)$) of those randomly chosen states in Table~\ref{table:value_supp} for all the six environments. 
When evaluating the performance of FQI, we report the evaluation error after every $10$ rounds of FQI. 

\paragraph{Variance of Step 3.}
Here we stress that the offline RL step (Step 3) itself could also have high variance. 
Here we plot the performance of FQI on Ant-v2, HalfCheetah-v2, Hopper-v2 and Walker2d-v2 when using a fixed policy and a fixed representation. 
Here we repeat the setting in Figure~\ref{fig:noise}, but use a random $10\%$ subset of the original dataset and repeat for $5$ times. 
Therefore, in this setting, there is no randomness coming from the choice of the target policy or the representation, and instead all randomness comes from the offline methods themselves. 
The results are reported in Figure~\ref{fig:randomness}. 
Here, even if the policy and the representation are fixed, the variance of the estimation could still be high. 
Moreover, adding more data into the dataset generally results in higher variance. 
Note that this is consistent with our theory in Lemma~\ref{lem:upper_main}, which shows that the variance can also be exponentially amplified without strong representation conditions and low distribution shift conditions.

\begin{figure}[!h]
\centering
\subfigure[Ant-v2]{
\includegraphics[width=0.4\textwidth]{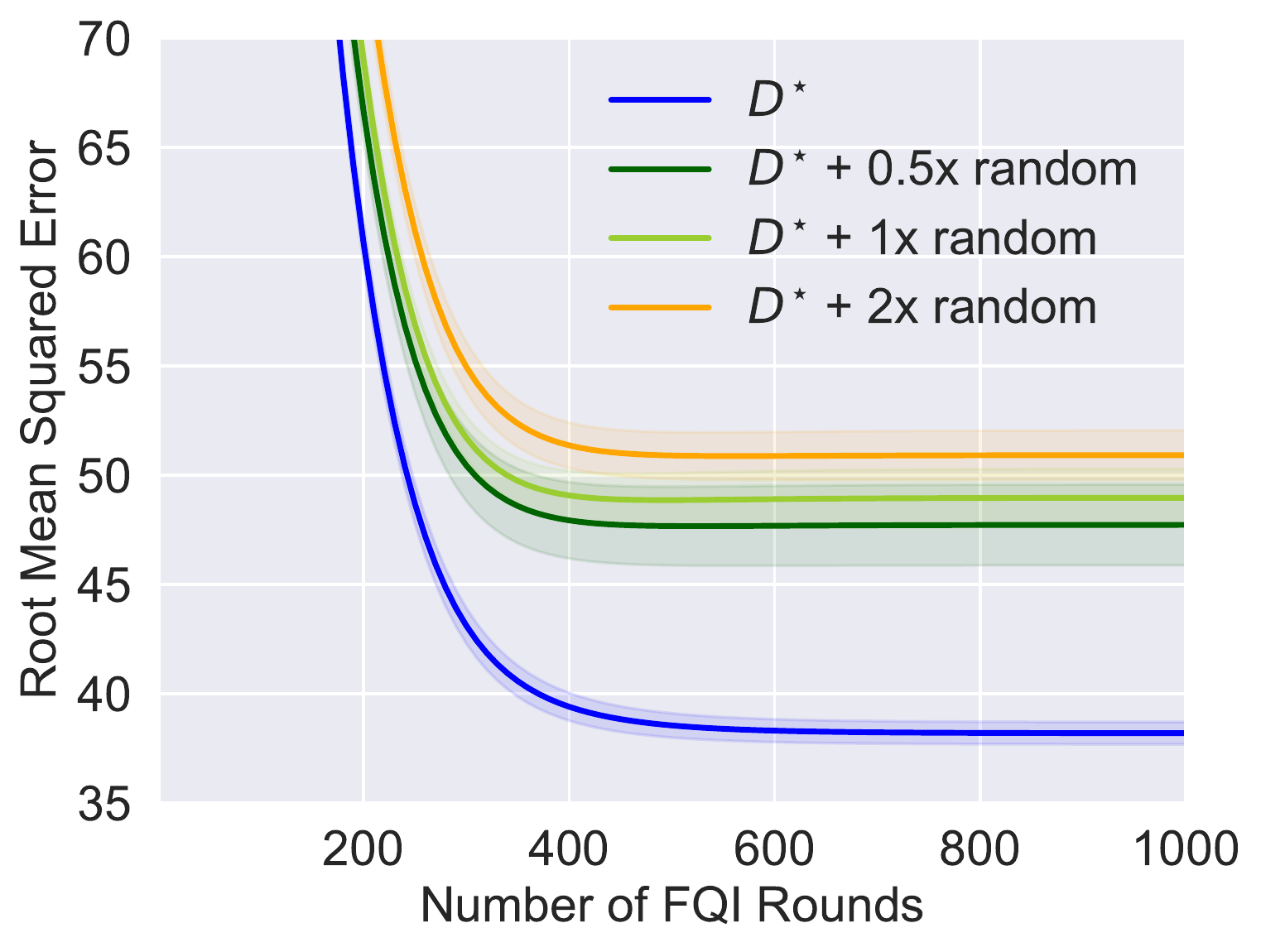}
}
\subfigure[Hopper-v2]{
\includegraphics[width=0.4\textwidth]{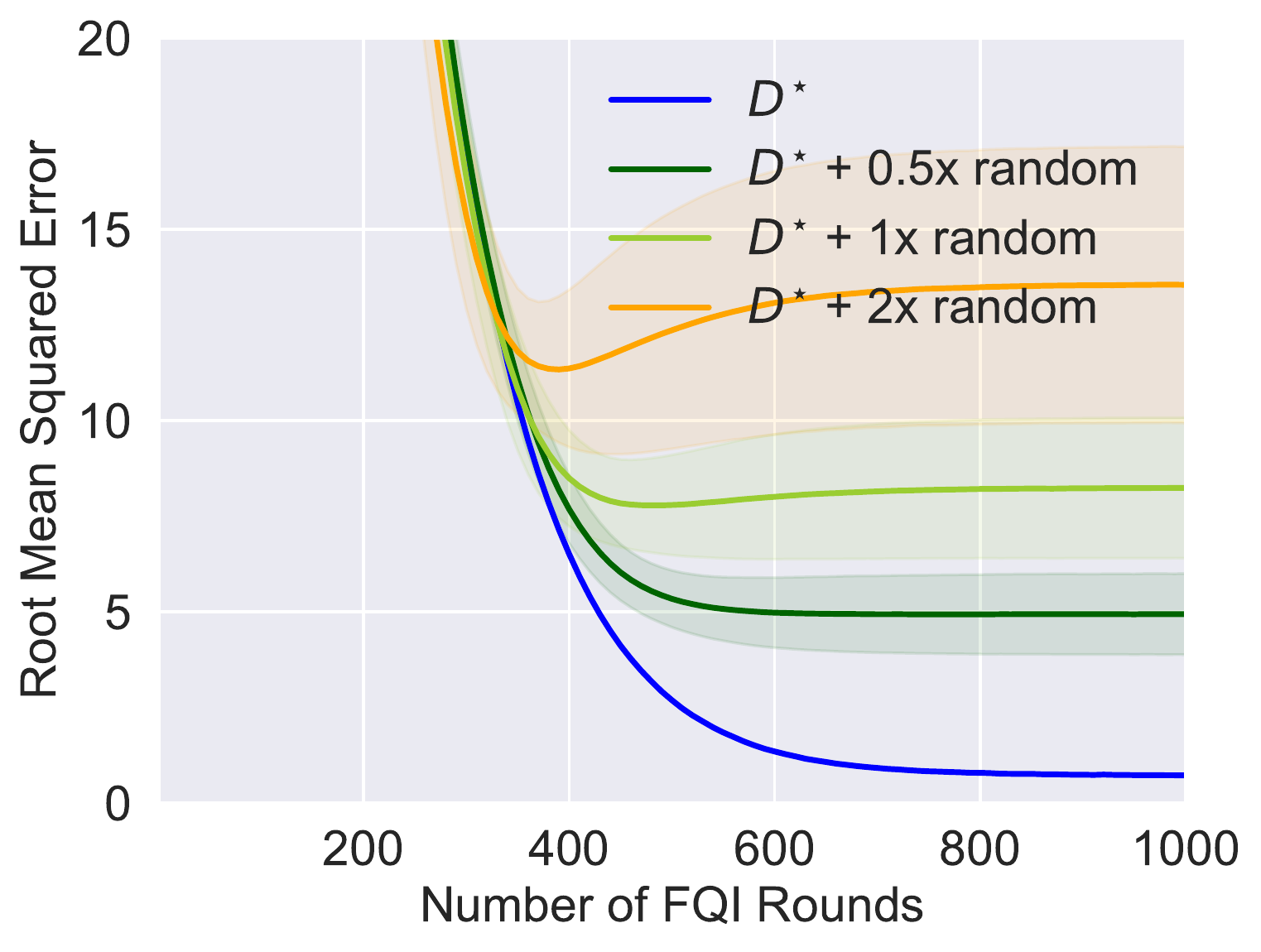}
}
\\
\subfigure[HalfCheetah-v2]{
\includegraphics[width=0.4\textwidth]{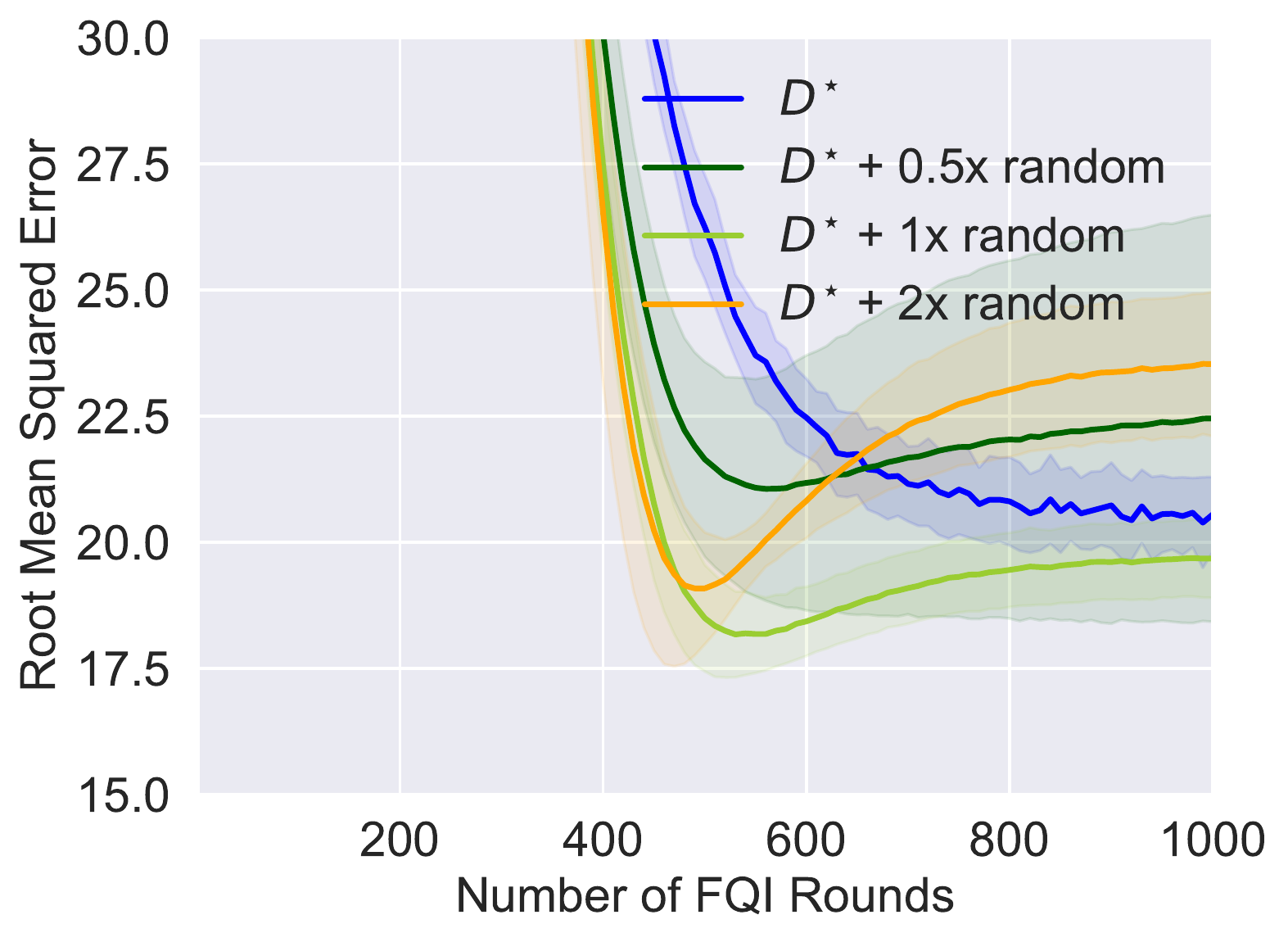}
}
\subfigure[Walker2d-v2]{
\includegraphics[width=0.4\textwidth]{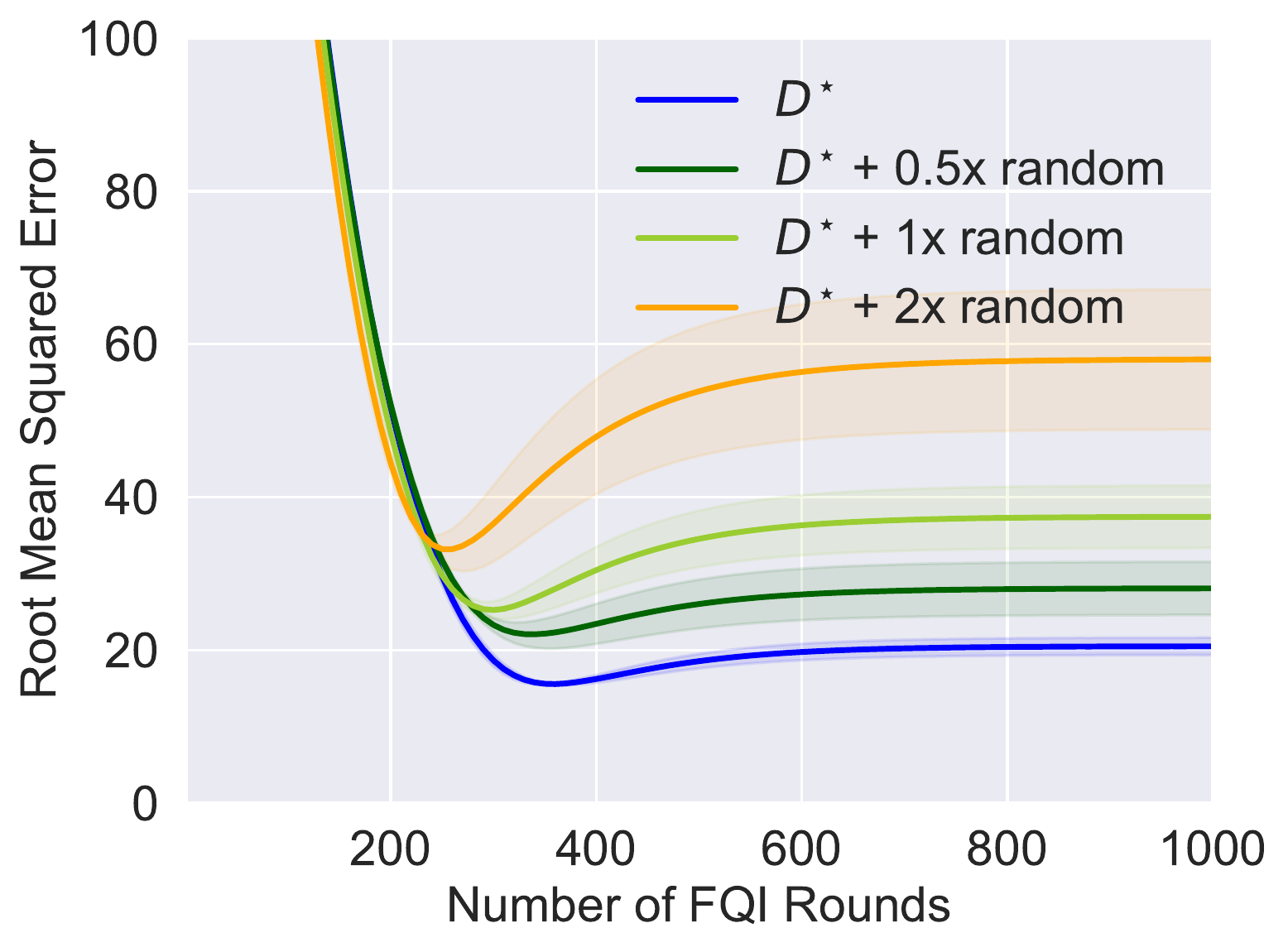}
}
\caption{Experiments for verifying the variance of Step 3.}
\label{fig:randomness}
\end{figure}

\begin{algorithm}[t]
	\centering 
	\caption{Least-Squares Temporal Difference (LSTD)}
	\label{algo:main}
	\begin{algorithmic}[1]
	\STATE \textbf{Input:} policy $\pi$ to be evaluated, number of samples $N$, regularization parameter $\lambda > 0$
	\label{line:sample}
	\STATE Take samples $(s_i, a_i) \sim \mu$, $r_i \sim r(s_i, a_i)$ and $\overline{s}_i \sim \trans(s_i, a_i)$ for each $i \in [N]$
	\STATE $\hat{\theta} =(\frac{1}{N}\sum_{i = 1}^N \phi(s_i, a_i) (\phi(s_i, a_i) - \gamma\phi(\overline{s}_i, \pi(\overline{s}_i))) + \lambda I)^{-1} (\frac{1}{N} \sum_{i = 1}^{N} \phi(s_i, a_i) \cdot r_i )) $
	\STATE $\hat{Q}(\cdot, \cdot) = \phi(\cdot, \cdot)^{\top} \hat{\theta}$ and $\hat{V}(\cdot) = \hat{Q}(\cdot, \pi(\cdot))$
	\STATE \textbf{return} $\hat{Q}(\cdot, \cdot)$
	\end{algorithmic}
	\label{algo:lstd}
\end{algorithm}

\clearpage
\section{Additional Experiment Results}\label{sec:exp_results}
\paragraph{Full Version of Figure~\ref{fig:noise}.}
In Figure~\ref{fig:noise_add}, we present the full version of Figure~\ref{fig:noise}, where we plot of the performance of FQI  with features from pre-trained neural networks and datasets induced by random policies, on all the six environments. 

\paragraph{Full Version of Figure~\ref{fig:suboptimal}.}
In Figure~\ref{fig:suboptimal_add}, we present the full version of Figure~\ref{fig:suboptimal}, where we plot of the performance of FQI  with features from pre-trained neural networks and datasets induced by lower performing policies, on all the six environments. 

\paragraph{Full Version of Figure~\ref{fig:rff}.}
In Figure~\ref{fig:rff_add}, we present the full version of Figure~\ref{fig:rff}, where we plot of the performance of FQI  with random Fourier features and datasets induced by random policies, on all the six environments. 

\paragraph{Full Version of Figure~\ref{fig:ridge}.}
In Figure~\ref{fig:ridge_ant} to Figure~\ref{fig:ridge_walker}, we present the full version of Figure~\ref{fig:rff}, where we plot of the performance of FQI with features from pre-trained neural networks, datasets induced by random policies, and different regularization parameter $\lambda$, on all the six environments. 
Here we vary the number of additional samples from random trajectories and the regularization parameter $\lambda$. 

\paragraph{Full Version of Table~\ref{table:lstd}}
In Table~\ref{table:lstd_add}, we present the full version of Table~\ref{table:lstd}, where we provide the performance of LSTD with features from pre-trained neural networks and distributions induced by random policies, on all the six environments. 

%

\begin{figure}[!h]
\thisfloatpagestyle{empty}
\centering
\subfigure[Ant-v2]{
\includegraphics[width=0.30\textwidth]{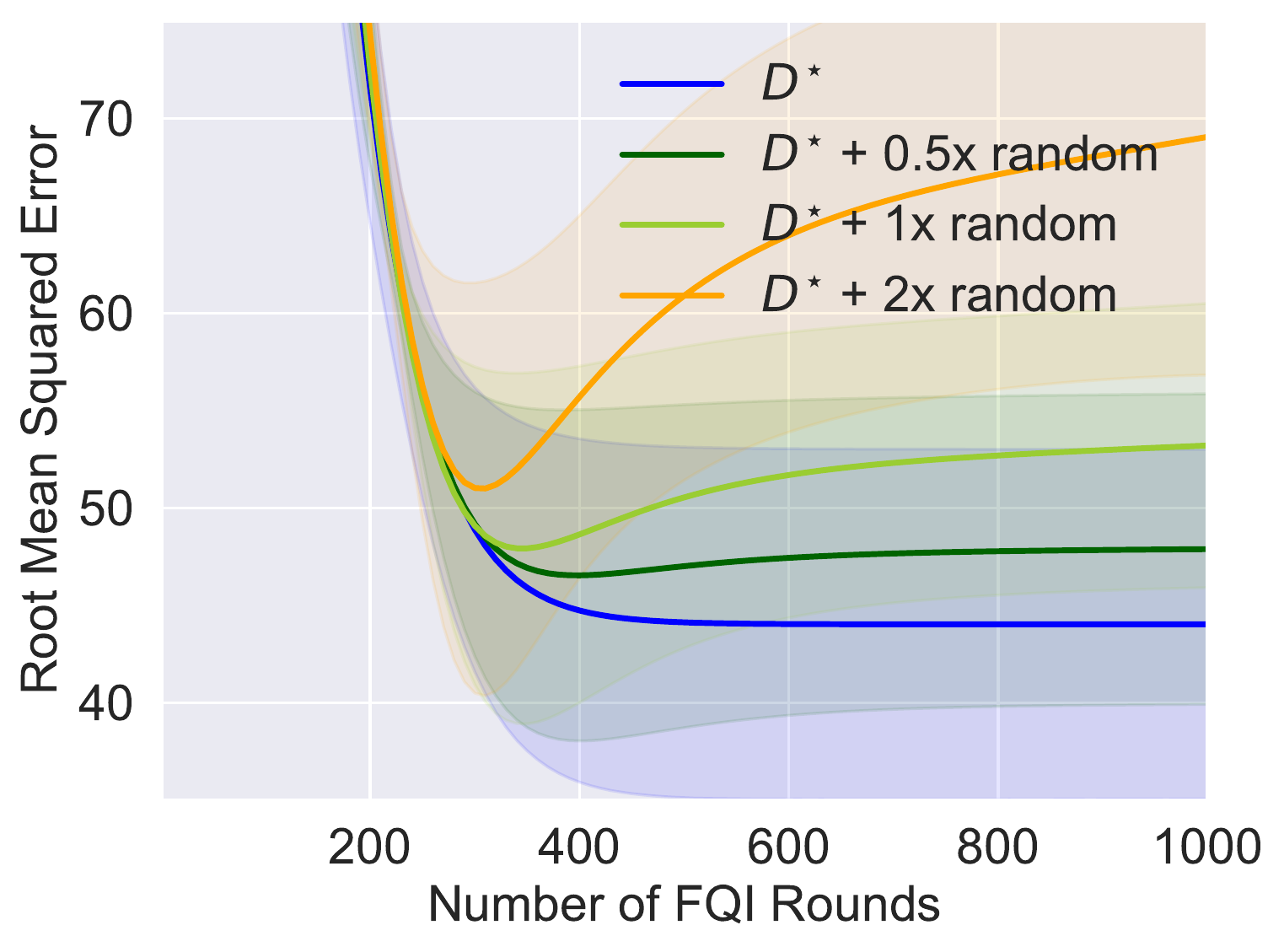}
}
\subfigure[CartPole-v0]{
\includegraphics[width=0.30\textwidth]{fig/fig_CartPole-v0_0}
}
\subfigure[Hopper-v2]{
\includegraphics[width=0.30\textwidth]{fig/fig_Hopper-v2_0}
}

\subfigure[HalfCheetah-v2]{
\includegraphics[width=0.30\textwidth]{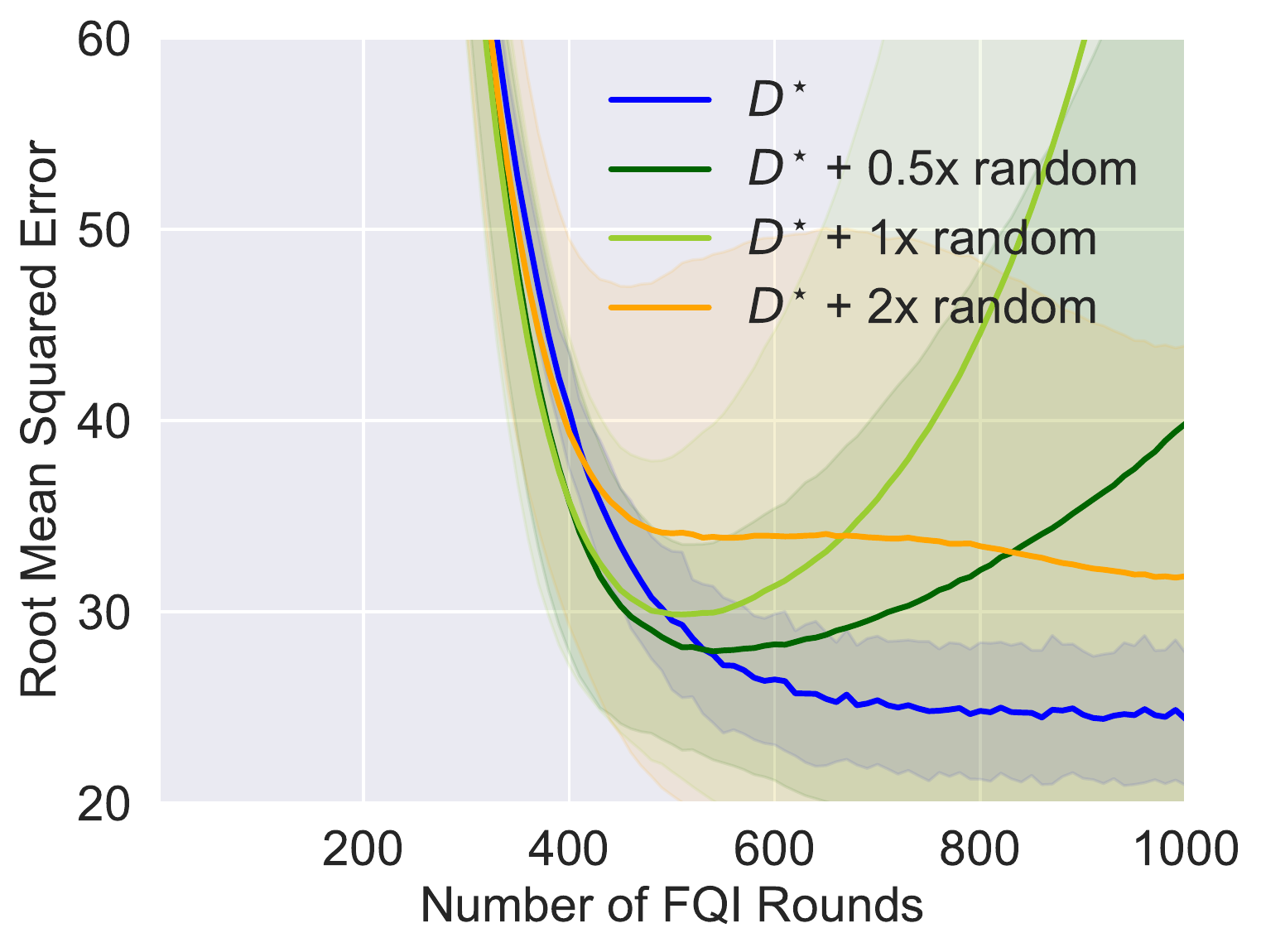}
}
\subfigure[MountainCar-v0]{
\includegraphics[width=0.30\textwidth]{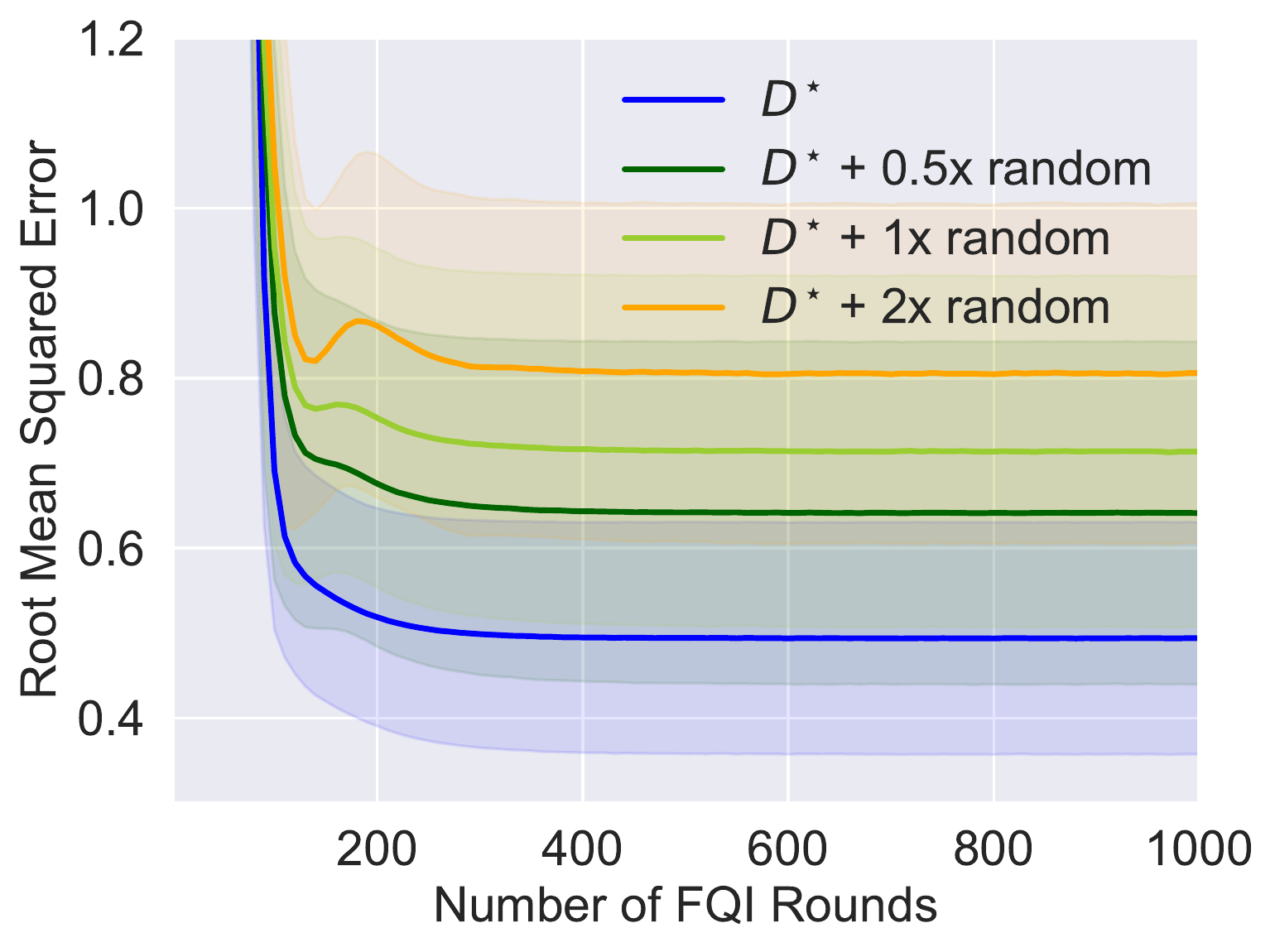}
}
\subfigure[Walker2d-v2]{
\includegraphics[width=0.30\textwidth]{fig/fig_Walker2d-v2_0}
}
\caption{Performance of FQI with features from pre-trained neural networks and datasets induced by random policies.}
\label{fig:noise_add}

\thisfloatpagestyle{empty}
\centering
\subfigure[Ant-v2]{
\includegraphics[width=0.30\textwidth]{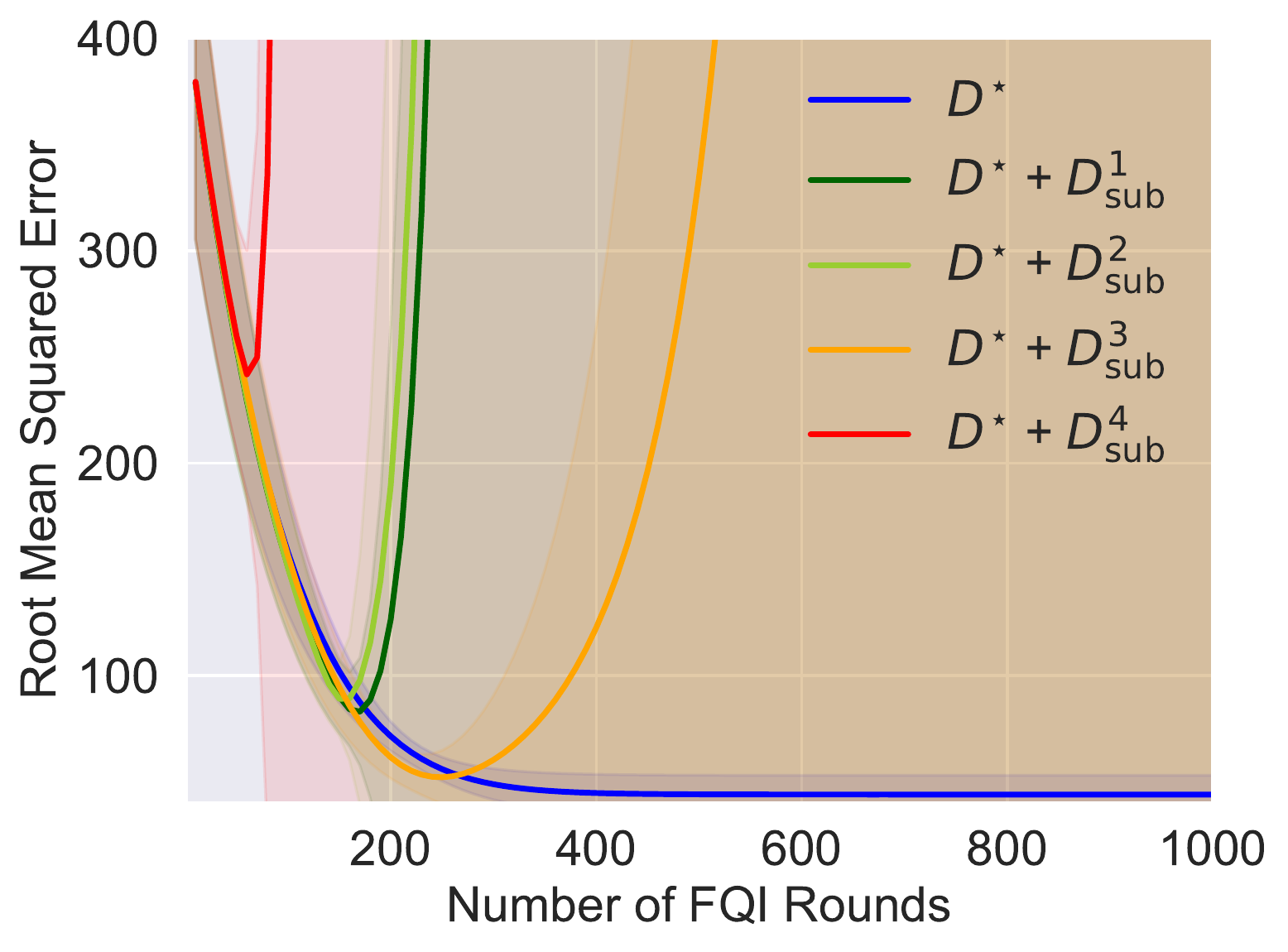}
}
\subfigure[CartPole-v0]{
\includegraphics[width=0.30\textwidth]{fig/fig_CartPole-v0_1}
}
\subfigure[Hopper-v2]{
\includegraphics[width=0.30\textwidth]{fig/fig_Hopper-v2_1}
}

\subfigure[HalfCheetah-v2]{
\includegraphics[width=0.30\textwidth]{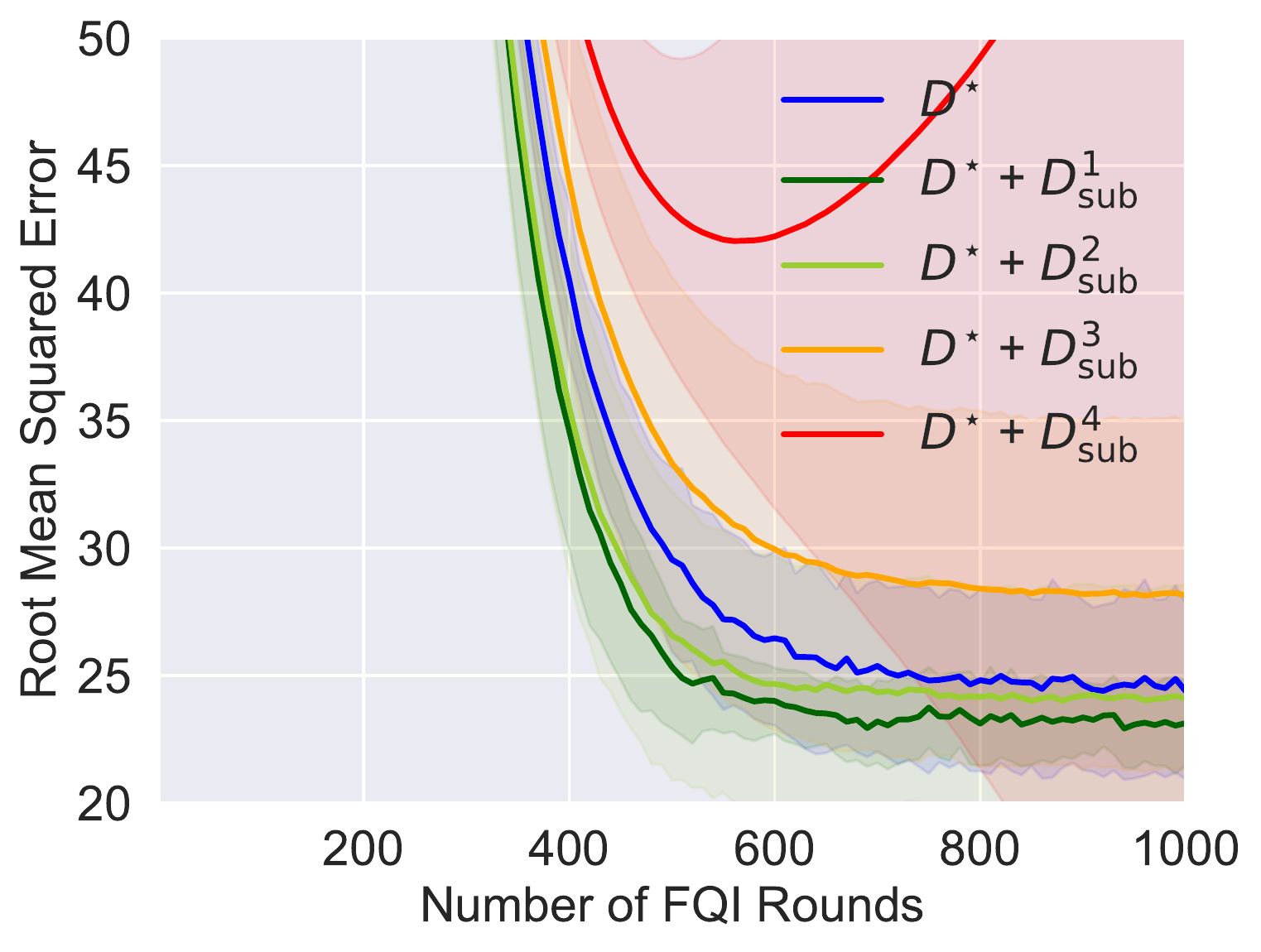}
}
\subfigure[MountainCar-v0]{
\includegraphics[width=0.30\textwidth]{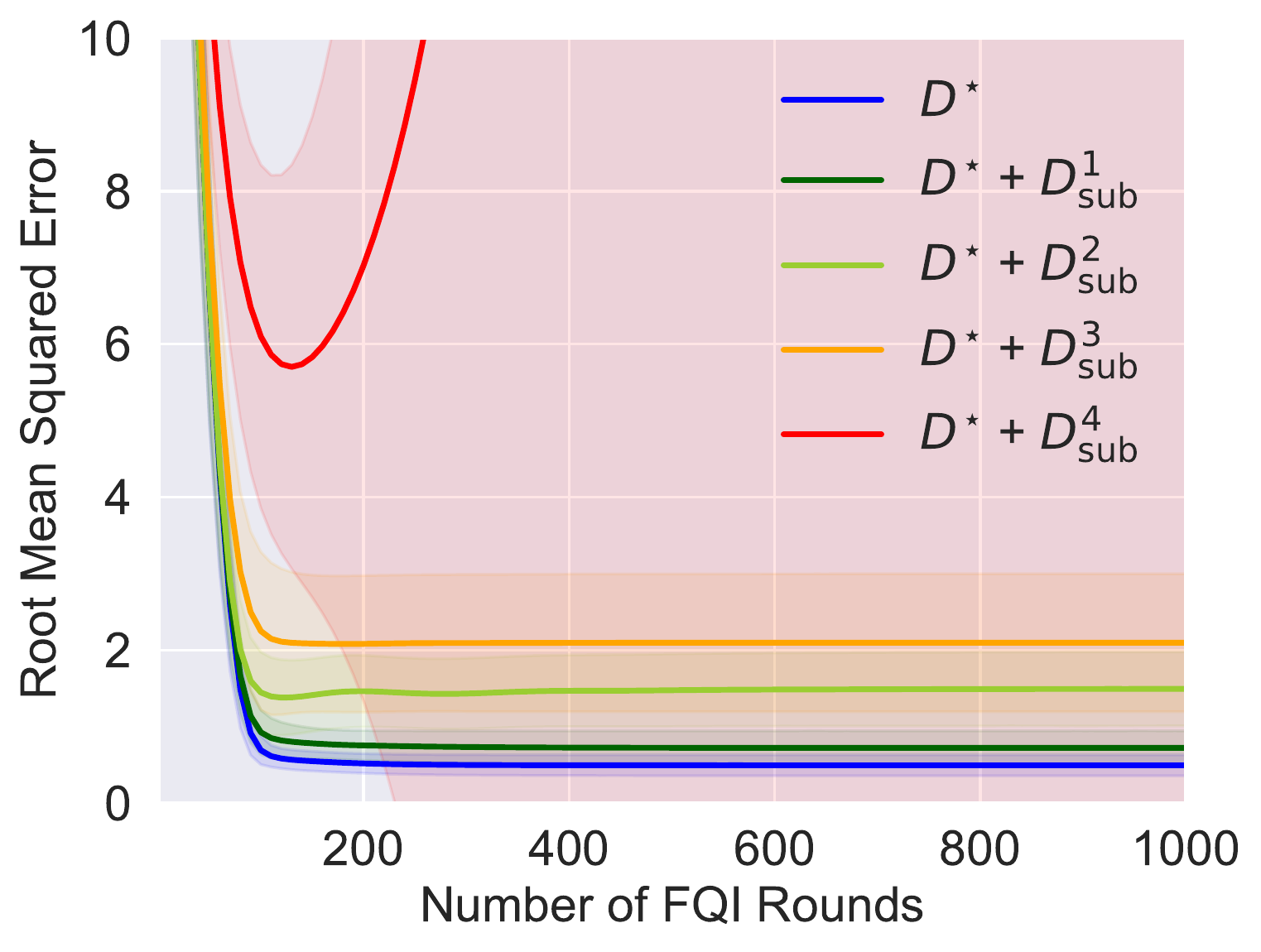}
}
\subfigure[Walker2d-v2]{
\includegraphics[width=0.30\textwidth]{fig/fig_Walker2d-v2_1}
}
\caption{Performance of FQI with features from pre-trained neural networks and datasets induced by lower performance policies.}
\label{fig:suboptimal_add}
\end{figure}

\begin{figure}[!h]
\thisfloatpagestyle{empty}
\centering
\subfigure[Ant-v2]{
\includegraphics[width=0.30\textwidth]{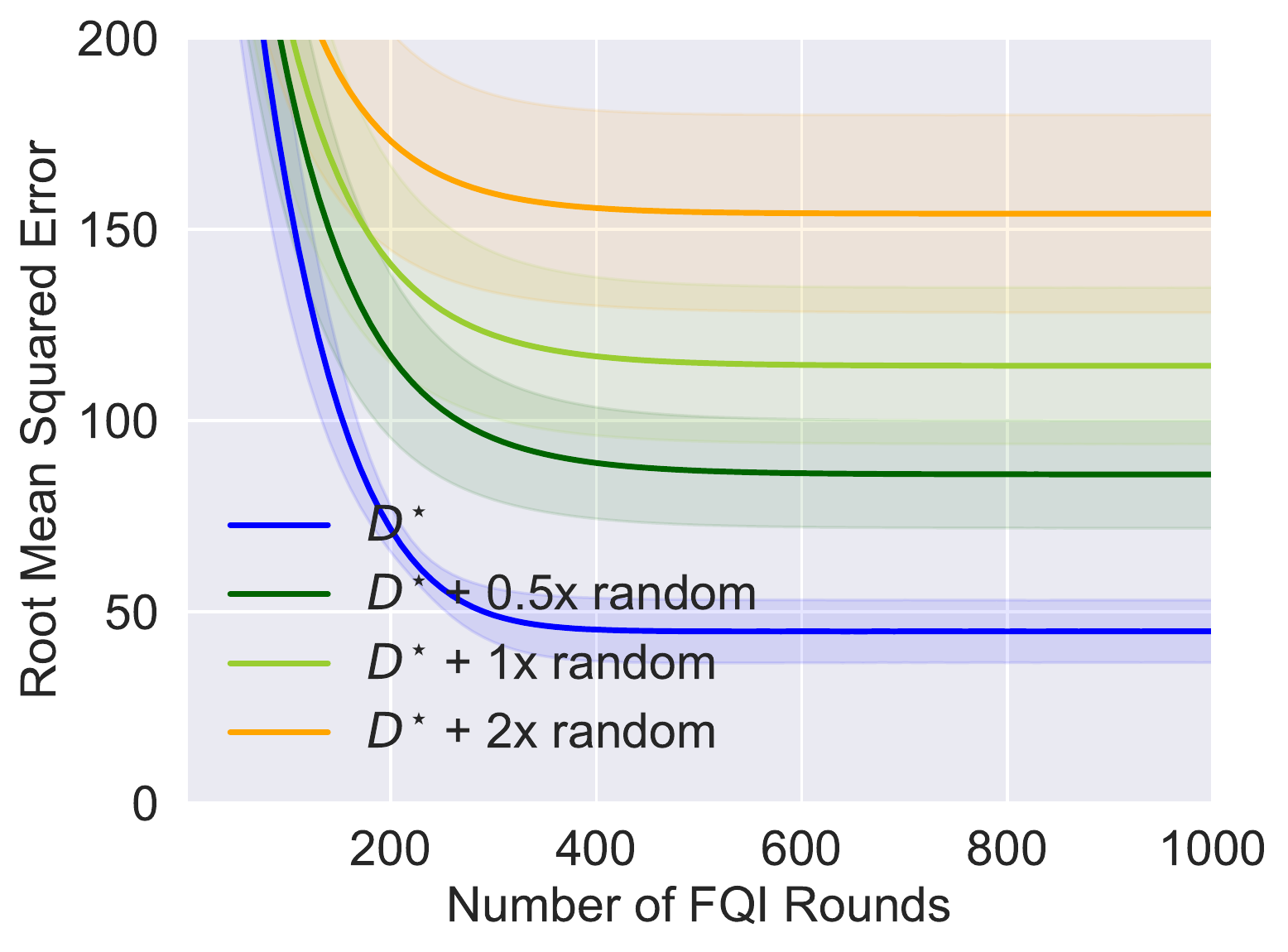}
}
\subfigure[CartPole-v0]{
\includegraphics[width=0.30\textwidth]{fig/fig_CartPole-v0_2}
}
\subfigure[Hopper-v2]{
\includegraphics[width=0.30\textwidth]{fig/fig_Hopper-v2_2}
}

\subfigure[HalfCheetah-v2]{
\includegraphics[width=0.30\textwidth]{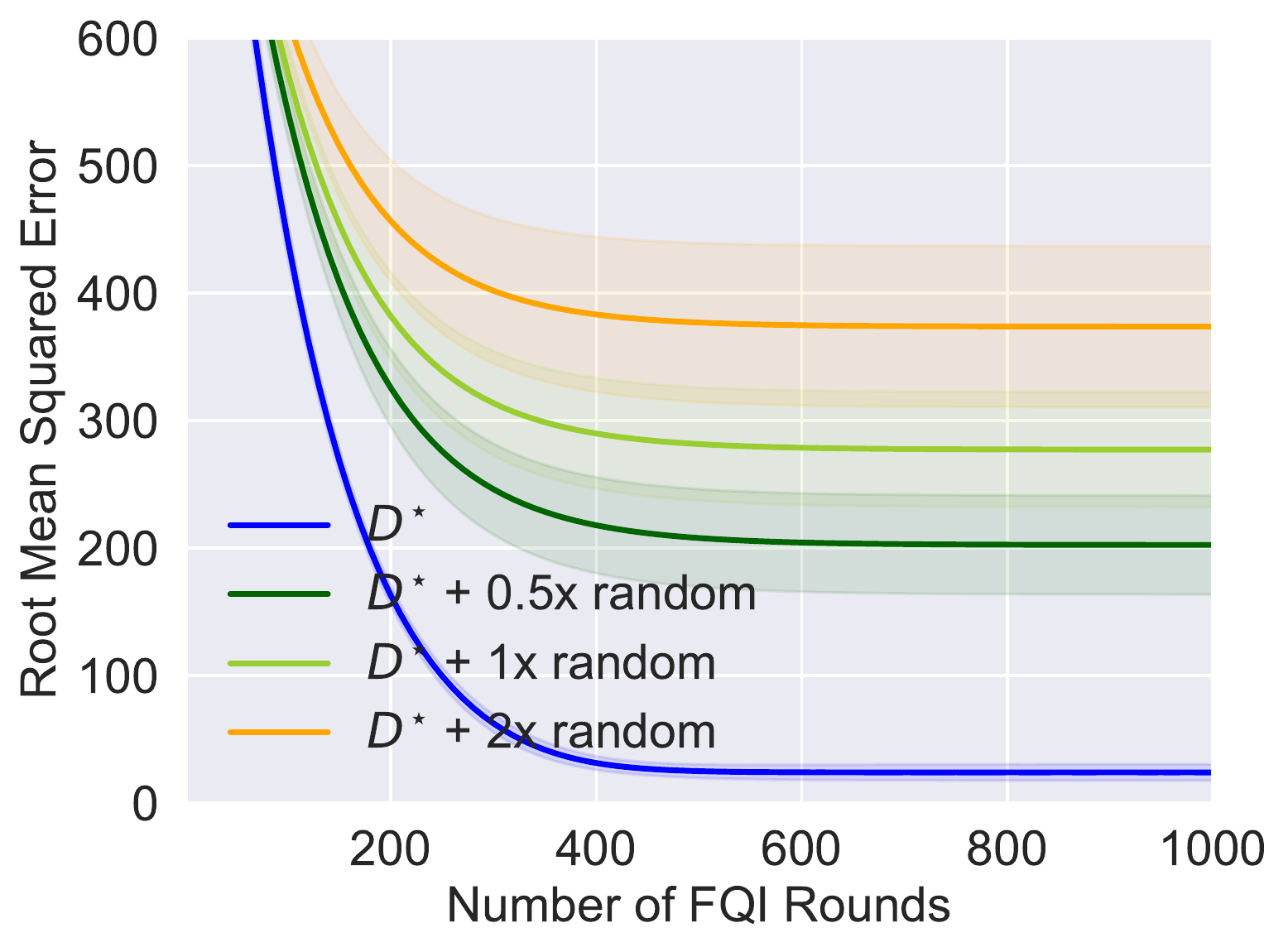}
}
\subfigure[MountainCar-v0]{
\includegraphics[width=0.30\textwidth]{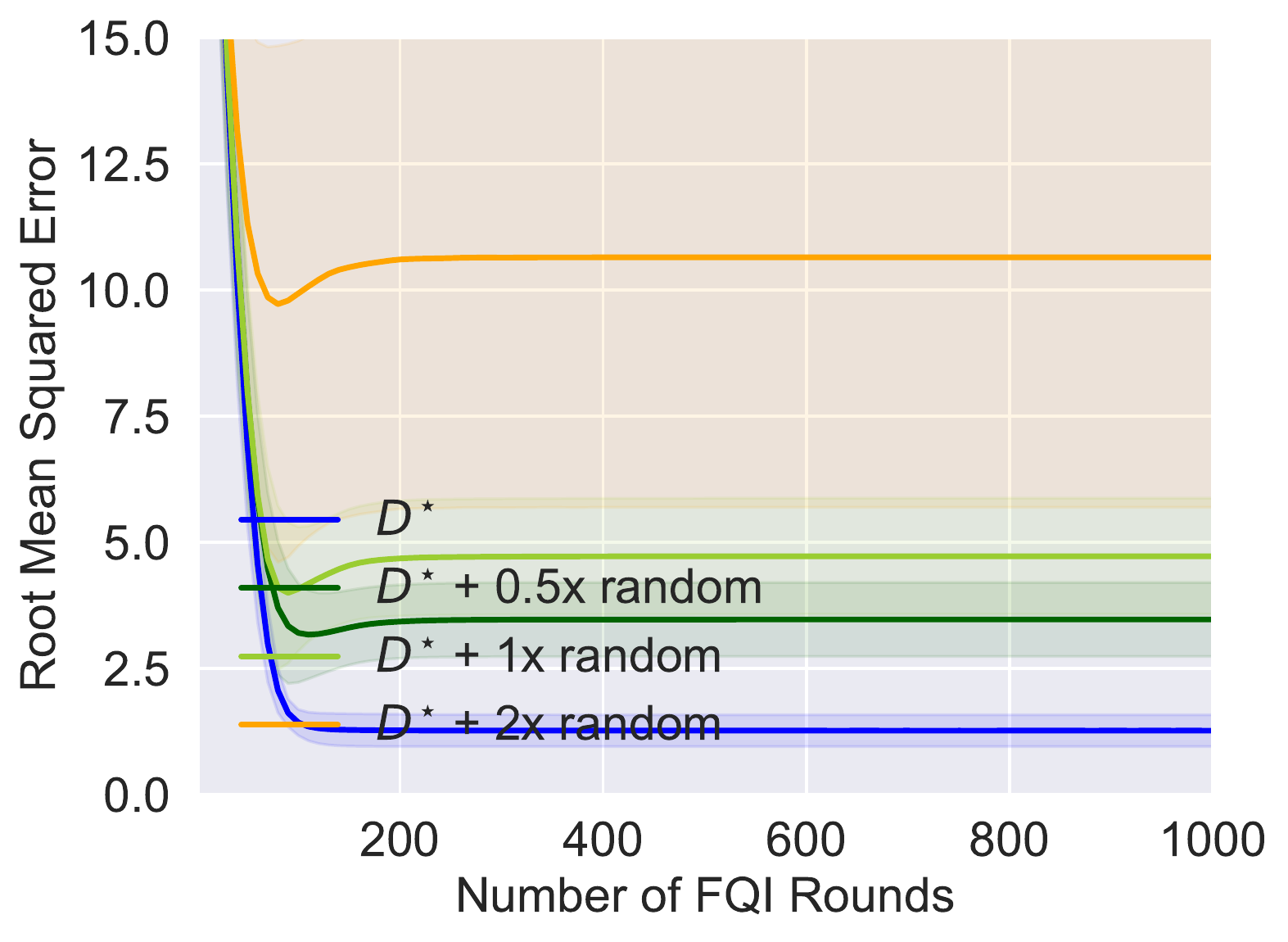}
}
\subfigure[Walker2d-v2]{
\includegraphics[width=0.30\textwidth]{fig/fig_Walker2d-v2_2}
}
\caption{Performance of FQI with random Fourier features and datasets induced by random policies.}
\label{fig:rff_add}

\thisfloatpagestyle{empty}
\subfigure[$D^{\star}$ ]{
\includegraphics[width=0.30\textwidth]{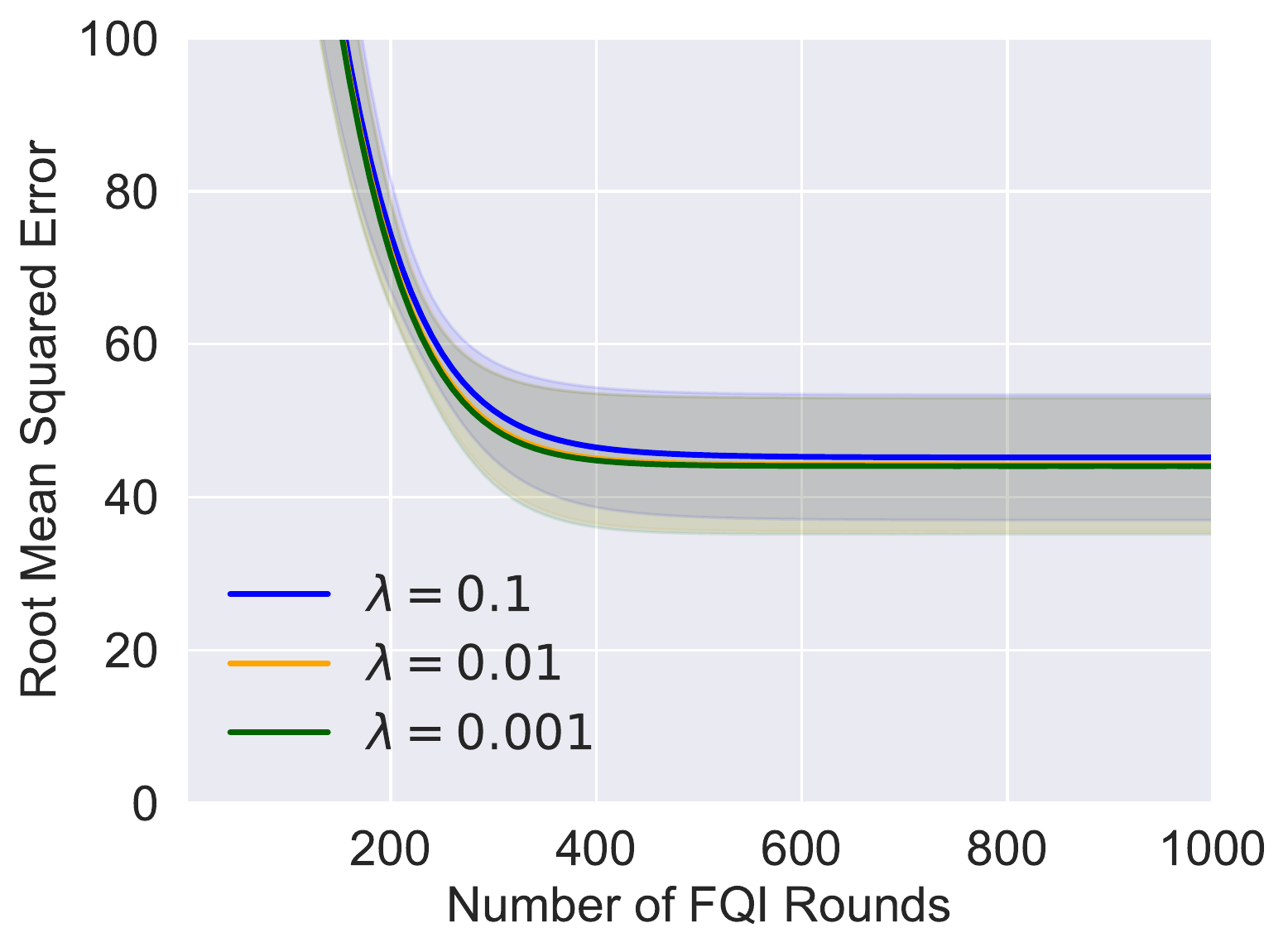}
}
\subfigure[$D^{\star}$ + 1x random]{
\includegraphics[width=0.30\textwidth]{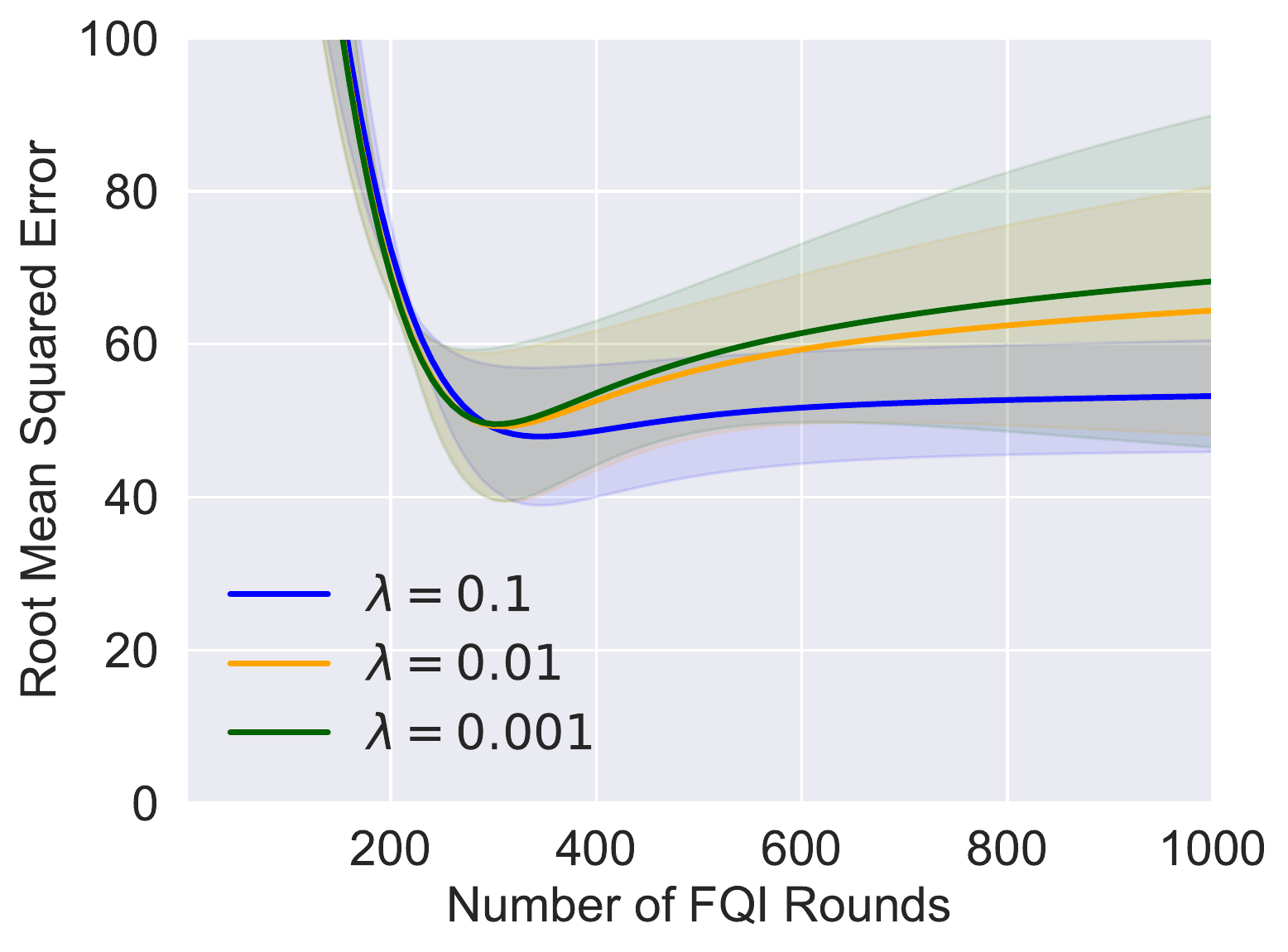}
}
\subfigure[$D^{\star}$+ 2x random]{
\includegraphics[width=0.30\textwidth]{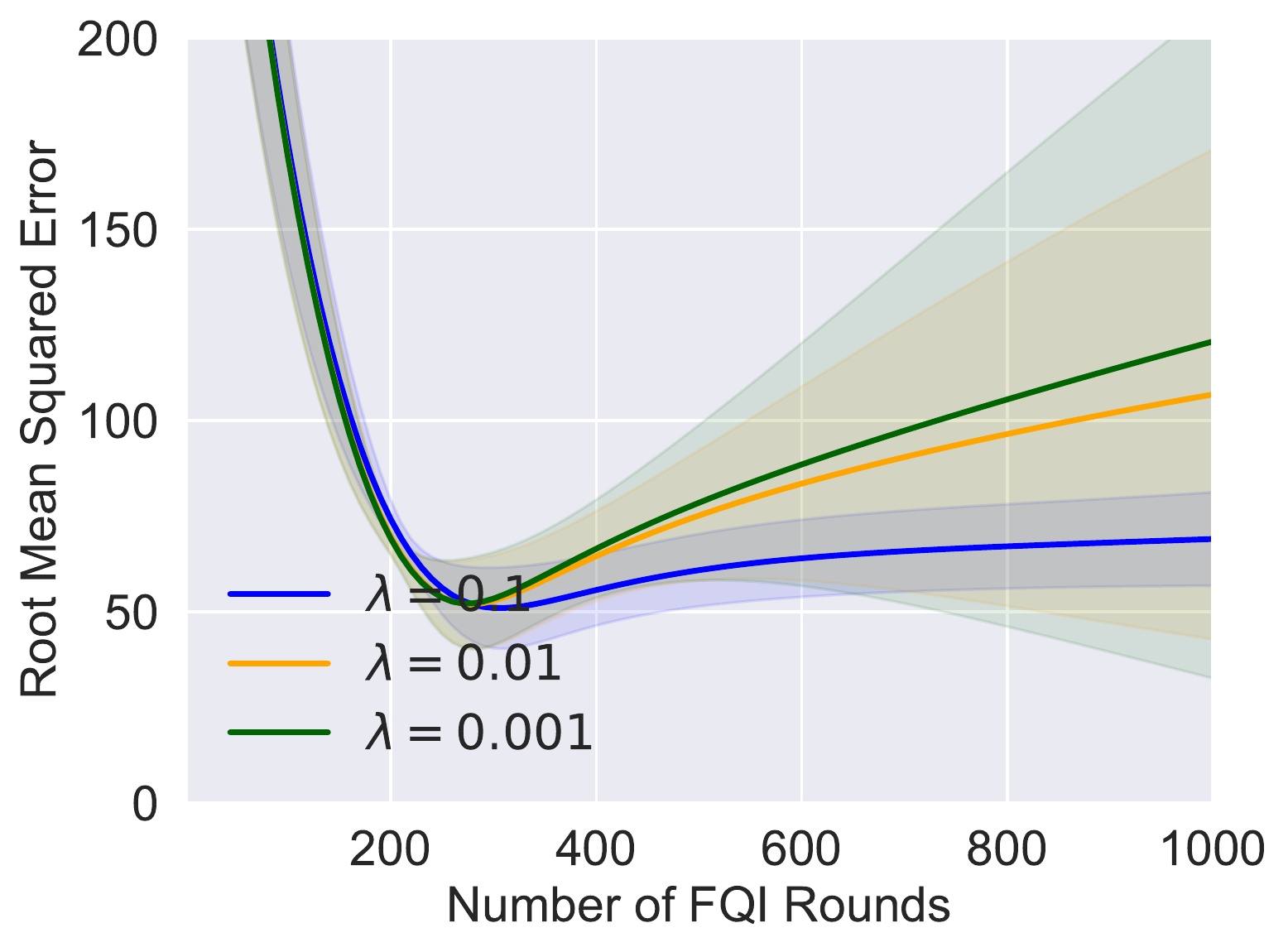}
}
\caption{Performance of FQI on Ant-v2, with features from pre-trained neural networks, datasets induced by random policies, and different regularization parameter $\lambda$.}
\label{fig:ridge_ant}

\subfigure[$D^{\star}$ ]{
\includegraphics[width=0.30\textwidth]{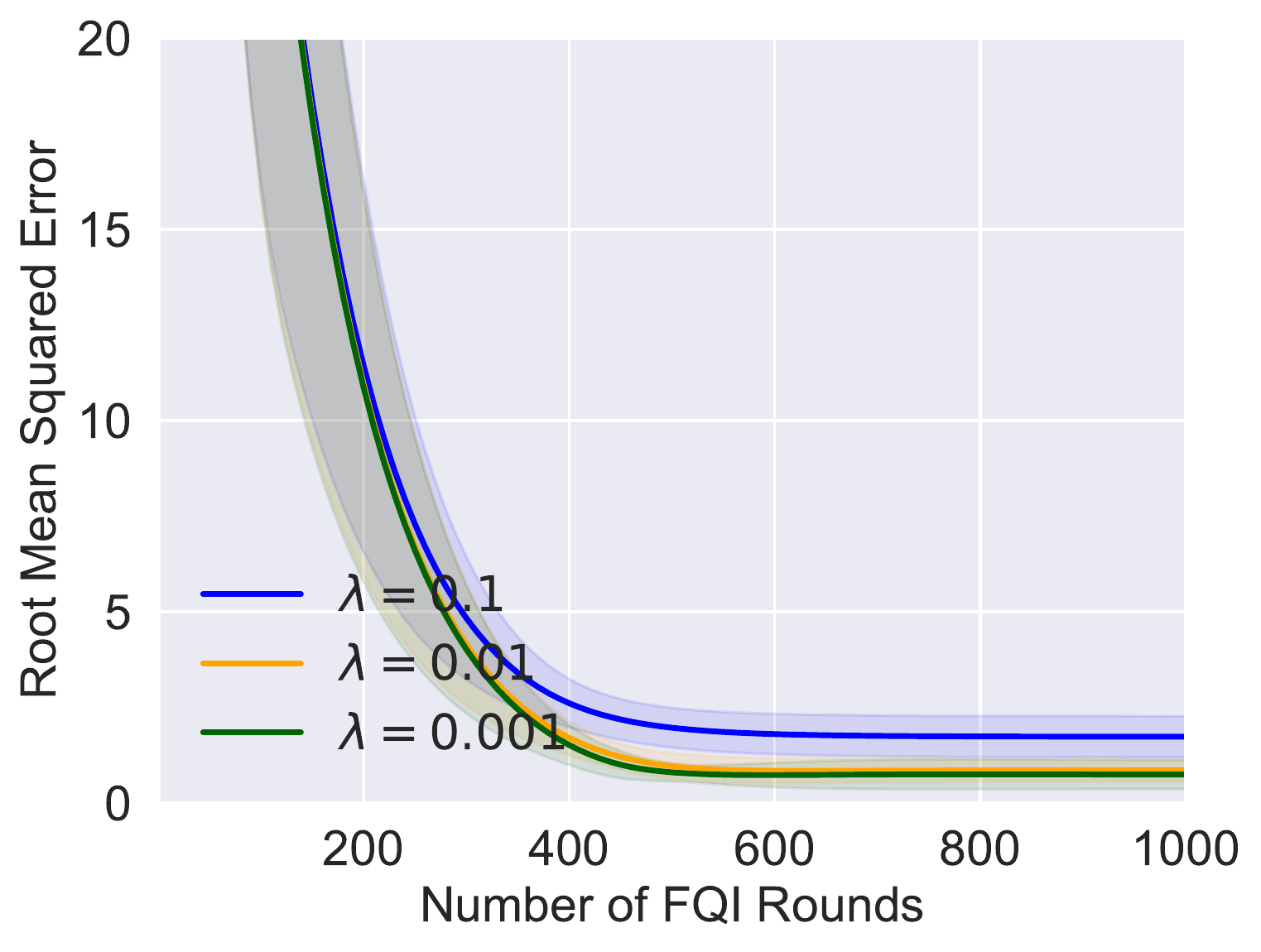}
}
\subfigure[$D^{\star}$ + 1x random]{
\includegraphics[width=0.30\textwidth]{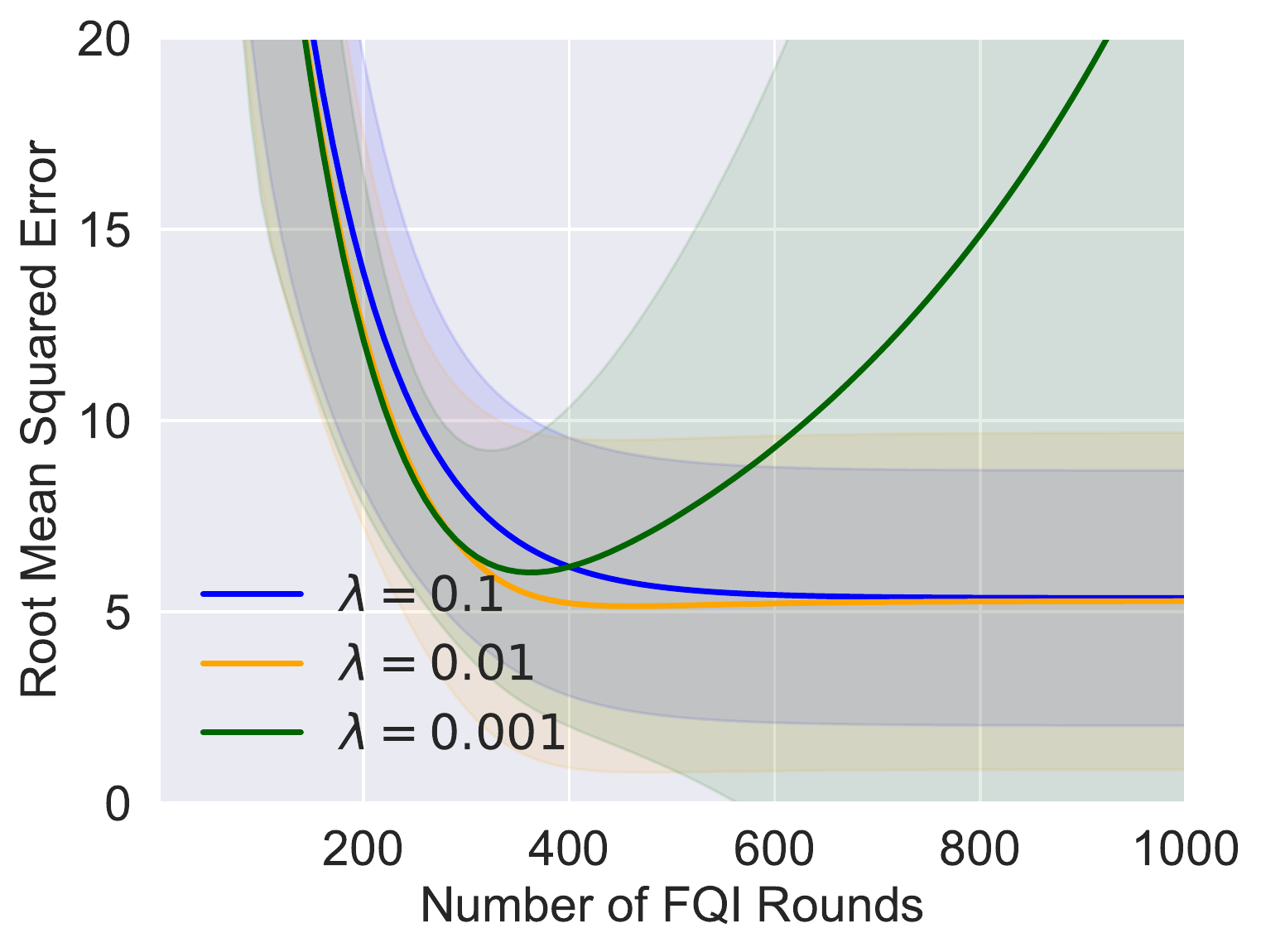}
}
\subfigure[$D^{\star}$+ 2x random]{
\includegraphics[width=0.30\textwidth]{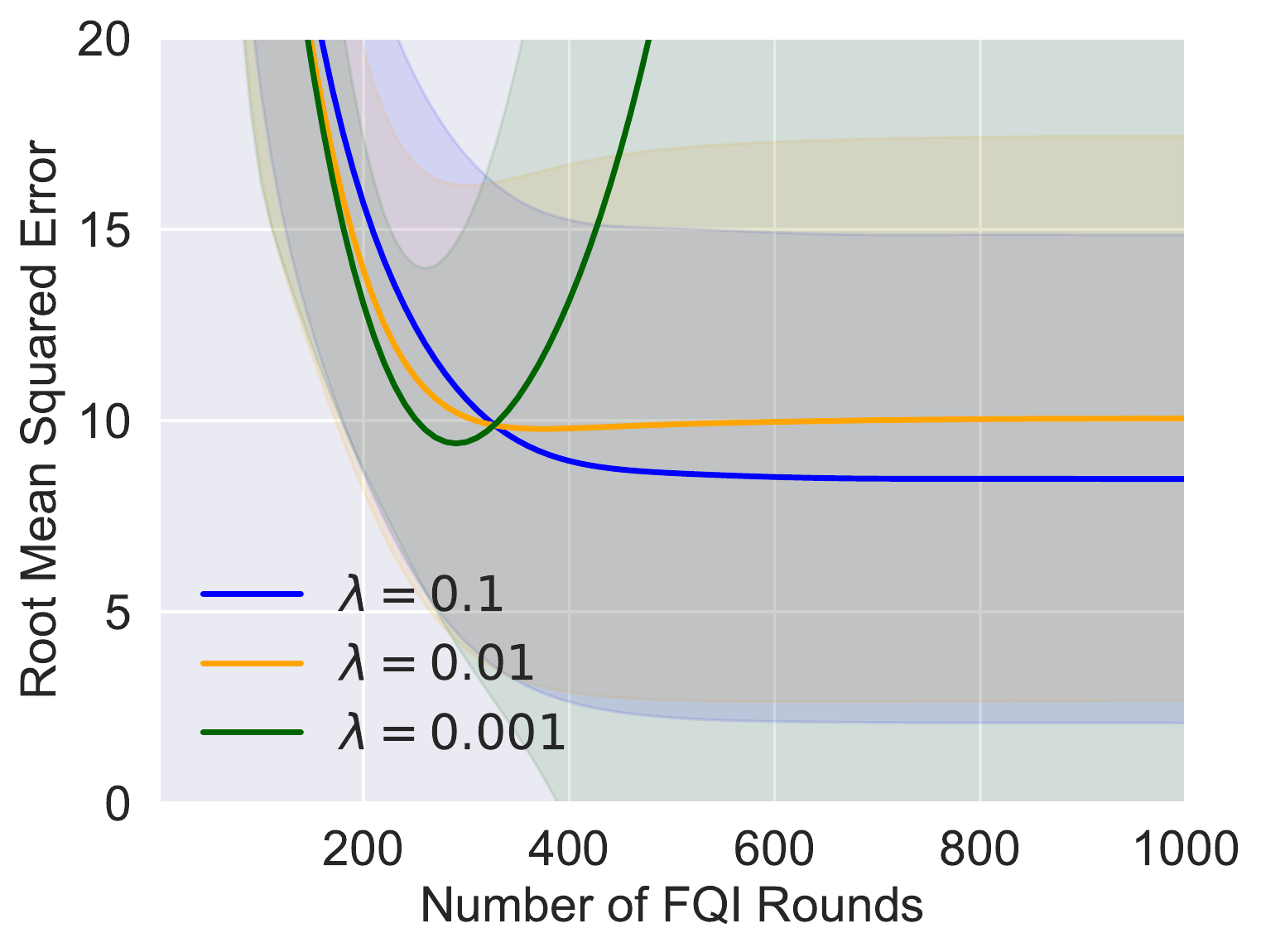}
}
\caption{Performance of FQI on CartPole-v0, with features from pre-trained neural networks, datasets induced by random policies, and different regularization parameter $\lambda$.}
\label{fig:ridge_cartpole}
\end{figure}
\begin{figure}

\thisfloatpagestyle{empty}
\subfigure[$D^{\star}$ ]{
\includegraphics[width=0.30\textwidth]{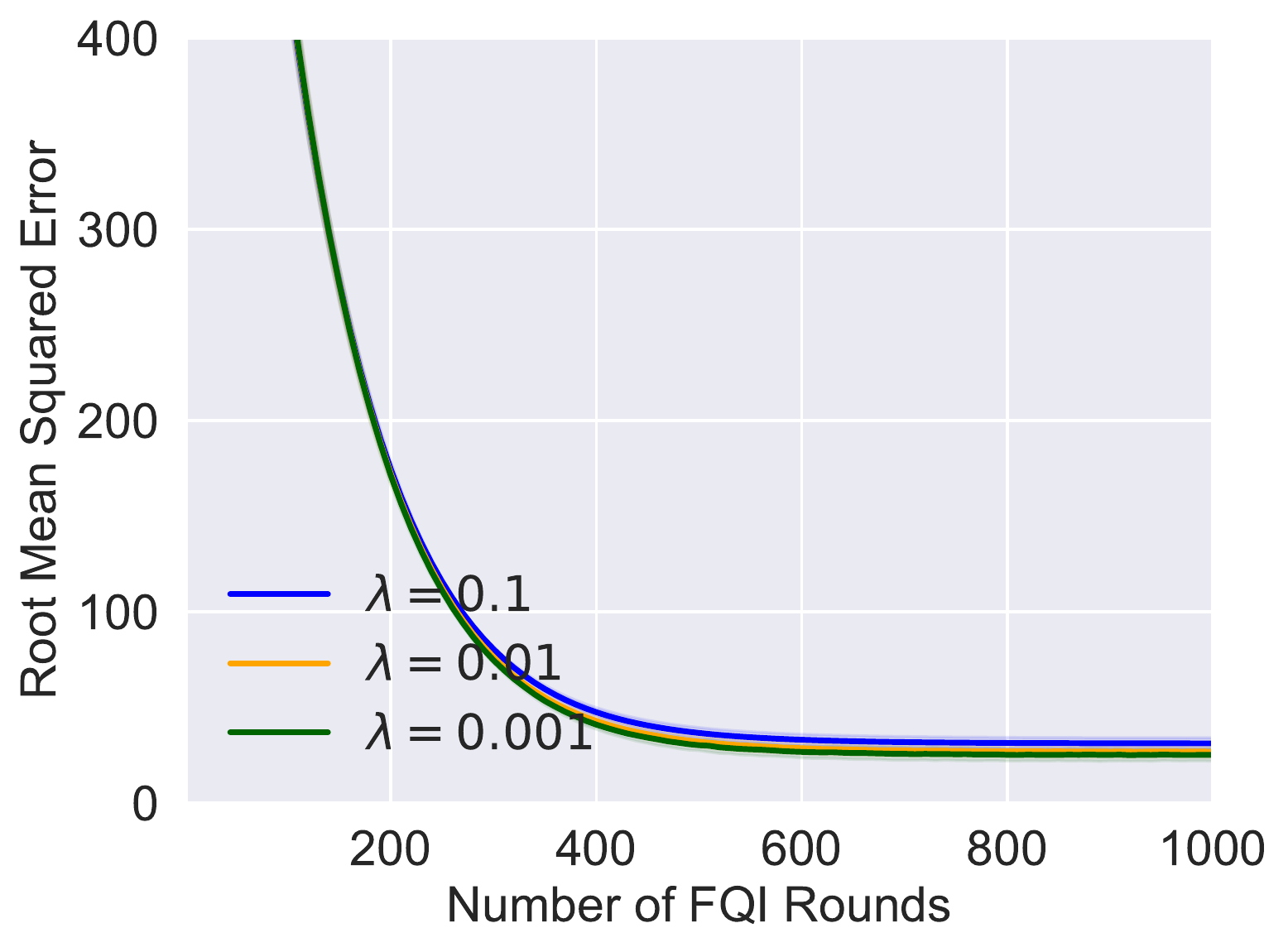}
}
\subfigure[$D^{\star}$ + 1x random]{
\includegraphics[width=0.30\textwidth]{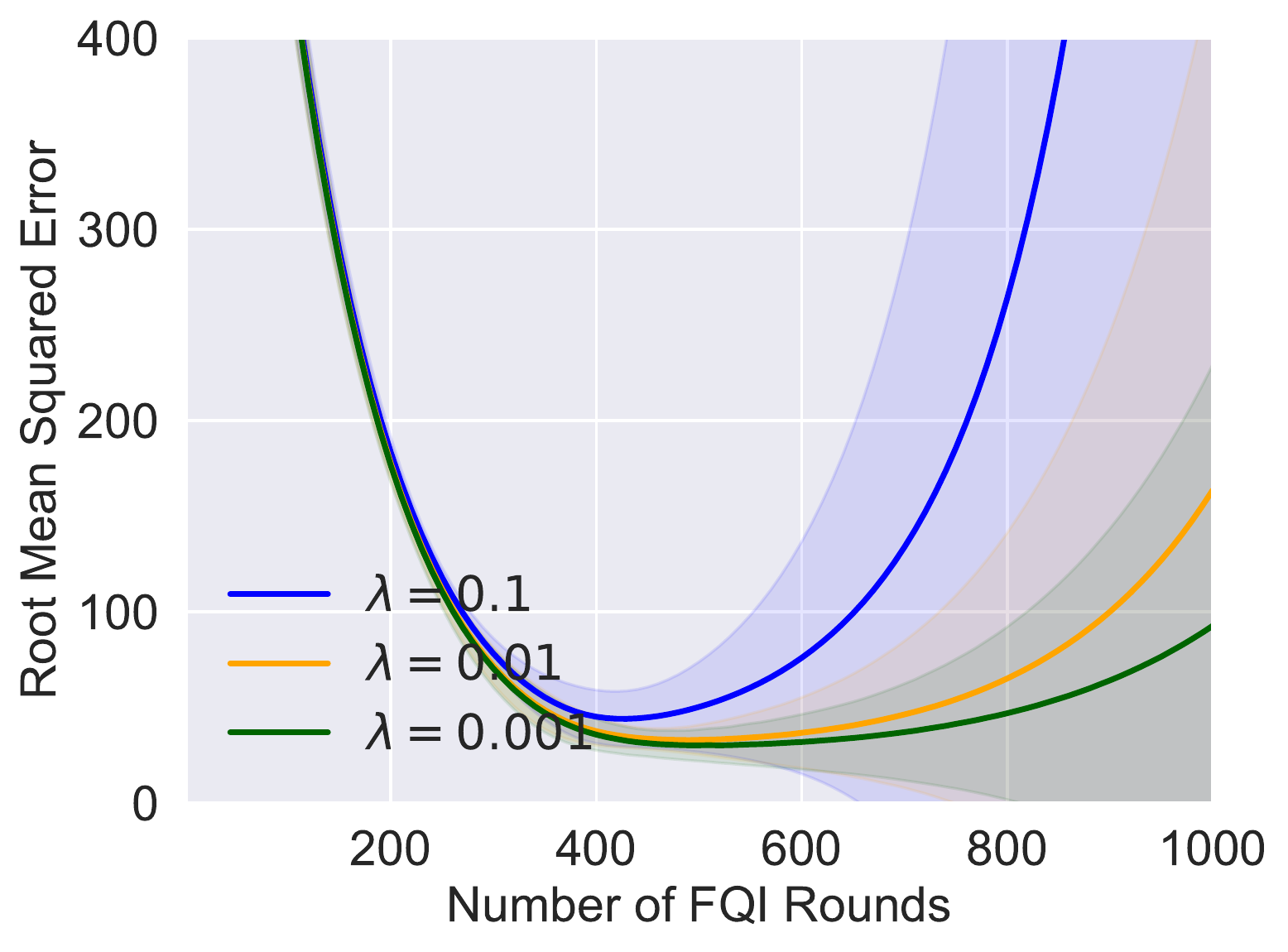}
}
\subfigure[$D^{\star}$+ 2x random]{
\includegraphics[width=0.30\textwidth]{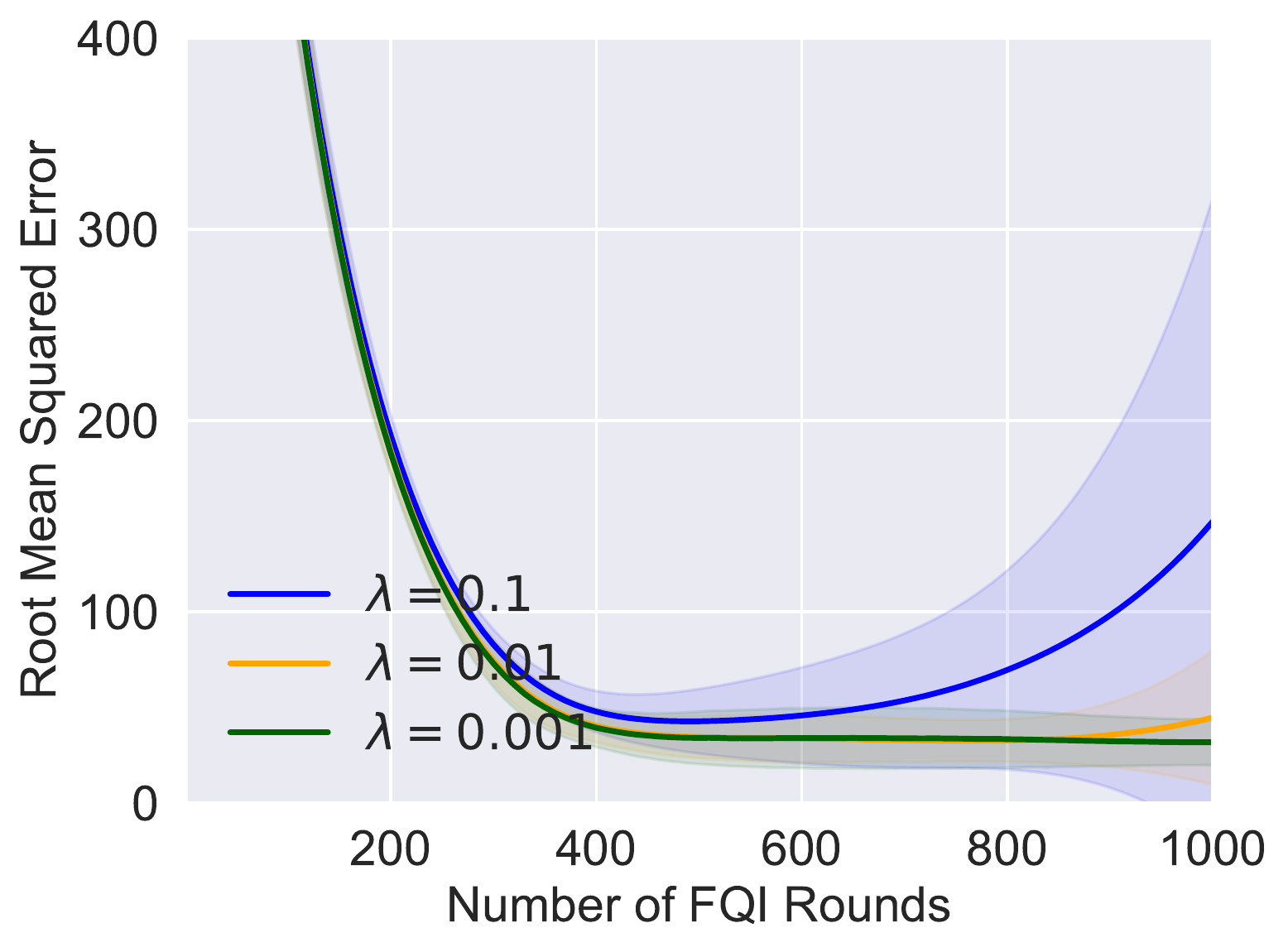}
}
\caption{Performance of FQI on HalfCheetah-v2, with features from pre-trained neural networks, datasets induced by random policies, and different regularization parameter $\lambda$.}
\label{fig:ridge_halfcheetah}

\subfigure[$D^{\star}$ ]{
\includegraphics[width=0.30\textwidth]{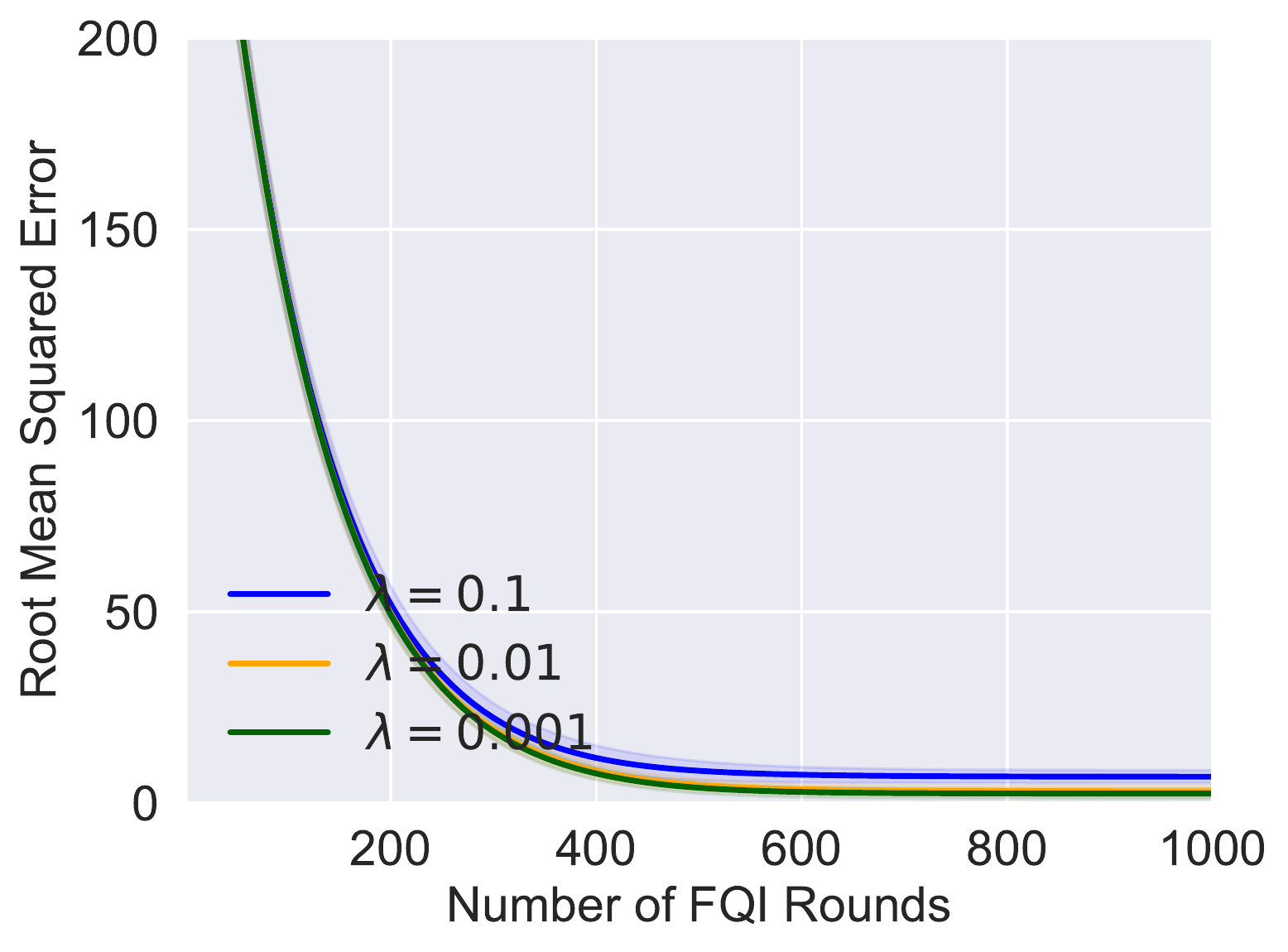}
}
\subfigure[$D^{\star}$ + 1x random]{
\includegraphics[width=0.30\textwidth]{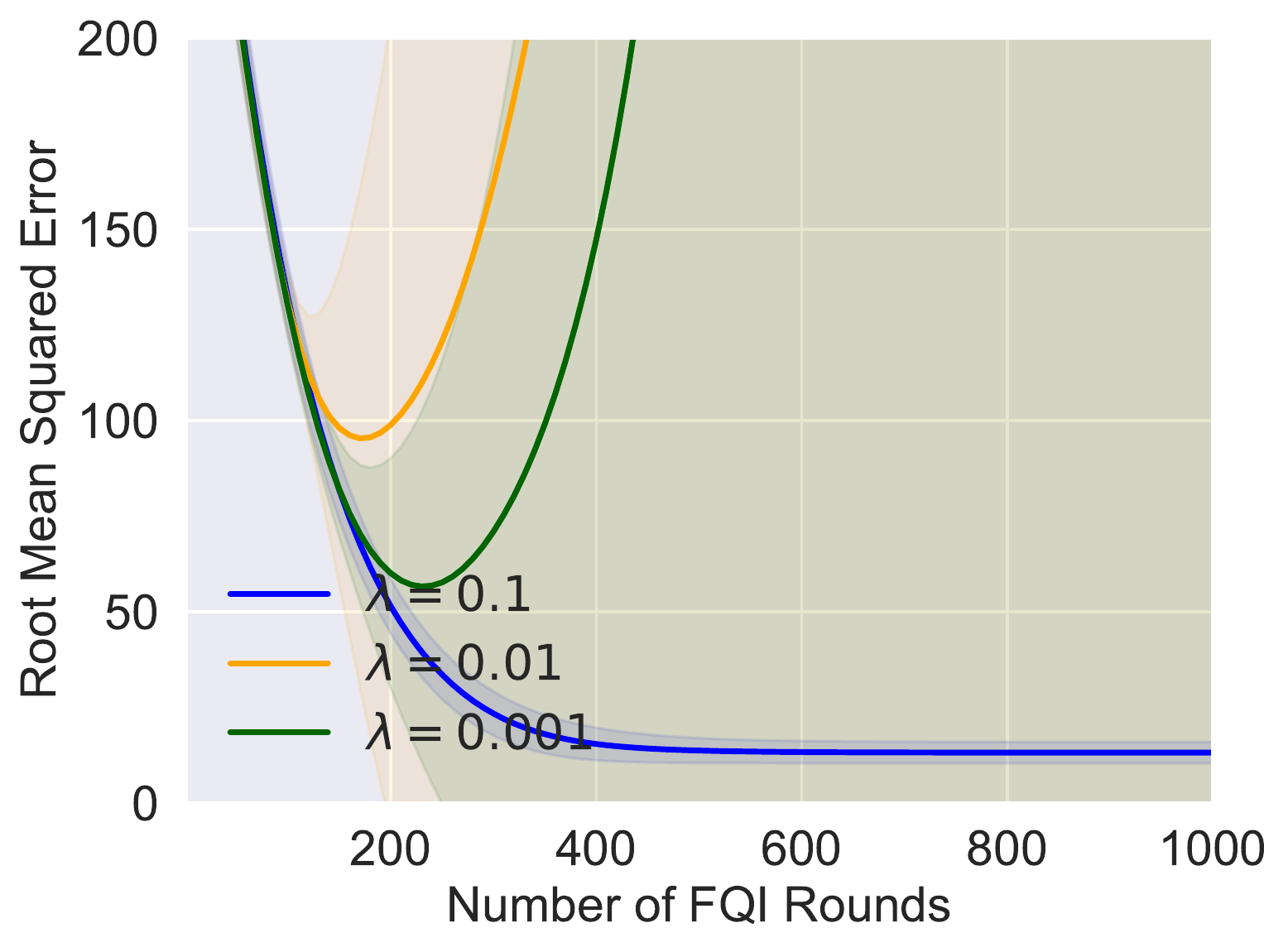}
}
\subfigure[$D^{\star}$+ 2x random]{
\includegraphics[width=0.30\textwidth]{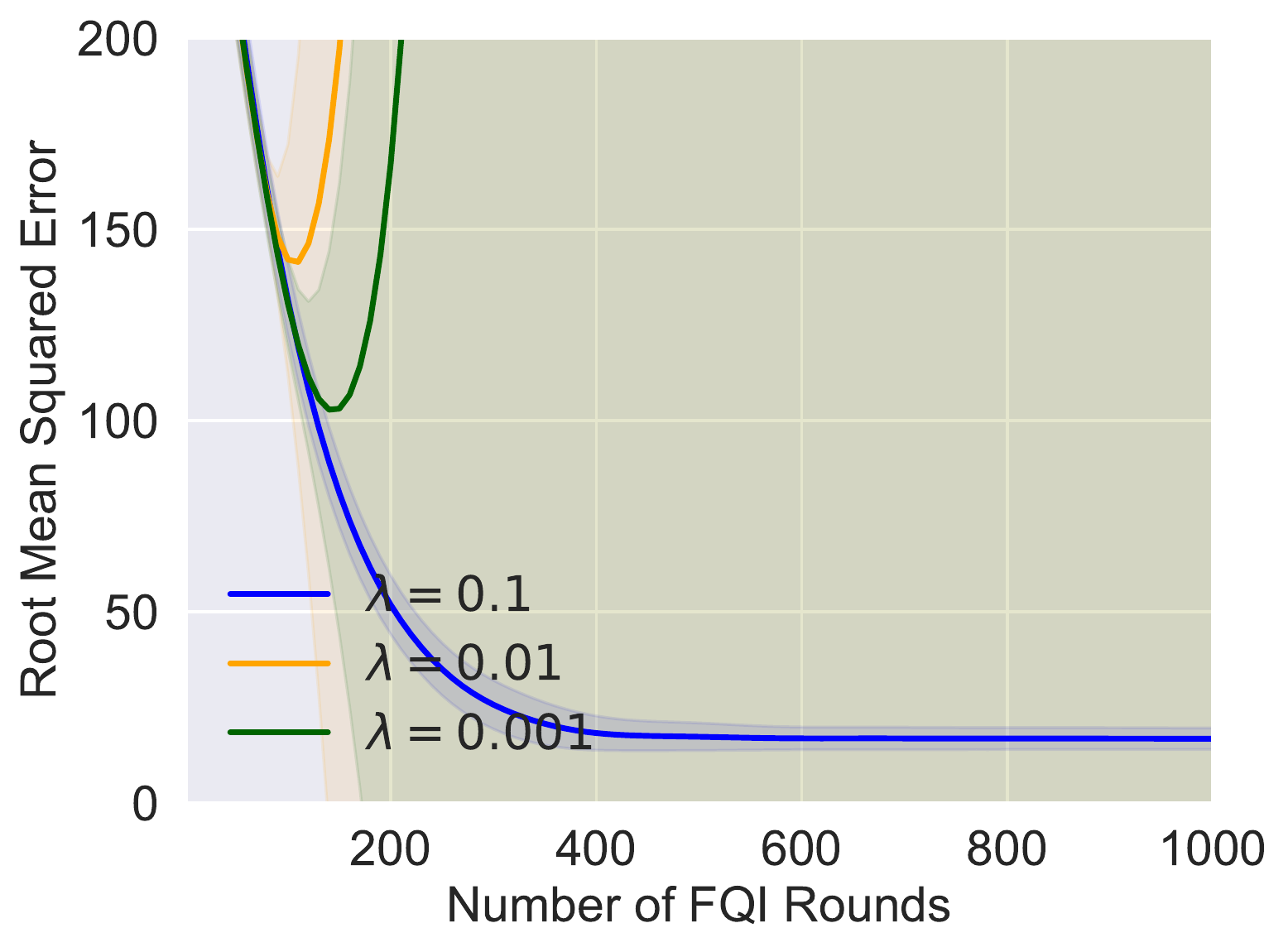}
}
\caption{Performance of FQI on Hopper-v2, with features from pre-trained neural networks, datasets induced by random policies, and different regularization parameter $\lambda$.}
\label{fig:ridge_hoppe}

\subfigure[$D^{\star}$ ]{
\includegraphics[width=0.30\textwidth]{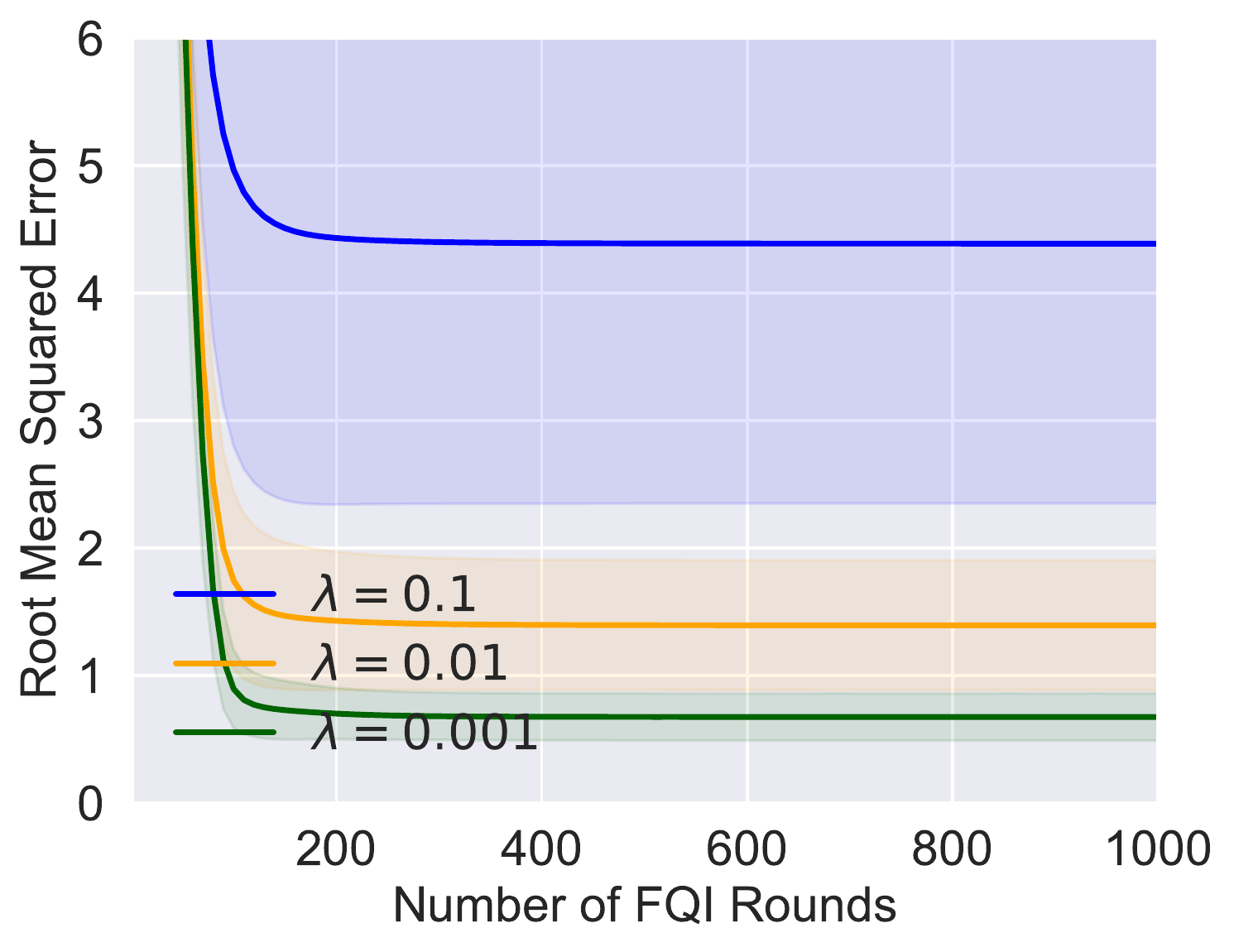}
}
\subfigure[$D^{\star}$ + 1x random]{
\includegraphics[width=0.30\textwidth]{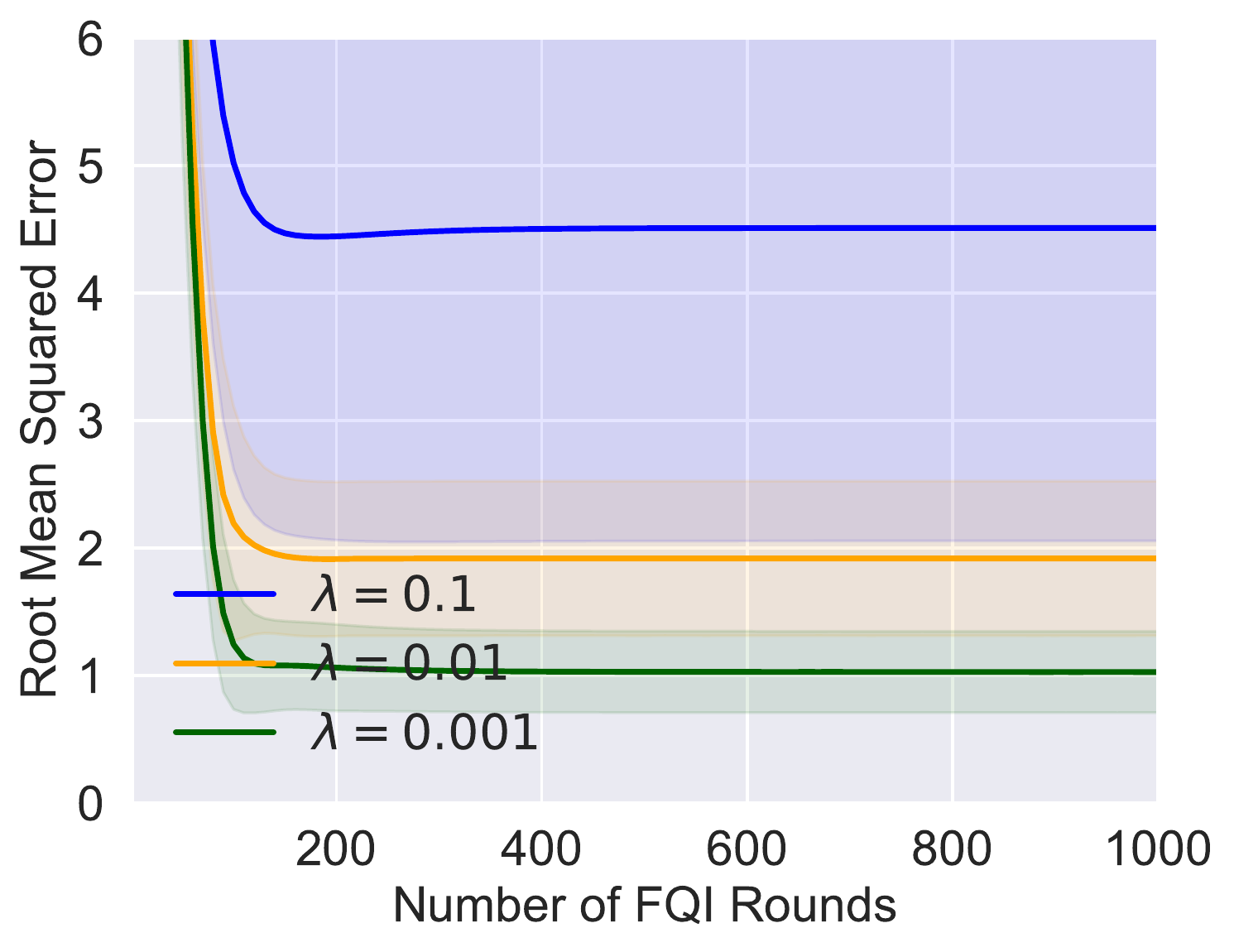}
}
\subfigure[$D^{\star}$+ 2x random]{
\includegraphics[width=0.30\textwidth]{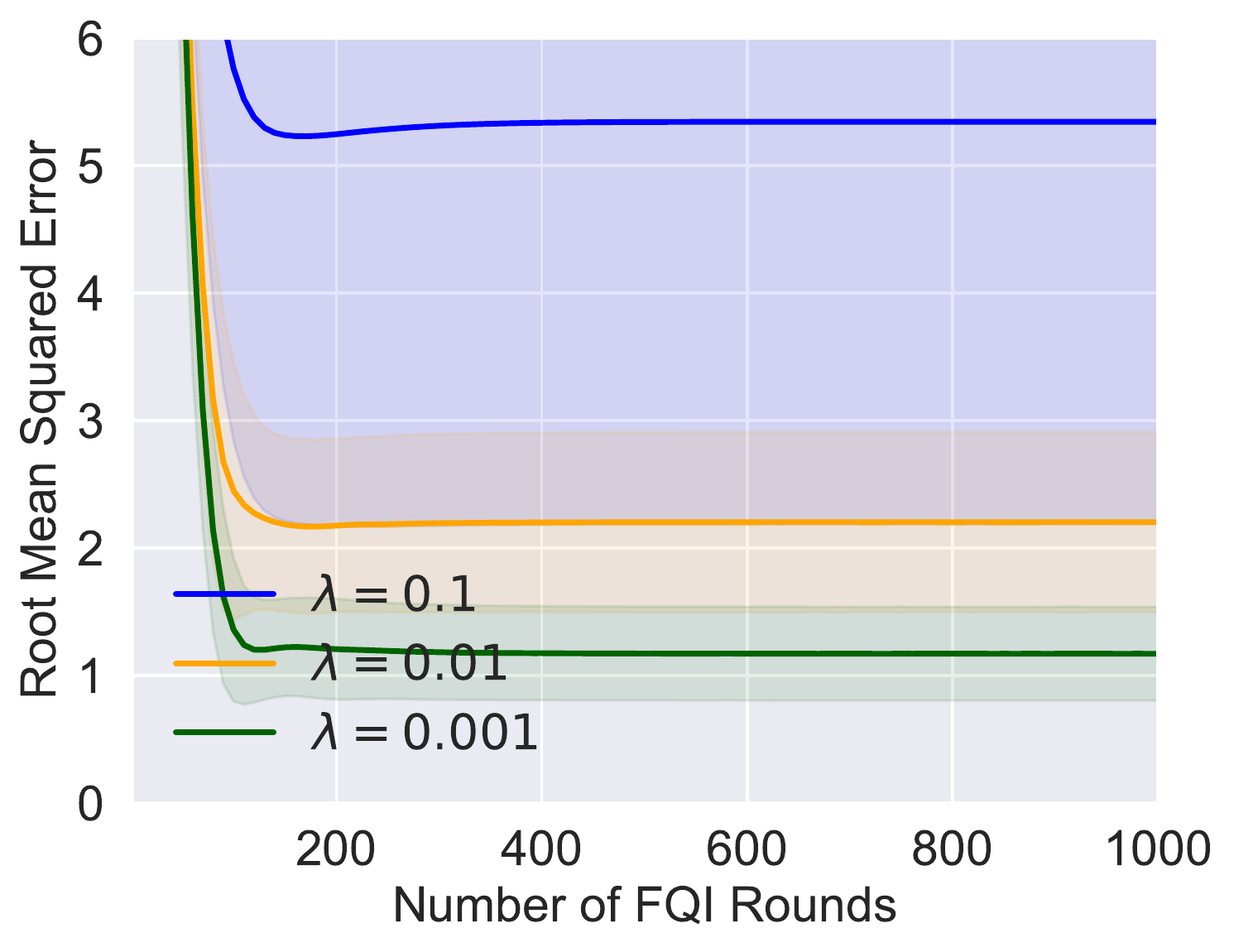}
}
\caption{Performance of FQI on MountainCar-v0, with features from pre-trained neural networks, datasets induced by random policies, and different regularization parameter $\lambda$.}
\label{fig:ridge_mountaincar}

\subfigure[$D^{\star}$ ]{
\includegraphics[width=0.30\textwidth]{fig/fig_Walker2d-v2_0_ridge}
}
\subfigure[$D^{\star}$ + 1x random]{
\includegraphics[width=0.30\textwidth]{fig/fig_Walker2d-v2_1_ridge}
}
\subfigure[$D^{\star}$+ 2x random]{
\includegraphics[width=0.30\textwidth]{fig/fig_Walker2d-v2_2_ridge}
}
\caption{Performance of FQI on Walker2d-v2, with features from pre-trained neural networks, datasets induced by random policies, and different regularization parameter $\lambda$.}
\label{fig:ridge_walker}
\end{figure}


\begin{table}
\centering
\begin{tabular}{lcccc}
\hline
Dataset & $D^{\star}$ & $D^{\star}$ + 0.5x random & $D^{\star}$ + 1x random & $D^{\star}$+ 2x random\\
\hline
Ant-v2 & $44.03 \pm 8.98$  &  $48.05 \pm 8.03$  &  $57.90 \pm 13.30$  &  $72.80 \pm 16.87$ \\

HalfCheetah-v2 & $24.86 \pm 3.39$  &  $27.54 \pm 6.63$  &  $30.14 \pm 11.60$  &  $36.66 \pm 21.32$ \\

Hopper-v2 & $2.18 \pm 1.14$  &  $9.38 \pm 3.84$  &  $13.18 \pm 2.77$  &  $16.86 \pm 2.84$ \\

Walker2d-v2 & $13.88 \pm 11.22$  &  $32.73 \pm 11.05$  &  $45.61 \pm 17.06$  &  $67.78 \pm 24.77$ \\
\hline
\end{tabular}
\caption{\emph{Performance of LSTD}. Performance of LSTD with features from pre-trained neural networks and distributions induced by random policies.
Each number of is the square root of the mean squared error of the
estimation, taking average over $5$ repetitions, $\pm$ standard
deviation. 
}
\label{table:lstd_add}
\end{table}

\end{document}